\newcommand\blfootnote[1]{%
  \begingroup
  \renewcommand\thefootnote{}\footnote{#1}%
  \addtocounter{footnote}{-1}%
  \endgroup
}
\newcommand{\A}{\mathcal{A}}
\newcommand{\B}{\mathcal{B}}
\newcommand{\E}{\mathbb{E}}
\newcommand{\bw}{\mathbf{w}}
\newcommand{\bzero}{\mathbf{0}}
\newcommand{\hw}{\hat{w}}
\newcommand{\bd}{\mathbf{v}}
\newcommand{\bb}{\mathbf{b}}
\newcommand{\bt}{\mathbf{t}}
\newcommand{\bz}{\mathbf{z}}
\newcommand{\bhw}{\hat{\mathbf{w}}}
\newcommand{\bv}{\mathbf{v}}
\newcommand{\bu}{\mathbf{u}}
\newcommand{\bx}{\mathbf{x}}
\newcommand{\be}{\mathbf{e}}
\newcommand{\bs}{\mathbf{s}}
\newcommand{\cD}{\mathcal{D}}
\newcommand{\cO}{\mathcal{O}}
\newcommand{\bB}{\mathbb{B}}
\newcommand{\N}{\mathbb{N}}
\newcommand{\R}{\mathbb{R}}
\newcommand{\cL}{\mathcal{L}}
\newcommand{\cU}{\mathcal{U}}
\newcommand{\bone}{\mathds{1}}
\newcommand{\eps}{\epsilon}
\newcommand{\teps}{\tilde{\epsilon}}
\newcommand{\td}{\tilde{d}}
\newcommand{\tS}{\tilde{S}}
\newcommand{\tbw}{\tilde{\bw}}
\newcommand{\tPi}{\tilde{\Pi}}
\newcommand{\hPi}{\hat{\Pi}}
\DeclareMathOperator{\sgn}{sgn}
\DeclareMathOperator{\err}{err}
\DeclareMathOperator{\opt}{OPT}
\DeclareMathOperator{\val}{val}
\DeclareMathOperator{\wval}{wval}
\DeclareMathOperator{\poly}{poly}
\newcommand{\1}{\mathds{1}}
\newtheorem{theorem}{Theorem}
\newtheorem{observation}[theorem]{Observation}
\newtheorem{hypothesis}[theorem]{Hypothesis}
\newtheorem{lemma}[theorem]{Lemma}
\newtheorem{definition}[theorem]{Definition}
\newtheorem{proposition}[theorem]{Proposition}
\newtheorem{fact}[theorem]{Fact}
\title{The Complexity of Adversarially Robust \\ 
Proper Learning of Halfspaces with Agnostic Noise\blfootnote{Authors are in alphabetical order.}}
\author{
  Ilias Diakonikolas\thanks{Supported by NSF Award CCF-1652862 (CAREER) and a Sloan Research Fellowship.} \\
  University of Wisconsin, Madison\\
  {\tt ilias@cs.wisc.edu} \\
  \and
  Daniel M. Kane\thanks{Supported by NSF Award CCF-1553288 (CAREER) and a Sloan Research Fellowship.} \\
  University of California, San Diego\\
  {\tt dakane@cs.ucsd.edu} \\
  \and
  Pasin Manurangsi \\
  Google Research, Mountain View\\
 {\tt pasin@google.com}
}
\begin{document}

\maketitle

\begin{abstract}
We study the computational complexity of adversarially robust proper learning of halfspaces in the distribution-independent agnostic PAC model, with a focus on $L_p$ perturbations. We give a computationally efficient learning algorithm and a nearly matching computational hardness result for this problem. An interesting implication of our findings is that the $L_{\infty}$ perturbations case
is provably computationally harder than the case $2 \leq p < \infty$.
\end{abstract}

\thispagestyle{empty}

\setcounter{page}{0}

\newpage

\section{Introduction} \label{sec:intro}

%\subsection{Background and Motivation} \label{ssec:background}
In recent years, the design of reliable machine learning systems 
for secure-critical applications, including in computer vision and natural language processing, 
has been a major goal in the field. One of the main concrete goals in this context
has been to develop classifiers that are robust to {\em adversarial examples}, i.e., small 
imperceptible perturbations to the input that can result in erroneous 
misclassification~\cite{BiggioCMNSLGR13, SzegedyZSBEGF13, GoodfellowSS14}.
This has led to an explosion of research on designing defenses against adversarial 
examples and attacks on these defenses. See, e.g.,~\cite{KM-tutorial} for a recent tutorial
on the topic. Despite significant empirical progress over the past few years, 
the broad question of designing computationally efficient classifiers that are 
provably robust to adversarial perturbations remains an outstanding theoretical challenge.

In this paper, we focus on understanding the {\em computational complexity} of adversarially robust
classification in the (distribution-independent) agnostic PAC model~\cite{Haussler:92, KSS:94}.
Specifically, we study the learnability  of {\em halfspaces} (or linear threshold functions) in this model with respect to $L_p$ perturbations. 
A halfspace is any %Boolean
function $h_{\bw}: \R^d \to \{ \pm 1\}$ of the form\footnote{The function $\sgn: \R \to \{ \pm 1\}$ is defined as $\sgn(u)=1$ if $u \geq 0$ and $\sgn(u)=-1$ otherwise.} 
$h_{\bw}(\bx) = \sgn \left(\langle \bw, \bx \rangle \right)$, where $\bw \in \R^d$ is the associated weight vector.
The problem of learning an unknown halfspace has been studied for decades --- starting 
with the Perceptron algorithm~\cite{Rosenblatt:58} --- and has arguably been one of 
the most influential problems in the development of machine learning~\cite{Vapnik:98, FreundSchapire:97}.

Before we proceed, we introduce the relevant terminology.
Let $\mathcal{C}$ be a concept class of Boolean-valued functions 
on an instance space $\mathcal{X} \subseteq \R^d$ and 
$\mathcal{H}$ be a hypothesis class on $\mathcal{X}$. The set of allowable perturbations is 
defined by a function $\mathcal{U}: \mathcal{X} \to 2^{\mathcal{X}}$. 
The robust risk of a hypothesis $h \in \mathcal{H}$ with respect to a distribution 
$\mathcal{D}$ on $\mathcal{X} \times \{ \pm 1\}$ is defined as 
$\mathcal{R}_{\mathcal{U}}(h, \mathcal{D}) = 
\Pr_{(\bx, y) \sim \mathcal{D}}[\exists z \in \mathcal{U}(\bx), h(\mathbf{z}) \neq y].$
The (adversarially robust) agnostic PAC learning problem for $\mathcal{C}$ is the following:
Given i.i.d. samples from an arbitrary distribution $\mathcal{D}$ on $\mathcal{X} \times \{ \pm 1\}$, 
the goal of the learner is to output a hypothesis $h \in \mathcal{H}$ such that with high probability it holds 
$\mathcal{R}_{\mathcal{U}}(h, \mathcal{D}) \leq \opt+\eps$,
where $\opt = \inf_{f \in \mathcal{C}} \mathcal{R}_{\mathcal{U}}(f, \mathcal{D})$ is the robust risk of the best-fitting function in $\mathcal{C}$.

Unfortunately, it follows from known hardness results that 
this formulation is computationally intractable for the class
of halfspaces $\mathcal{C} = \{ \sgn(\langle \bw, \bx \rangle), \bw \in \R^d  \}$
under $L_p$ perturbations, i.e,  for $\mathcal{U}_{p, \gamma}(\bx) = \{ \mathbf{z} \in \mathcal{X}: \|\mathbf{z}-\bx\|_p \leq \gamma \}$, for some $p \geq 2$. (The reader is referred to Appendix~\ref{app:agnostic-hard} for an explanation.)
To be able to obtain computationally efficient algorithms, we relax the above definition in two ways:
(1) We allow the hypothesis to be robust within a slightly smaller perturbation region, 
and (2) We introduce a small constant factor approximation in the error guarantee. In more detail,
for some constants $0< \nu <1$ and $\alpha>1$,
our goal is to efficiently compute a hypothesis $h$ such that with high probability
\begin{equation} \label{eqn:bicriterion-risk}
\mathcal{R}_{\mathcal{U}_{p, (1-\nu) \gamma}}(h, \mathcal{D}) \leq \alpha \cdot \opt_{p, \gamma}+\eps \;,
\end{equation}
where $\opt_{p, \gamma} = \inf_{f \in \mathcal{C}} \mathcal{R}_{\mathcal{U}_{p,\gamma}}(f, \mathcal{D})$.
(Note that for $\nu = 0$ and $\alpha = 1$, we obtain the original definition.) An interesting setting
is when $\nu$ is a small constant close to $0$, say $\nu = 0.1$, and $\alpha = 1+\delta$, where
$0< \delta<1$. In this paper, we characterize the computational complexity of this problem
with respect to {\em proper} learning algorithms, i.e., algorithms that output
a halfspace hypothesis. 

Throughout this paper, we will assume that the domain of our functions 
is bounded in the $d$-dimensional $L_p$ unit ball $\bB_p^d$. All our results immediately extend to general domains with a (necessary) dependence on the diameter of the feasible set.

%Ilias: I removed the notation about the standard basis vectors from here. Should we put it later
%in the paper before we need it.

%We use $\bB_p^d$ to denote the $d$-dimensional unit ball in the $L_p$ norm, and $\be_i$ to denote the $i$-th vector in the standard basis (i.e. the vector which one in the $i$-th coordinate and zero in the remaining coordinates). Furthermore, we extend this notation and use $\be_T$, for a set $T$ of coordinates, to denote the indicator vector for $T$, i.e. $\be_T = \sum_{i \in T} \be_i$.

A simple but crucial observation leveraged in our work is the following:
The adversarially robust learning problem of halfspaces under $L_p$ perturbations (defined above) is essentially equivalent to the classical 
problem of agnostic proper PAC learning of halfspaces with an $L_p$ margin.

Let $p \geq 2$, $q$ be the dual exponent of $p$, i.e., $1/p+1/q=1$.
The problem of agnostic proper PAC learning of halfspaces with an $L_p$ margin is the following:
The learner is given i.i.d. samples from a distribution 
$\cD$ over $\bB_p^d \times \{\pm 1\}$.
%For $\bw \in \bB_q^d$, its \emph{misclassification error} 
%(or \emph{0-1 error}) is 
%$\err_{0-1}^{\cD}(\bw) := \Pr_{(\bx, y) \sim \cD}[\sgn(\left<\bw, \bx\right>) \ne y]$. 
%Its \emph{$\gamma$-margin error} is defined as $\err_\gamma^{\cD}(\bw) := \Pr_{(\bx, y) \sim \cD}[y \left<\bw, \bx\right> < \gamma]$. We also define $\opt_\gamma^\cD := \min_{\bw \in \bB_q^d} \err_\gamma^{\cD}(\bw)$. We say that an algorithm is a proper \emph{$\alpha$-agnostic learner} for $L_p$-$\gamma$-margin halfspace if, with probability at least $1 - \tau$, it outputs a halfspace $\bw \in \bB_q^d$ with $\err_{0-1}^{\cD}(\bw) \leq \alpha \cdot \opt^{\cD}_\gamma + \eps$. Furthermore, we say that an algorithm is a proper \emph{$\nu$-robust $\alpha$-agnostic learner} for $L_p$-$\gamma$-margin halfspace if, with probability at least $1 - \tau$, it outputs a halfspace $\bw \in \bB_q^d$ with 
For $\bw \in \bB_q^d$, its \emph{$\gamma$-margin error} is defined as $\err_\gamma^{\cD}(\bw) := \Pr_{(\bx, y) \sim \cD}[\sgn(\left<\bw, \bx\right> - y \cdot \gamma) \ne y]$. We also define $\opt_\gamma^\cD := \min_{\bw \in \bB_q^d} \err_\gamma^{\cD}(\bw)$.
An algorithm is a proper \emph{$\nu$-robust $\alpha$-agnostic learner} for $L_p$-$\gamma$-margin halfspace if, with probability at least $1 - \tau$, it outputs a halfspace $\bw \in \bB_q^d$ with 
\begin{equation} \label{eqn:agnostic-margin}
\err_{(1 - \nu)\gamma}^{\cD}(\bw) \leq \alpha \cdot \opt^{\cD}_\gamma + \eps \;.
\end{equation}
(When unspecified, the failure probability $\tau$ is assumed to be 1/3. It is well-known and easy 
to see that we can always achieve arbitrarily small value of $\tau$ at the cost of $O(\log(1/\tau))$ multiplicative factor in the running time and sample complexity.)

We have the following basic observation, which implies that the learning objectives \eqref{eqn:bicriterion-risk} and \eqref{eqn:agnostic-margin} are equivalent. Throughout this paper, we will state our contributions using the margin formulation \eqref{eqn:agnostic-margin}.
\begin{fact} \label{fact:adversarial-robustness-vs-margin}
For any non-zero $\bw \in \R^d$, $\gamma \geq 0$ and $\cD$ over $\R^d \times \{\pm 1\}$, $\mathcal{R}_{\mathcal{U}_{p,\gamma}}(h_{\bw}, \mathcal{D}) = \err^{\cD}_{\gamma}(\frac{\bw}{\|\bw\|_q})$.
\end{fact}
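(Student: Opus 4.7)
The plan is to reduce the identity to a pointwise statement: for each fixed $(\bx, y) \in \R^d \times \{\pm 1\}$, I will show that the event
\[
\{\exists \bz \in \mathcal{U}_{p,\gamma}(\bx) : h_{\bw}(\bz) \neq y\}
\]
coincides with the event $\{\sgn(\langle \bw/\|\bw\|_q, \bx\rangle - y\gamma) \neq y\}$. Once pointwise equivalence is established, taking $\Pr_{(\bx,y)\sim\cD}$ of both sides yields the claim directly from the definitions of $\mathcal{R}_{\mathcal{U}_{p,\gamma}}$ and $\err^{\cD}_\gamma$.

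For the pointwise claim the main tool is Hölder's inequality, which in its tight form says that as $\bz$ varies over the closed $L_p$-ball $\{\bz : \|\bz-\bx\|_p \le \gamma\}$, the quantity $y\langle \bw, \bz\rangle = y\langle \bw, \bx\rangle + y\langle \bw, \bz-\bx\rangle$ takes every value in the compact interval $[y\langle \bw, \bx\rangle - \gamma\|\bw\|_q,\ y\langle \bw, \bx\rangle + \gamma\|\bw\|_q]$, with both endpoints attained by choosing $\bz-\bx$ to be a vector of $L_p$-norm $\gamma$ that realizes Hölder tightness against $\pm\bw$.

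I would then do a short case split on $y$. For $y = +1$: $\sgn(\langle \bw, \bz\rangle) \ne 1$ means $\langle \bw, \bz\rangle < 0$, and (since the minimum is attained) such a $\bz$ exists iff $\langle \bw, \bx\rangle - \gamma\|\bw\|_q < 0$, i.e.\ $\langle \bw/\|\bw\|_q, \bx\rangle < \gamma$, which is exactly $\sgn(\langle \bw/\|\bw\|_q, \bx\rangle - \gamma) \ne 1$. For $y = -1$: $\sgn(\langle \bw, \bz\rangle) \ne -1$ means $\langle \bw, \bz\rangle \ge 0$, and such a $\bz$ exists iff $\langle \bw, \bx\rangle + \gamma\|\bw\|_q \ge 0$, i.e.\ $\langle \bw/\|\bw\|_q, \bx\rangle + \gamma \ge 0$, which is exactly $\sgn(\langle \bw/\|\bw\|_q, \bx\rangle + \gamma) \ne -1$. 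In both subcases, the pointwise events agree.

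I do not anticipate a genuine obstacle here; the only thing to track carefully is the asymmetric boundary behavior of $\sgn$ at $0$ (since $\sgn(0) = 1$), which produces the strict-vs-non-strict asymmetry between the $y=+1$ and $y=-1$ branches above. Because the feasible perturbation set is compact and the extrema of $y\langle\bw,\bz\rangle$ are attained, this asymmetry lines up identically on both sides of the equality, so the case analysis closes without any further work.
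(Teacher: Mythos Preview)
Your proposal is correct and follows essentially the same route as the paper: reduce to a pointwise equivalence, use H\"older's inequality (together with its tightness and compactness of the perturbation ball) to pin down the range of $\langle \bw,\bz\rangle$, and then do a case split on $y$ while tracking the $\sgn(0)=1$ convention. The paper carries out the $y=-1$ case explicitly by constructing the H\"older-tight perturbation vector $\bt$ and declares the $y=+1$ case analogous; your argument packages both directions via the attained-interval observation, which is a minor stylistic difference rather than a different method.
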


%We have the following basic observation. (Its proof is given in the appendix.)
%\begin{fact}
%The learning objectives \eqref{eqn:bicriterion-risk} and \eqref{eqn:agnostic-margin} are equivalent.
%\end{fact}

\subsection{Our Contributions} \label{sec:results}

Our main positive result is a robust and agnostic proper learning algorithm for $L_p$-$\gamma$-margin halfspace with near-optimal running time:

\begin{theorem}[Robust Learning Algorithm] \label{thm:learning-algo-main}
Fix $2 \leq p < \infty$ and $0< \gamma <1$. For any $0<\nu, \delta<1$,
there is a proper $\nu$-robust $(1 + \delta)$-agnostic learner for $L_p$-$\gamma$-margin halfspace that draws $O(\frac{p}{\eps^2 \nu^2 \gamma^2})$ samples and runs in time $(1/\delta)^{O\left(\frac{p}{\nu^2 \gamma^2}\right)} \cdot \poly(d/\eps)$.

Furthermore, for $p = \infty$, there is a proper $\nu$-robust $(1 + \delta)$-agnostic learner for $L_{\infty}$-$\gamma$-margin halfspace that draws $O(\frac{\log d}{\eps^2 \nu^2 \gamma^2})$ samples and runs in time $d^{O\left(\frac{\log(1/\delta)}{\nu^2 \gamma^2}\right)} \cdot \poly(1/\eps)$.
\end{theorem}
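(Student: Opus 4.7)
The plan is to minimize a Lipschitz surrogate of the margin indicator via a sample-based empirical risk, and to solve the empirical problem approximately through a low-dimensional reduction followed by enumeration. Introduce the ramp function
\[
\phi(u,y) := \min\!\bigl(1,\,\max(0,\,(\gamma - yu)/(\nu\gamma))\bigr),
\]
which satisfies $\bone[yu \leq (1-\nu)\gamma] \leq \phi(u,y) \leq \bone[yu \leq \gamma]$ and is $(\nu\gamma)^{-1}$-Lipschitz in $u$. Hence for every $\bw$, $\err^{\cD}_{(1-\nu)\gamma}(\bw) \leq \E_{(\bx,y) \sim \cD}[\phi(\langle\bw,\bx\rangle,y)] \leq \err^{\cD}_{\gamma}(\bw)$, and it suffices to find $\bhw \in \bB_q^d$ whose expected ramp loss is at most $(1+\delta)\opt^\cD_\gamma + \eps$.

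For the sample complexity, I would invoke the Kakade--Sridharan--Tewari Rademacher bound for linear classes in $\bB_q^d$ evaluated on $\bB_p^d$-bounded data, namely $O(\sqrt{p/m})$ for $p<\infty$ and $O(\sqrt{\log d/m})$ for $p=\infty$. Composed, via Talagrand's contraction lemma, with the $(\nu\gamma)^{-1}$-Lipschitz $\phi$, these yield $\eps$-uniform convergence of the empirical ramp loss to its expectation on $\bB_q^d$ once $m = \Theta(p/(\eps^2\nu^2\gamma^2))$ (resp.\ $m = \Theta(\log d/(\eps^2\nu^2\gamma^2))$) samples are drawn. The remaining task is to approximately minimize the empirical ramp loss.

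For the approximate minimization I would employ a low-dimensional reduction plus brute-force enumeration. For $2 \leq p < \infty$, apply a random linear sketch $\Pi$ tailored to the $L_p/L_q$-geometry, of target dimension $k = O(p/(\nu^2\gamma^2))$, satisfying $|\langle \Pi\bw,\Pi\bx_i\rangle - \langle\bw,\bx_i\rangle| \leq O(\nu\gamma)$ with high probability uniformly over $\bw \in \bB_q^d$ and all samples $\bx_i$; then enumerate a $(\delta\nu\gamma)$-net of the projected feasible set, of size $(1/\delta)^{O(k)} = (1/\delta)^{O(p/(\nu^2\gamma^2))}$, evaluate the empirical ramp loss at each net point, and output the minimizer lifted back through $\Pi$. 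For $p=\infty$, substitute Maurey's empirical method on $\bB_1^d = \mathrm{conv}(\{\pm e_i\})$: approximate every $\bw \in \bB_1^d$ by a $k$-average $\tilde{\bw} = (1/k)\sum_j \bv_{i_j}$ of signed basis vectors with $k = O(\log(1/\delta)/(\nu^2\gamma^2))$, so that a Hoeffding + union bound over the $m$ samples gives $|\langle\tilde{\bw} - \bw,\bx_i\rangle| \leq O(\nu\gamma)$, and enumerate the $d^{O(k)}$ choices of signs and indices. In either case, Lipschitzness of $\phi$ converts the $O(\nu\gamma)$ inner-product slack—after a suitable finer $\delta$-scale discretization of the net or of the Maurey parameters—into $O(\delta)$ error on the loss, producing the target $(1+\delta)\opt^\cD_\gamma + \eps$ guarantee.

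The principal obstacle is the sketch in the $2 \leq p < \infty$ case: vanilla Gaussian JL is unsuitable, since $\bx \in \bB_p^d$ may have $L_2$-norm as large as $d^{1/2-1/p}$, making $\mathrm{Var}(\langle G\bw,G\bx\rangle)$ dimension-dependent. The fix is a sketch adapted to $L_p$-geometry—for instance a subgaussian projection preceded by coordinate-level truncation, or a $p$-stable-based construction combined with Bernoulli coordinate sampling—whose target dimension matches the Rademacher-complexity scale $O(p/(\nu^2\gamma^2))$. This is the step where the $p$ in the runtime exponent arises, and where the paper's matching hardness is expected to bite; the secondary subtlety is engineering both the Maurey step and the net refinement to carry only a $\log(1/\delta)$ (rather than $1/\delta$) factor into the exponent.
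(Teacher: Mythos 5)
Your approach diverges fundamentally from the paper's, and it contains a genuine gap that prevents it from yielding the stated multiplicative guarantee. The paper proves the theorem via a reduction from \emph{online mistake-bound learning}: it runs Gentile's $L_p$ online learner (mistake bound $M = O((p-1)/(\nu^2\gamma^2))$ for finite $p$, $M = O(\log d/(\nu^2\gamma^2))$ for $p=\infty$) on a growing sample set, at each round adding a uniformly random misclassified example. The key observation is that when the current hypothesis has empirical error exceeding $(1+\delta)\opt_\gamma^S$, a random misclassified point avoids the optimal classifier's error set $S_{\text{bad}}$ with probability at least $\delta/(1+\delta)$; the mistake bound guarantees at most $M$ such guesses are ever needed. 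Repeating $2^{O(M\log(1/\delta))}$ times boosts the success probability, giving exactly the claimed runtime and the \emph{multiplicative} factor $(1+\delta)$.

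Your ramp-loss/dimension-reduction plan cannot reproduce the multiplicative $(1+\delta)\opt_\gamma^{\cD}$ factor with the claimed runtime. After the Maurey (or sketch) approximation, you control the inner-product perturbation on each sample to within $O(\nu\gamma)$ only with probability $1 - \delta$ per sample; by Lipschitzness of $\phi$ this translates into an \emph{additive} $O(\delta)$ increase in loss, so the end guarantee is $\opt_\gamma^{\cD} + O(\delta) + \eps$, which is strictly weaker than $(1+\delta)\opt_\gamma^{\cD} + \eps$ whenever $\opt_\gamma^{\cD} = o(1)$. To convert the additive slack into a multiplicative one you would need the per-sample Maurey failure probability to be $O(\delta\cdot\opt_\gamma^{\cD})$, hence $O(\delta\eps)$ in the worst case, which forces $k = O(\log(1/(\delta\eps))/(\nu^2\gamma^2))$ and pushes a $1/\eps$ dependence into the exponent of the runtime, rather than the promised $\poly(1/\eps)$ factor outside. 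A secondary issue is the bookkeeping of constants in the exponent: a union bound over the $m$ samples demands $k = \Omega(\log m/(\nu^2\gamma^2))$, and a $(\delta\nu\gamma)$-net of a $k$-dimensional $L_q$-ball has size $(1/(\delta\nu\gamma))^{O(k)}$, introducing a $(1/(\nu\gamma))^{O(k)}$ blowup that the theorem's $\poly(d/\eps)$ factor does not absorb. Finally, for $2\leq p<\infty$ the oblivious $L_p$-geometry sketch you posit is not an established primitive; you flag this yourself, but it is a substantive missing piece, whereas the paper sidesteps dimension reduction entirely by invoking Gentile's $p$-norm Perceptron. Your Rademacher-complexity-plus-contraction treatment of sample complexity does mirror the paper's use of standard margin generalization bounds, so that portion of the argument is sound.
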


%Sample complexity for constant $p$: 
%$O(\frac{p}{\eps^2 \nu^2 \gamma^2})$
%and for $p = \infty$ :
%$O(\frac{\log d}{\eps^2 \nu^2 \gamma^2})$

To interpret the running time of our algorithm, we consider the setting $\delta = \nu = 0.1$. We note two different regimes. If $p \geq 2$ is a fixed constant, then our algorithm runs in time $2^{O(1/\gamma^2)} \poly(d/\eps)$. On the other hand, for $p = \infty$, we obtain a runtime of $d^{O(1/\gamma^2)} \poly(1/\eps)$.
That is, the $L_{\infty}$ margin case (which corresponds to adversarial learning with $L_{\infty}$ perturbations)
appears to be computationally the hardest. 
As we show in Theorem~\ref{thm:running-time-lower-bound}, this fact is inherent for proper learners.

Our algorithm establishing Theorem~\ref{thm:learning-algo-main} follows via a simple and unified
approach, employing a reduction from online (mistake bound) learning
~\cite{Lit87}. Specifically, we show that any computationally efficient $L_p$ online learner for halfspaces with margin guarantees and mistake bound $M$ can be used in a black-box manner to obtain an algorithm for our problem with runtime roughly $\poly(d/\eps) (1/\delta)^{M}$. Theorem~\ref{thm:learning-algo-main} then follows by applying known results from the online learning literature~\cite{Gentile01}.

For the special case of $p=2$ (and $\nu = 0.1$), recent work~\cite{DiakonikolasKM19} gave a sophisticated algorithm for our problem with running time $\poly(d/\eps) 2^{\tilde{O}(1/(\delta \gamma^2))}$. We note that our algorithm has significantly better dependence on the parameter $\delta$ (quantifying the approximation ratio), and better dependence on $1/\gamma$. Importantly, our algorithm is much simpler and immediately generalizes to all $L_p$ norms.

Perhaps surprisingly, the running time of our algorithm is nearly the best possible for proper learning. For constant $p \geq 2$, this follows %directly 
from the hardness result of~\cite{DiakonikolasKM19}. Furthermore, we prove a tight running time lower bound 
for robust $L_{\infty}$-$\gamma$-margin proper learning of halfspaces. 
%For brevity, we will sometimes drop the $L_1L_{\infty}$ from the problem name.
Roughly speaking, we show that for some sufficiently small constant $\nu > 0$, one cannot hope to significantly speed-up our algorithm for $\nu$-robust $L_{\infty}$-$\gamma$-margin learning of halfspaces. Our computational hardness result is formally stated below.

\begin{theorem}[Tight Running Time Lower Bound] \label{thm:running-time-lower-bound}
There exists a constant $\nu > 0$ such that, assuming the (randomized) Gap Exponential Time Hypothesis%(Hypothesis~\ref{hyp:gap-eth})
, 
there is no proper $\nu$-robust 1.5-agnostic learner for $L_{\infty}$-$\gamma$-margin halfspace that runs in time $f(1/\gamma) \cdot d^{o(1 / \gamma^2)} \poly(1/\eps)$ for any function $f$. 
\end{theorem}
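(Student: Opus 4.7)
The plan is to establish the lower bound by reduction from a parameterized problem that is hard to approximate under randomized Gap-ETH with a lower bound of the form $f(k) \cdot n^{\Omega(k)}$. A natural starting point is a gapped version of Densest $k$-Subgraph or Max $k$-Coverage, both of which are known (under randomized Gap-ETH) to rule out $f(k) \cdot n^{o(k)}$ time constant-factor approximations. I would tune the reduction so that $k = \Theta(1/\gamma^2)$; the $n^{o(k)}$ lower bound then becomes $d^{o(1/\gamma^2)}$, and the constant-factor combinatorial inapproximability is calibrated to yield a $1.5$-factor in the margin error, as required. Any hypothetical learner running in time $f(1/\gamma) \cdot d^{o(1/\gamma^2)} \poly(1/\eps)$ would then solve the source problem in $f(k) \cdot n^{o(k)}$ time, contradicting randomized Gap-ETH.

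Given a gapped instance of the source problem on a universe of size $n$ with parameter $k$, I would produce an $L_\infty$-$\gamma$-margin halfspace instance on $d = \poly(n)$ dimensions: each universe element becomes a coordinate, and each combinatorial ``test'' (a covered element, an edge, or a gadget constraint) becomes a labeled example $\bx \in \bB_\infty^d$, with the learning distribution $\cD$ supported uniformly on this polynomial-size set. The intended halfspace associated with a $k$-subset $T$ is the renormalized indicator $\bw_T := \frac{1}{k}\bone_T \in \bB_1^d$; the examples should be scaled so that $\langle \bw_T, \bx\rangle$ takes a $\Theta(1/\sqrt{k})$-sized ``signal'' value when the test is satisfied by $T$ and is negligibly small (or oppositely signed) otherwise. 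Then $\bw_T$ achieves margin $\Theta(\gamma) = \Theta(1/\sqrt{k})$ exactly on the satisfied tests, and the YES case delivers a halfspace with small $\err_\gamma^\cD$, while the NO case must force large $\err_{(1-\nu)\gamma}^\cD$ for \emph{every} $\bw \in \bB_1^d$.

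The main step is soundness: ruling out all $\bw \in \bB_1^d$ rather than only the $k$-sparse ``intended'' ones. The technique I would use is probabilistic rounding: view $|\bw|/\|\bw\|_1$ as a distribution on coordinates, sample $k$ indices i.i.d., and produce a sparse surrogate $\tilde\bw$ supported on $k$ coordinates. A Hoeffding-type bound would show that on any fixed example $\bx$, $\langle \tilde\bw,\bx\rangle$ concentrates around $\langle \bw,\bx\rangle$ up to additive error $O(1/\sqrt{k}) = O(\gamma)$; choosing $\nu$ a sufficiently small constant absorbs this deviation into the $\nu\gamma$ slack. Thus any $\bw \in \bB_1^d$ attaining small $(1-\nu)\gamma$-margin error would yield, on average over the rounding, a sparse $\tilde\bw$, i.e., a combinatorial $k$-solution whose objective violates the NO promise.

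The main obstacle is making this rounding argument quantitatively tight. The rounding error in the inner product must fit inside the slack $\nu\gamma$, and one must design auxiliary ``anti-spread'' examples that penalize weight vectors whose mass is not concentrated on roughly $k = \Theta(1/\gamma^2)$ coordinates --- otherwise a diffuse $\bw$ might cheaply classify all examples via low-magnitude but consistent correlations. The other delicate calibration is translating the source problem's constant approximation gap into an error gap between the YES case ($\opt_\gamma^\cD$ small) and the NO case ($\err_{(1-\nu)\gamma}^\cD \geq 1.5 \cdot \opt_\gamma^\cD + \eps$); this typically requires either a starting problem with a sharp enough gap or a gap-amplification gadget layered on top. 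Once these pieces fit together, combining with the Gap-ETH $n^{\Omega(k)}$ lower bound under the identification $k = \Theta(1/\gamma^2)$ yields the stated runtime lower bound.
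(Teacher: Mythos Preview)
Your high-level architecture is right: start from a parameterized problem with an $f(k)\cdot n^{o(k)}$ Gap-ETH lower bound, set $k=\Theta(1/\gamma^2)$, and design a distribution so that the completeness solution is the normalized indicator of a $k$-set. The paper does exactly this, reducing from Label Cover (Theorem~\ref{thm:label-cover-hardness}) rather than Densest $k$-Subgraph or Max $k$-Coverage, but that choice of source problem is not the issue.

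The genuine gap is the sentence ``the examples should be scaled so that $\langle \bw_T,\bx\rangle$ takes a $\Theta(1/\sqrt{k})$-sized signal value when the test is satisfied.'' With $\bw_T=\frac{1}{k}\bone_T$ and $\bx\in\bB_\infty^d$, any example that encodes a \emph{local} combinatorial test (an edge, a covered element, a gadget touching $O(1)$ coordinates of $T$) yields $\langle\bw_T,\bx\rangle=O(1/k)$, not $\Theta(1/\sqrt{k})$; and scaling $\bx$ is forbidden by the $L_\infty$ constraint. The paper discusses precisely this obstruction in Section~\ref{sec:informal-overview}: all previous halfspace reductions are local in this sense, and for that reason they cannot be pushed beyond margin $O(1/k)$. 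Your plan, as written, is a local reduction and hits the same wall. The ``anti-spread'' gadgets and the Hoeffding-rounding soundness argument are reasonable auxiliary ideas, but they do not address the margin shortfall.

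The paper's fix is to abandon local tests entirely. For each block $V_j$ of the decomposable Label Cover instance it draws a uniformly random $\bs\in\{\pm 1\}^{V\times\Sigma_V}$ and outputs the sample $\bs\Pi^j$ (Step~\ref{sample:label-cover}). The point is that $\bw^*(\Pi^j)^T$ has $k/\Delta$ nonzero entries of size $\Delta/k$ in the YES case, so $\langle \bw^*,\bs\Pi^j\rangle$ is $\frac{\Delta}{k}$ times a sum of $k/\Delta$ Rademacher variables; by anti-concentration (Lemma~\ref{lem:anti-concen}) this exceeds $\Theta(\sqrt{\Delta/k})=\Theta(1/\sqrt{k})$ with constant probability. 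In the NO case the projected vector has $k$ nonzeros of size $1/k$, and the same variance calculation gives only $\Theta(1/\sqrt{k\Delta})$, so taking $\Delta$ a large constant separates the two margins. Soundness against arbitrary $\bw\in\bB_1^d$ is then handled not by your sampling-and-rounding idea but by bounding $\|\bw(\Pi^j)^T\|_2^2$ (Lemma~\ref{lem:variance-preserved}), expanding it into diagonal and cross terms, and reading off a randomized Label Cover labeling from the cross terms. The several extra sample types (Steps~\ref{sample:mass-lower},~\ref{sample:exceed-l2-norm},~\ref{sample:neg-l2-norm}) exist specifically to make that $L_2$ bound go through; they play the role of your ``anti-spread'' examples but are tuned to this variance argument rather than to a rounding one.
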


As indicated above, our running time lower bound is based on the so-called Gap Exponential Time Hypothesis (Gap-ETH), 
which roughly states that no subexponential time algorithm can approximate 3SAT to within $(1 - \eps)$ factor, 
for some constant $\eps > 0$. Since we will not be dealing with Gap-ETH directly here, 
we defer the formal treatment of the hypothesis and discussions on its application to Section~\ref{subsec:eth}.

We remark that the constant $1.5$ in our theorem is insignificant. 
We can increase this ``gap'' to any constant less than 2. 
We use the value $1.5$ to avoid introducing an additional variable. 
%Nevertheless, our reduction cannot produce a gap of more than 2. 
%It remains an interesting open question to increase this gap. 
Another remark is that Theorem~\ref{thm:running-time-lower-bound} only applies for a small constant $\nu > 0$. 
This leaves the possibility of achieving, e.g., a faster 0.9-robust $L_{\infty}$-$\gamma$-margin learner for halfspaces, 
as an interesting open problem. %We discuss these questions in more detail in Section~\ref{sec:open-q}.

\subsection{Related Work}
A sequence of recent works~\cite{CullinaBM18, SchmidtSTTM18, BubeckLPR19, MontasserHS19}
has studied the sample complexity of adversarially robust PAC learning for general concept
classes of bounded VC dimension and for halfspaces in particular. 
\cite{MontasserHS19} established an upper bound on the sample complexity of PAC learning  any concept class with finite VC dimension. 
A common implication of the aforementioned works is that, for some concept classes, 
the sample complexity of adversarially robust PAC learning 
is higher than the sample complexity of (standard) PAC learning.
For the class of halfspaces, which is the focus of the current paper,  
the sample complexity of adversarially robust agnostic PAC learning was shown to be 
essentially the same as that of (standard) agnostic PAC learning
~\cite{CullinaBM18, MontasserHS19}.

Turning to computational aspects,~\cite{BubeckLPR19, DegwekarNV19} showed that there exist classification tasks
that are efficiently learnable in the standard PAC model, but are computationally hard in the adversarially robust
setting (under cryptographic assumptions). Notably, the classification problems shown hard are 
artificial, in the sense that they do not correspond to natural concept classes.
\cite{AwasthiDV19} shows that adversarially robust proper learning of degree-$2$ polynomial threshold
functions is computationally hard, even in the realizable setting. On the positive side, \cite{AwasthiDV19} gives a polynomial-time
algorithm for adversarially robust learning of halfspaces under $L_{\infty}$ perturbations, again in the realizable setting.
More recently,~\cite{MonGDS20} generalized this upper bound to a broad class of perturbations, including $L_p$ perturbations. Moreover,~\cite{MonGDS20} gave an efficient algorithm for learning halfspaces with random 
classification noise~\cite{AL88}. We note that all these algorithms are proper.

The problem of agnostically learning halfspaces with a margin
has been studied extensively. A number of prior works~\cite{BenDavidS00, SSS09, SSS10, LS:11malicious, BirnbaumS12, DiakonikolasKM19} studied the case of $L_2$ margin and gave a range of time-accuracy tradeoffs for the problem.
The most closely related prior work is the recent work~\cite{DiakonikolasKM19}, which gave a proper
$\nu$-robust $\alpha$-agnostic learning for $L_2$-$\gamma$-margin halfspace with near-optimal running time when $\alpha, \nu$ are universal constants, and
a nearly matching computational hardness result. The algorithm of the current paper 
broadly generalizes, simplifies, and improves the algorithm of~\cite{DiakonikolasKM19}.

\subsection{Organization}
We describe our algorithm and prove Theorem~\ref{thm:learning-algo-main} in Section~\ref{sec:algo}. In Section~\ref{sec:prelim}, we provide further preliminaries needed for our lower bound proof. We then prove our main hardness result (Theorem~\ref{thm:running-time-lower-bound}) in Section~\ref{sec:hardness}. Finally, we conclude with open questions in Section~\ref{sec:open-q}.

\section{Upper Bound: From Online to Adversarially Robust Agnostic Learning}
\label{sec:algo}
In this section, we provide a generic method that turns an online (mistake bound) learning algorithm for halfspaces into an adversarially robust agnostic algorithm, which is then used to prove Theorem~\ref{thm:learning-algo-main}. 

Recall that online learning~\cite{Lit87} proceeds 
in a sequence of rounds. In each round, the algorithm is given an example point, 
produces a binary prediction on this point, and receives feedback on its prediction 
(after which it is allowed to update its hypothesis). The mistake bound of an online learner 
is the maximum number of mistakes (i.e., incorrect predictions) it can make over 
all possible sequences of examples.

We start by defining the notion of online learning with a margin gap in the context of halfspaces:

\begin{definition}
An online learner $\mathcal{A}$ for the class of halfspaces is called an $L_p$ online
learner with mistake bound $M$ and $(\gamma,\gamma’)$ margin gap
if it satisfies the following: In each round, $\mathcal{A}$ returns a vector $\bw \in \bB^d_q$. Moreover, for any sequence of labeled examples $(\bx_i, y_i)$ such that there exists $\bw^\ast \in \bB^d_q$ with $\sgn(\left<\bw^\ast, \bx_i\right> - y_i \gamma) = y_i$ for all $i$, there are at most $M$ values of $t$ such that $\sgn(\left<\bw_t, \bx_t\right> - y_t \gamma’) \ne y_t$, where
$\bw_t = \A((\bx_1, y_1),\ldots,(\bx_{t-1},y_{t-1}))$.
\end{definition}

The $L_p$ online learning problem of halfspaces has been studied extensively in the literature, see, e.g.,~\cite{Lit87, GroveLS01, Gentile:01, Gentile03a, BalcanB14}. We will use a result of~\cite{Gentile01}, which gives a polynomial time $L_p$ online learner with margin gap $(\gamma,(1-\nu)\gamma)$ and mistake bound $O((p-1)/\nu^2 \gamma^2)$.

We are now ready to state our generic proposition that translates an online algorithm with a given mistake bound into an agnostic learning algorithm. We will use the following notation:
For $S \subseteq \bB_p^d \times \{\pm 1\}$, we will use $S$ instead of $\cD$ to denote the empirical error on the uniform distribution over $S$. In particular, we denote $\err_\gamma^S(\bw) := \frac{1}{|S|} \cdot |\{(\bx, y) \in S \mid \sgn(\left<\bw, \bx\right> - y \gamma) \ne y\}|$.

The main result of this section is the following proposition.
While we state our proposition for the empirical error, it is simple to convert it into a generalization bound as we will show later in the proof of Theorem~\ref{thm:learning-algo-main}.

\begin{proposition} \label{prop:empirical-learner}
Assume that there is a polynomial time $L_p$ online learner $\A$ 
for halfspaces with a $(\gamma,\gamma’)$ margin gap and mistake bound of $M$. Then there exists an algorithm that given a multiset of labeled examples $S \subseteq \bB_p^d\times \{\pm 1\}$ and $\delta \in (0, 1)$, runs in $\poly(|S|d) \cdot 2^{O(M\log(1/\delta))}$ time and with probability $9/10$ returns $\bw \in \bB^d_q$ such that $\err_{\gamma’}^S(\bw) \leq (1 + \delta) \cdot \opt^S_{\gamma}$.
\end{proposition}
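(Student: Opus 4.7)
The plan is to use the online learner $\A$ as a black box, running many independent ``trials'' in each of which we feed $\A$ a carefully chosen sequence of at most $M$ examples drawn from $S$. The guiding density observation is the following: if the current hypothesis $\bw_t$ satisfies $\err_{\gamma'}^S(\bw_t) > (1+\delta) \opt_\gamma^S$, and $E_t \subseteq S$ denotes its set of $\gamma'$-margin mistakes, then a union-bound against any fixed optimum $\bw^* \in \bB_q^d$ (whose ``good set'' $G$ of $\gamma$-margin-correct examples has $|G| \geq (1-\opt_\gamma^S)|S|$) yields $|G \cap E_t|/|E_t| \geq 1 - \opt_\gamma^S|S|/|E_t| > \delta/(1+\delta) \geq \delta/2$. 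Thus a uniformly random draw from $E_t$ is ``good'' with probability at least $\delta/2$.

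Given this, the algorithm is natural: repeat $N = O((2/\delta)^M)$ independent trials; in each trial, for $t = 0,\dots,M-1$, compute $\bw_t = \A((\bx_1,y_1),\dots,(\bx_t,y_t))$ and $E_t$, and pick $(\bx_{t+1},y_{t+1})$ uniformly from $E_t$ (halting and returning $\bw_t$ immediately if $E_t$ is empty). Across all trials and all $t$, output the hypothesis minimizing $\err_{\gamma'}^S$. Call a trial \emph{successful} if every pick is good; conditioning step-by-step on the bad event $\err_{\gamma'}^S(\bw_t) > (1+\delta)\opt_\gamma^S$ (else the trial has already produced an acceptable answer), the per-step good-pick probability is $\geq \delta/2$, so a single trial is successful with probability $\geq (\delta/2)^M$. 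Standard amplification gives at least one successful trial with probability $\geq 9/10$, and the total running time is $N \cdot M \cdot \poly(|S|d) = \poly(|S|d) \cdot 2^{O(M\log(1/\delta))}$, as required.

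The core correctness claim, which is the only part that uses the mistake-bound hypothesis on $\A$, is that every successful trial contains some $\bw_t$ with $\err_{\gamma'}^S(\bw_t) \leq (1+\delta)\opt_\gamma^S$. Either this is witnessed at some intermediate step, or else at every step the pick lies in $G \cap E_t$, meaning the full fed sequence of $M$ examples lies in $G$ and each triggered a $\gamma'$-margin mistake. Since $\bw^*$ is a $\gamma$-margin witness for that entire sequence, the mistake bound of $\A$ forbids an $(M+1)$-st good mistake; hence $E_M \cap G = \emptyset$, which forces $|E_M| \leq |S \setminus G| \leq \opt_\gamma^S \cdot |S|$ and therefore $\err_{\gamma'}^S(\bw_M) \leq \opt_\gamma^S$. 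The main delicate point is exactly this case analysis: the $(1+\delta)$ slack in the target error must be threaded through both the density bound (to ensure a $\delta/2$ good-pick probability) and the final pigeonhole on $E_M$, and the algorithm must avoid needing to know $\opt_\gamma^S$ in advance, which it does by simply tracking the best $\bw_t$ it ever sees.
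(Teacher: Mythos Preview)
Your proof is correct and follows essentially the same approach as the paper (feed the online learner uniformly random misclassified examples, use the $\delta/2$ density bound, and invoke the mistake bound after $M$ good picks); the only cosmetic difference is that the paper's inner routine randomly halts early with probability $1/2$ at each step, whereas you simply track the best $\bw_t$ ever produced. One small wording quibble: the $(\delta/2)^M$ lower bound you derive is really on the event ``some intermediate $\bw_t$ is already acceptable \emph{or} every pick lands in $G$,'' not literally on ``every pick lands in $G$'' (indeed, if $\bw_0$ already achieves error $\leq \opt_\gamma^S$ then $E_0$ could be entirely outside $G$), but your subsequent case analysis handles both branches correctly, so the argument goes through.
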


Notice that our algorithm runs in time $\poly(|S|d) \cdot 2^{O(M\log(1/\delta))}$ and has success probability $9/10$. 
It is more convenient to describe a version of our algorithm that runs in 
$\poly(|S|d)$ time, but has small success probability of $2^{-O(M\log(1/\delta))}$, as encapsulated by the following lemma.

\begin{lemma} \label{lem:empirical-learner-low-prob}
Assume that there is a polynomial time $L_p$ online learner $\A$ 
for halfspaces with a $(\gamma,\gamma’)$ margin gap and mistake bound of $M$. Then there exists an algorithm that given a multiset of labeled examples $S \subseteq \bB_p^d\times \{\pm 1\}$ and $\delta \in (0, 1)$, runs in $\poly(|S|d)$ time and with probability $2^{-O(M\log(1/\delta))}$ returns $\bw \in \bB^d_q$ such that $\err_{\gamma’}^S(\bw) \leq (1 + \delta) \cdot \opt^S_{\gamma}$.
\end{lemma}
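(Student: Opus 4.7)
The plan is to give a simple iterative randomized algorithm that at each step locates a $\gamma'$-margin mistake of the current online-learner hypothesis and feeds one uniformly random such mistake to $\A$, hoping always to hit an example correctly classified by some optimal $\gamma$-margin classifier $\bw^\ast$. Running for at most $M$ rounds and outputting the best iterate encountered, the overall success probability will factorize as $M$ independent sampling events each succeeding with probability at least $\delta/(1+\delta)$, a bound independent of $|S|$ and $\opt^S_\gamma$.

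Concretely, fix any optimal $\bw^\ast \in \bB^d_q$ with $\err^S_\gamma(\bw^\ast) = \opt^S_\gamma$ and let $G^\ast = \{(\bx,y) \in S : \sgn(\langle \bw^\ast, \bx\rangle - y\gamma) = y\}$, $E^\ast = S \setminus G^\ast$, so $|E^\ast| = \opt^S_\gamma |S|$. Initialize $\bw_1 = \A(\emptyset)$. For $k = 1, \ldots, M$, compute $M_k = \{(\bx, y) \in S : \sgn(\langle \bw_k, \bx\rangle - y\gamma') \neq y\}$; if $M_k = \emptyset$, halt and output $\bw_k$; otherwise draw $(\bx^{(k)}, y^{(k)})$ uniformly from $M_k$, set $\bw_{k+1} = \A((\bx^{(1)}, y^{(1)}), \ldots, (\bx^{(k)}, y^{(k)}))$, and continue. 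At the end return $\arg\min_{k \in \{1, \ldots, M+1\}} \err^S_{\gamma'}(\bw_k)$. Each iteration is a single scan of $S$ plus one call to $\A$, so the total running time is $\poly(|S|d)$.

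For the analysis, call iteration $k$ \emph{bad} if $\err^S_{\gamma'}(\bw_k) > (1+\delta) \opt^S_\gamma$. For any bad $k$, $|M_k| > (1+\delta)\opt^S_\gamma |S|$, and since $|G^\ast \cap M_k| \geq |M_k| - |E^\ast|$, the uniform sample from $M_k$ lands in $G^\ast$ with probability at least $1 - |E^\ast|/|M_k| > \delta/(1+\delta)$. Let $\mathcal{E}$ be the event that this succeeds at every bad iteration; since there are at most $M$ bad iterations, $\Pr[\mathcal{E}] \geq (\delta/(1+\delta))^M = 2^{-O(M\log(1/\delta))}$. Under $\mathcal{E}$, if any iteration is non-bad we are already done since we output the $\arg\min$; so assume all $M$ iterations are bad, whence the fed stream $(\bx^{(1)}, y^{(1)}), \ldots, (\bx^{(M)}, y^{(M)})$ lies entirely in $G^\ast$ and is therefore consistent with the $\gamma$-margin classifier $\bw^\ast$. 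The key final step is to argue $\bw_{M+1}$ makes no $\gamma'$-margin mistake on any $(\bx, y) \in G^\ast$: otherwise, appending $(\bx, y)$ to the fed stream would still be $G^\ast$-consistent and would force $\A$ to commit $M+1$ $\gamma'$-margin mistakes (the first $M$ by our sampling rule, the last on $(\bx, y)$), contradicting the mistake bound $M$. Hence $\err^S_{\gamma'}(\bw_{M+1}) \leq |E^\ast|/|S| = \opt^S_\gamma$, and the algorithm returns such a good hypothesis. The main delicate point is to define $\mathcal{E}$ so that it conditions only on bad iterations, which is precisely what makes the probability lower bound independent of $|S|$ and $\opt^S_\gamma$.
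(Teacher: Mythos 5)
Your proof is correct and follows essentially the same strategy as the paper's: iteratively feed uniformly random $\gamma'$-margin mistakes of the current hypothesis to the online learner, lower-bound by $\delta/(1+\delta)$ the conditional probability that each draw at a ``bad'' iteration avoids $\bw^\ast$'s error set, and invoke the mistake bound on the resulting $G^\ast$-consistent stream. The only difference is cosmetic: the paper's algorithm returns its current hypothesis with probability $1/2$ at each iteration, whereas yours runs all $M$ rounds and returns the iterate with minimum empirical $\gamma'$-error, which saves a $2^{-(M+1)}$ factor and slightly streamlines the conditional bookkeeping that you rightly flag as the delicate point.
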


Before proving Lemma~\ref{lem:empirical-learner-low-prob}, notice that Proposition~\ref{prop:empirical-learner} now follows by running the algorithm from Lemma~\ref{lem:empirical-learner-low-prob} independently $2^{O(M\log(1/\delta))}$ times and returning the $\bw$ with minimum $\err^S_{\gamma'}(\bw)$. Since each iteration has a $2^{-O(M\log(1/\delta))}$ probability of returning a $\bw$ with $\err^S_{\gamma'}(\bw) \leq (1 + \delta) \cdot \opt^S_{\gamma}$, with $90\%$ probability at least one of our runs finds a $\bw$ that satisfies this.

\begin{proof}[Proof of Lemma~\ref{lem:empirical-learner-low-prob}]
Let $\bw^* \in \bB_q^d$ denote an ``optimal'' halfspace with $\err^S_{\gamma}(\bw^*) = \opt^S_{\gamma}$.

The basic idea of the algorithm is to repeatedly run $\A$ on larger and larger subsets of samples each time adding one additional sample in $S$ that the current hypothesis gets wrong. The one worry here is that some of the points in $S$ might be errors, inconsistent with the true classifier $\bw^\ast$, and feeding them to our online learner will lead it astray. However, at any point in time, either we misclassify (w.r.t. margin $\gamma'$) only a $(1 + \delta) \cdot \opt^S_{\gamma}$ fraction of points (in which case we can abort early and use this hypothesis) or guessing a random misclassified point will have at least an $\Omega(\delta)$ probability of giving us a non-error. Since our online learner has a mistake bound of $M$, we will never need to make more than this many correct guesses. Specifically, the algorithm is as follows:
\begin{itemize}
\item Let $\mathrm{Samples}=\emptyset$
\item For $i=0$ to $M$
\begin{itemize}
\item Let $\bw = \A(\mathrm{Samples})$
\item Let $T$ be the set of $(\bx, y)\in S$ so that $\sgn(\left<\bw, \bx\right> - y \gamma') \ne y$
\item If $T=\emptyset$, and otherwise with $50\%$ probability, return $\bw$
\item Draw $(\bx_i, y_i)$ uniformly at random from $T$, and add it to $\mathrm{Samples}$
\end{itemize}
\item Return $\bw$
\end{itemize}

To analyze this algorithm, let $S_{bad}$ be the set of $(\bx,y)\in S$ with $\sgn(\left<\bw^\ast, \bx\right> - y \gamma) \ne y$. Recall that by assumption $|S_{bad}| \leq \opt^S_{\gamma} \cdot |S|$. We claim that with probability at least $2^{-O(M \log(1/\delta))}$ our algorithm never adds an element of $S_{bad}$ to $\mathrm{Samples}$ and never returns a $\bw$ in the for loop for which $\err_{\gamma'}^S(\bw) > (1+\delta) \cdot \opt_\gamma^S$.
This is because during each iteration of the algorithm either:
\begin{enumerate}
\item $\err_{\gamma'}^S(\bw) > (1+\delta) \cdot \opt_\gamma^S$. In this case, there is a $50\%$ probability that we do not return $\bw$. If we do not return, then $|T|\geq (1 + \delta) \cdot |S_{bad}|$ so there is at least a $\frac{\delta}{1 + \delta} \geq \delta/2$ probability that the new element added to $\mathrm{Samples}$ is not in $S_{bad}$.
\item Or $\err_{\gamma'}^S(\bw) \leq (1+\delta) \cdot \opt_\gamma^S$. In this case, there is a $50\%$ probability of returning $\bw$.
\end{enumerate}
Hence, there is a $(\delta/4)^{M + 1} \geq 2^{-O(M\log(1/\delta))}$ probability of never adding an element of $S_{bad}$ to $\mathrm{Samples}$ or returning a $\bw$ in our for-loop with $\err_{\gamma'}^S(\bw) > (1+\delta) \cdot \opt_\gamma^S$. When this occurs, we claim that we output $\bw$ such that $\err_{\gamma'}^S(\bw) \leq (1+\delta) \cdot \opt_\gamma^S$.
This is because, if this were not the case, we must have reached the final statement at which point we have $\mathrm{Samples} = ((\bx_0,y_0),\ldots,(\bx_M,y_M))$, where each $(\bx_i,y_i)$ satisfies $\sgn(\left<\bw^\ast, \bx_i\right> - y_i \gamma) = y_i$ and $\sgn(\left<\bw_i, \bx_i\right> - y_i \gamma') \ne y_i$ with $\bw_i = \A((\bx_0,y_0),\ldots,(\bx_{i-1},y_{i-1}))$. But this violates the mistake bound of $M$.

Thus, we output $\bw$ such that $\err_{\gamma'}^S(\bw) \leq (1+\delta) \cdot \opt_\gamma^S$ with probability at least $2^{-O(M\log(1/\delta))}$.
%
%Hence, there is a $(\delta/4)^{M} \geq 2^{-O(M\log(1/\delta))}$ chance of never adding an element of $S_{bad}$ to $\mathrm{Samples}$ or returning a $\bw$ in our for-loop with $\err_{\gamma'}^S(\bw) > (1+\delta) \cdot \opt_\gamma^S$. We claim however that if samples from $S_{bad}$ are never added to $\mathrm{Samples}$, the algorithm cannot reach the final return statement. This is because if this were not the case, we would have $\mathrm{Samples} = ((\bx_0,y_0),\ldots,(\bx_M,y_M))$,
%where each $(\bx_i,y_i)$ satisfies $\sgn(\left<\bw^\ast, \bx_i\right> - y_i \gamma) = y_i$ and $\sgn(\left<\bw_i, \bx_i\right> - y_i \gamma') \ne y_i$ where $\bw_i = \A((\bx_0,y_0),\ldots,(\bx_{i-1},y_{i-1}))$. But this violates the mistake bound of $M$.
%This completes the proof.
%
\end{proof}

We will now show how Proposition~\ref{prop:empirical-learner} can be used to derive Theorem~\ref{thm:learning-algo-main}. As stated earlier, we will require the following mistake bound for online learning with a margin gap from~\cite{Gentile01}.

\begin{theorem}[\cite{Gentile01}] \label{thm:gentile-online-learning}
For any $2 \leq p < \infty$, there exists a polynomial time $L_p$ online learner with margin gap $(\gamma,(1 - \nu)\gamma)$ and mistake bound $O\left(\frac{(p - 1)}{\nu^2 \gamma^2}\right)$. Furthermore, there is a polynomial time $L_\infty$ online learner with margin gap $(\gamma,(1 - \nu)\gamma)$ and mistake bound $O\left(\frac{\log d}{\nu^2 \gamma^2}\right)$.
\end{theorem}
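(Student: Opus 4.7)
The plan is to realise the $L_p$ online learner as an instance of online mirror descent in the classical $p$-norm Perceptron tradition of Grove--Littlestone--Schuurmans and Gentile, with mirror map $\Phi_q(\bw) = \tfrac{1}{2}\|\bw\|_q^2$. The classical facts I would invoke are that $\Phi_q$ is $(q-1)$-strongly convex with respect to $\|\cdot\|_q$, its Fenchel dual $\Phi_q^\ast(\bu) = \tfrac{1}{2}\|\bu\|_p^2$ is $(p-1)$-smooth with respect to $\|\cdot\|_p$, and the gradient $\nabla \Phi_q^\ast$ is the componentwise ``link function'' $u_i \mapsto \sgn(u_i)|u_i|^{p-1}/\|\bu\|_p^{p-2}$, which is computable in $O(d)$ arithmetic operations. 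The learner maintains a dual accumulator $\bz_t \in \R^d$ (initialised to $\bzero$), returns the primal iterate $\bw_t = \nabla \Phi_q^\ast(\bz_t)$ rescaled if necessary to lie in $\bB_q^d$, and on a mistake round performs the additive dual update $\bz_{t+1} = \bz_t + \eta y_t \bx_t$ for a step size $\eta$ to be tuned.

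The mistake-bound analysis is the textbook mirror-descent one. The per-step Bregman identity together with the strong convexity/smoothness of the pair $(\Phi_q, \Phi_q^\ast)$ yields, for any $\bu \in \bB_q^d$,
\[
D_{\Phi_q}(\bu, \bw_{t+1}) \;\leq\; D_{\Phi_q}(\bu, \bw_t) \;-\; \eta\, y_t \langle \bu - \bw_t, \bx_t\rangle \;+\; \tfrac{(p-1)\eta^2}{2}\|\bx_t\|_p^2 \; .
\]
Plugging $\bu = \bw^\ast$, on a mistake round one has $y_t \langle \bw^\ast - \bw_t, \bx_t\rangle \geq \nu\gamma$ (since $\bw^\ast$ achieves margin $\gamma$ while $\bw_t$ fails margin $(1-\nu)\gamma$) and $\|\bx_t\|_p \leq 1$. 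Choosing $\eta = \Theta(\nu\gamma/(p-1))$ drives the potential down by $\Omega(\nu^2\gamma^2/(p-1))$ per mistake, while $D_{\Phi_q}(\bw^\ast,\bzero) \leq \tfrac{1}{2}\|\bw^\ast\|_q^2 \leq \tfrac{1}{2}$; since the potential stays nonnegative, telescoping caps the number of mistakes at $O((p-1)/(\nu\gamma)^2)$, which is exactly the claimed bound.

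For the $L_\infty$ statement I would simply instantiate the same algorithm with $p = \Theta(\log d)$. The standard norm equivalences $\|\bx\|_p \leq d^{1/p}\|\bx\|_\infty \leq e\|\bx\|_\infty$ and $\|\bw\|_{p/(p-1)} \leq \|\bw\|_1$ convert any $L_\infty$-$\gamma$-margin-separable sequence with $\bw^\ast \in \bB_1^d$ into an $L_p$-$\Omega(\gamma)$-margin-separable one after absorbing the constants, so the previous bound specialises to $O(\log d/(\nu\gamma)^2)$ mistakes. The main obstacle I would anticipate is not the per-mistake progress inequality (a routine convex-analysis calculation once the correct mirror map is in hand), but the feasibility constraint $\bw_t \in \bB_q^d$ demanded by the definition of the online learner: one must verify that the rescaling of $\nabla \Phi_q^\ast(\bz_t)$ onto $\bB_q^d$ does not increase the Bregman potential to $\bw^\ast$, which is handled by the three-point Bregman inequality together with the fact that $\bw^\ast$ already lies in the feasible set.
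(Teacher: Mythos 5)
The paper does not prove this theorem: it is stated purely as a citation to Gentile (2001), whose ALMA$_p$ algorithm is precisely the approximate-maximal-margin $p$-norm Perceptron you are reconstructing. Your mirror-descent derivation with mirror map $\Phi_q(\bw)=\tfrac12\|\bw\|_q^2$, the $(q-1)$-strong-convexity/$(p-1)$-smoothness duality, the dual accumulator and componentwise link function, and the per-mistake potential drop of $\Omega(\nu^2\gamma^2/(p-1))$ after tuning $\eta = \Theta(\nu\gamma/(p-1))$, is an accurate modern rendering of exactly the argument underlying the cited result for the $2 \le p < \infty$ part. One point you hedge on deserves to be stated affirmatively: for the specific potential $\Phi_q = \tfrac12\|\cdot\|_q^2$, a short Lagrangian computation shows that the Bregman projection onto $\bB_q^d$ \emph{is} radial rescaling (the first-order condition forces $\sgn(w'_i)|w'_i|^{q-1}$ to be a uniform multiple of $\sgn(w_i)|w_i|^{q-1}$, hence $\bw' \propto \bw$), so the generalized Pythagorean inequality you invoke does apply to the rescaled iterate. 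This identification is a special feature of the squared-norm potential and should be checked explicitly rather than asserted.

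The one genuine gap is in the $L_\infty$ reduction. Taking $p' = \Theta(\log d)$ does give the right mistake bound, but the iterates it produces live in $\bB_{q'}^d$ with $q' = p'/(p'-1) > 1$, which is \emph{strictly larger} than the $\bB_1^d$ required by the definition of an $L_\infty$ online learner in this paper. Dividing the iterate by $\|\bw_t\|_1 \le d^{1/p'} \le e$ to force it into $\bB_1^d$ shrinks the achieved margin by up to a factor of $e$, so the resulting learner only certifies margin roughly $(1-\nu')\gamma/e$; for small $\nu$ this is \emph{not} of the form $(1-\nu)\gamma$ and the constant cannot be ``absorbed'' into $\nu$. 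To get the stated margin gap one needs either (a) to run the OMD update with a Bregman projection onto $\bB_1^d$ rather than onto $\bB_{q'}^d$ --- this is still a polynomial-time convex program and the Pythagorean inequality still applies since $\bw^* \in \bB_1^d$, but it is no longer a simple rescaling --- or (b) to switch to an entropic mirror map (balanced Winnow), which keeps iterates in $\bB_1^d$ by construction. Gentile's paper addresses this; your sketch does not, and the phrase ``after absorbing the constants'' papers over a real issue for small $\nu$.
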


\begin{proof}[Proof of Theorem~\ref{thm:learning-algo-main}]
Our $\nu$-robust $(1 + \delta)$-agnostic learner for $L_p$-$\gamma$-margin halfspace works as follows. First, it draws the appropriate number of samples 
$m$ (as stated in Theorem~\ref{thm:learning-algo-main}) from $\cD$. Then, it runs the algorithm from Proposition~\ref{prop:empirical-learner} on these samples for margin gap $(\gamma, (1 - \nu/2)\gamma)$.

Let $M_p$ denote the error bound for $L_p$ online learning with margin gap $(\gamma,(1 - \nu/2)\gamma)$ given by Theorem~\ref{thm:gentile-online-learning}. Our entire algorithm runs in time $\poly(m) \cdot 2^{O(M_p \cdot \log(1/\delta))}$. It is simple to check that this results in the claimed running time.

As for the error guarantee, let $\bw \in \bB_q^d$ be the output halfspace. With probability 0.8, we have
\begin{align*}
\err_{(1 - \nu)\gamma}^{\cD}(\bw) &\leq \err_{(1 - \nu/2)\gamma}^{S}(\bw) + \eps/2
\leq (1 + \delta) \cdot \opt^S_{(1 - \nu/2)\gamma} + \eps/2
\leq (1 + \delta) \cdot \opt^{\cD}_{\gamma} + \eps,
\end{align*}
where the first and last inequalities follow from standard margin generalization bounds~\cite{BartlettM02, Koltchinskii2002, KakadeST08} and the second inequality follows from the guarantee of Proposition~\ref{prop:empirical-learner}.
\end{proof}

\section{Additional Background for Hardness Result}
\label{sec:prelim}
In this section, we provide additional preliminaries required for the proof of Theorem~\ref{thm:running-time-lower-bound}. Throughout the lower bound proof in the next section, we will sometimes view a vector $\bw \in \R^d$ naturally as a column matrix $\bw \in \R^{1 \times d}$; for example, we may write $\left<\bw, \bx\right> = \bw\bx^T$. Furthermore, for any positive integer $m$, we use $[m]$ to denote $\{1, \dots, m\}$. We also use $\be_i$ to denote the $i$-th vector in the standard basis (i.e., the vector with value one in the $i$-th coordinate and zero in the remaining coordinates). We extend this notation to a set $S$ of coordinates and use $\be_S$ to denote the indicator vector for $S$, i.e., $\be_S = \sum_{i \in S} \be_i$.

\subsection{Exponential Time Hypotheses}
\label{subsec:eth}

Recall that, in the 3-satisfiability (3SAT) problem, we are given a set of clauses, where each clause is an OR of at most three literals. The goal is to determine whether there exists an assignment that satisfies all clauses. The Exponential Time Hypothesis (ETH)~\cite{IP01,IPZ01} asserts that there is no sub-exponential time algorithm for 3SAT. ETH is of course a strengthening of the famous $P \ne NP$ assumption. In recent years, this assumption has become an essential part of modern complexity theory, as it allows one to prove tight running time lower bounds for many NP-hard and parameterized problems. See, e.g.,~\cite{LokshtanovMS11} for a survey on the topic.

For our lower bound, we use a strengthening of ETH, called Gap-ETH. Roughly speaking, Gap-ETH says that even finding an \emph{approximate} solution to 3SAT is hard. This is stated more precisely below:

\begin{hypothesis}[(Randomized) Gap Exponential Time Hypothesis (Gap-ETH)~\cite{Dinur16,MR17}] \label{hyp:gap-eth}
There exists a constant $\zeta > 0$ such that no randomized $2^{o(n)}$-time algorithm can, given a 3SAT instance on $n$ variables, distinguish between the following two cases correctly with probability $2/3$:
\begin{itemize}
\item (Completeness) There exists an assignment that satisfies all clauses.
\item (Soundness) Every assignment violates at least $\zeta$ fraction of the clauses.
\end{itemize}
\end{hypothesis}

Although proposed relatively recently, Gap-ETH is intimately related to a well-known open question whether linear size probabilistic checkable proofs exist for 3SAT; for more detail, please refer to the discussion in~\cite{Dinur16}. Gap-ETH has been used as a starting point for proving numerous tight running time lower bounds against approximation algorithms (e.g.,~\cite{Dinur16,MR17,BennettGS17,AggarwalS18,JainKR19}) and parameterized approximation algorithms (e.g.,~\cite{ChalermsookCKLM17,DinurM18,BhattacharyyaGS18,Cohen-AddadG0LL19}). Indeed, we will use one such result as a starting point of our hardness reduction.

\subsection{Hardness of Label Cover}
The main component of our hardness result will be a reduction from the \emph{Label Cover} 
problem\footnote{Label Cover is sometimes referred to as \emph{Projection Game}  or \emph{Two-Prover One-Round Game}.}, which is a classical problem in hardness of approximation literature that is widely used as a starting point for proving strong NP-hardness of approximation results (see, e.g., \cite{AroraBSS97,Hastad96,Hastad01,Feige98}).

\begin{definition}[Label Cover] \label{def:label-cover-full}
A \emph{Label Cover} instance $\cL = (U, V, E, \Sigma_U, \Sigma_V, \{\pi_e\}_{e \in \Sigma})$ consists of
\begin{itemize}
\item a bi-regular bipartite graph $(U, V, E)$, referred to as the \emph{constraint graph},
\item \emph{label sets} $\Sigma_U$ and $\Sigma_V$,
\item for every edge $e \in E$, a \emph{constraint} (aka \emph{projection}) $\pi_e: \Sigma_U \to \Sigma_V$.
\end{itemize}

A labeling of $\cL$ is a function $\phi: U \to \Sigma_U$. 
We say that $\phi$ \emph{covers} $v \in V$ if there exists 
$\sigma_v \in \Sigma_V$ such that\footnote{This is equivalent to 
$\pi_{(u_1, v)}(\phi(u_1)) = \pi_{(u_2, v)}(\phi(u_2))$ for all neighbors $u_1, u_2$ of $v$.} 
$\pi_{(u, v)}(\phi(u)) = \sigma_v$ for all\footnote{For every $a \in U \cup V$, 
we use $N(a)$ to denote the set of neighbors of $a$ (with respect to the graph $(U, V, E)$).} 
$u \in N(v)$. The \emph{value} $\phi$, denoted by $\val_{\cL}(\phi)$, 
is defined as the fraction of $v \in V$ covered by $\phi$. 
The value of $\cL$, denoted by $\val(\cL)$, is defined as $\max_{\phi: U \to \Sigma_U} \val(\phi)$.

Moreover, we say that $\phi$ \emph{weakly covers} $v \in V$ if there exist distinct neighbors $u_1, u_2$ 
of $v$ such that $\pi_{(u_1, v)}(\phi(u_1)) = \pi_{(u_2, v)}(\phi(u_2))$. The \emph{weak value} of $\phi$, 
denoted by $\wval(\phi)$, is the fraction of $v \in V$ weakly covered by $\phi$. 
The weak value of $\cL$, denoted by $\wval(\cL)$, is defined as $\max_{\phi: U \to \Sigma_U} \wval(\phi)$. 

For a Label Cover instance $\cL$, we use $k$ to denote $|U|$ and $n$ to denote 
$|U| \cdot |\Sigma_U| + |V| \cdot |\Sigma_V|$.
\end{definition}

The goal of Label Cover is to find an assignment with maximium value.

In our reduction, we will also need an additional notion of ``decomposability'' of a Label Cover instance. Roughly speaking, an instance is {\em decomposable} if we can partition $V$ 
into different parts such that each $u \in U$ has exactly 
one induced edge to the vertices in each part:
\begin{definition}[Decomposable Label Cover] \label{def:label-cover-decomposable-main-body}
A \emph{Label Cover} instance $\cL = (U, V, E, \Sigma_U, \Sigma_V, \{\pi_e\}_{e \in E})$ 
is said to be \emph{decomposable} if there exists a partition of $V$ into $V_1 \cup \cdots \cup V_t$ 
such that, for every $u \in U$ and $j \in [t]$, $|N(u) \cap V_j| = 1$. 
%
%For $j \in [t]$ and $u \in U$, 
We use the notation $v^j(u)$ to the denote 
the unique element in $N(u) \cap V_j$.
\end{definition}

Several strong inapproximability results for Label Cover 
are known~\cite{Raz98,MoshkovitzR10,DinurS14}. To prove a tight 
running time lower bound, we require an inapproximability result for Label Cover 
with a tight running lower bound as well.
Observe that we can solve Label Cover in time $n^{O(k)}$ by enumerating through all possible 
$|\Sigma_U|^{|U|} = n^{O(k)}$ assignments and compute their values. The following result shows that, even if we only aim for a constant approximation ratio, no algorithm that can be significantly faster than this ``brute-force'' algorithm.

\begin{theorem}[\cite{M20}] \label{thm:label-cover-hardness}
Assuming Gap-ETH, for any function $f$ and any constants $\Delta \in \N \setminus \{1\}, \mu \in (0, 1)$, there is no $f(k) \cdot n^{o(k)}$-time algorithm that can, given a decomposable Label Cover instance $\cL = (U, V = V_1 \cup \cdots \cup V_t, E, \Sigma_U, \Sigma_V, \{\pi_e\}_{e \in E})$ whose right-degree is equal to $\Delta$, distinguish between
\begin{itemize}
\item (Completeness) $\val(\cL) = 1$,
\item (Soundness) $\wval(\cL) < \mu$,
\end{itemize}
where $k := |U|$ and $n := |U| \cdot |\Sigma_U| + |V| \cdot |\Sigma_V|$.
\end{theorem}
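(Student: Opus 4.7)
The plan is a parameter-preserving reduction from Gap-3SAT to decomposable Label Cover. Under Gap-ETH, there is no $2^{o(N)}$-time algorithm to distinguish satisfiable 3SAT instances on $N$ variables from those where every assignment violates at least a $\zeta$-fraction of clauses (and by the sparsification lemma we may assume $m = O(N)$). Starting from such an instance $\phi$, I partition its variables into $k$ equal-size blocks $B_1,\dots,B_k$ and take $U = [k]$ with $\Sigma_U = \{0,1\}^{N/k}$; then $|U|\cdot|\Sigma_U| = \Theta(k\cdot 2^{N/k})$, so $n = \Theta(k\cdot 2^{N/k})$ and $n^{o(k)} = 2^{o(N)}$. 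Consequently, any $f(k)\cdot n^{o(k)}$-time algorithm for Label Cover would solve Gap-3SAT in subexponential time, violating Gap-ETH.

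For the right side, I fix $t$ partitions $\mathcal{P}_1,\dots,\mathcal{P}_t$ of $[k]$ into $\Delta$-sized groups via a combinatorial design (random or explicit) that ensures every pair $\{i,i'\}\subseteq[k]$ lies in a common group across many of the $\mathcal{P}_j$'s. Setting $V_j = \{v_{j,S} : S \in \mathcal{P}_j\}$ with $N(v_{j,S}) = S$ gives the decomposable structure automatically, with left-degree $t$ and right-degree $\Delta$. The label set $\Sigma_V$ and projections $\pi_{(u,v_{j,S})}$ encode local tests of clauses of $\phi$, designed so that covering $v_{j,S}$ corresponds to the block assignments $\{\phi(u)\}_{u\in S}$ jointly satisfying those clauses of $\phi$ whose variables lie in $\bigcup_{i\in S}B_i$. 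Completeness is then immediate: any satisfying assignment $\alpha$ of $\phi$ lifts to $u\mapsto \alpha|_{B_u}$, which covers every $v\in V$.

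The main technical obstacle is the soundness reduction to the \emph{weak} value. Weak covering requires only two of the $\Delta$ neighbors of $v$ to have matching projections, which is qualitatively weaker than standard covering and cannot be controlled via the usual projection-consistency argument. I would handle this by tailoring $\Sigma_V$ and the projections $\pi_{(u,v_{j,S})}$ so that coincidence of $\pi_{(u_1,v_{j,S})}(\phi(u_1))$ and $\pi_{(u_2,v_{j,S})}(\phi(u_2))$ forces the pair $(\phi(u_1),\phi(u_2))$ to satisfy essentially all clauses spanning $B_{u_1}\cup B_{u_2}$. Combined with the design of $\{\mathcal{P}_j\}$, a high weak value $\wval(\cL)\geq \mu$ then yields, via an averaging and rounding argument, a global assignment of $\phi$ satisfying more than a $(1-\zeta)$-fraction of clauses --- contradicting the soundness of $\phi$. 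The delicate balance is to simultaneously keep $|\Sigma_V|$ polynomial in $n$, keep $t$ small enough to preserve the running-time calibration $n=\Theta(k\cdot 2^{N/k})$, and keep $\mu$ independent of $k$; executing all three at once is the core challenge, and is where the argument of~\cite{M20} does its main work.
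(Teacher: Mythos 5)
The paper does not prove this theorem from first principles. It is quoted from~\cite{M20}; the only original content in the paper is the short remark following the statement, which explains how to \emph{derive} the decomposable form from the form actually proved in~\cite{M20}. Specifically, in~\cite{M20} the right side is $V = \binom{U}{\Delta}$ with $N(\{u_1,\dots,u_\Delta\}) = \{u_1,\dots,u_\Delta\}$ and no decomposition is stated. The paper's argument is: (i) pad $U$ (replicating each $u$ into $\Delta$ copies and each $V$-vertex into $\Delta^\Delta$ copies) so that $\Delta \mid k$, and (ii) apply Baranyai's theorem, which partitions the complete $\Delta$-uniform hypergraph on $k$ vertices into perfect matchings; those matchings are exactly the $V_1,\dots,V_t$. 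That is the entire ``proof'' in the paper. Everything else --- the Gap-ETH calibration, the weak-value soundness, the constant right-degree --- is taken as a black box from~\cite{M20}.

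Your proposal instead attempts to reprove the theorem of~\cite{M20} from Gap-3SAT. The calibration step ($U=[k]$, $\Sigma_U = \{0,1\}^{N/k}$, so $n^{o(k)} = 2^{o(N)}$) and the completeness argument are fine, and your construction of $V_j$ from partitions $\mathcal{P}_j$ of $[k]$ does give decomposability directly, which is a reasonable alternative to Baranyai. But the soundness is a genuine gap, and you acknowledge it: showing that $\wval(\cL) \ge \mu$ yields a $(>1-\zeta)$-satisfying assignment requires exactly the delicate argument you defer to~\cite{M20}, so the proposal is not a proof. Moreover, note a structural mismatch: the soundness analysis of~\cite{M20} is carried out with $V$ being \emph{all} $\Delta$-subsets of $U$, whereas you take $V$ to be the union of only $t$ chosen partitions. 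Since decomposability puts no upper bound on $t$, the cleanest fix is not to ``keep $t$ small'' but to take $t = \binom{k-1}{\Delta-1}$ (all $\Delta$-subsets, grouped into perfect matchings), which does not hurt the $n^{o(k)}$ calibration; at that point your construction collapses to exactly the paper's derivation via Baranyai, and one might as well just cite~\cite{M20} and argue decomposability as the paper does.
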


We remark here that the above theorem is not exactly the same as stated in~\cite{M20}. We now briefly explain how to derive the version above from the one in~\cite{M20}. Specifically, in~\cite{M20}, the decomposability of the instance $\cL$ is not stated; rather, the instance there has the following property: $V$ is simply all subsets of size $\Delta$ of $U$, and, for any vertex $\{u_1, \dots, u_{\Delta}\} \in V$, its neighbors are $u_1, \dots, u_{\Delta} \in U$. Now, we can assume w.l.o.g. that $k$ is divisible by $\Delta$ by expanding each vertex $u \in U$ to $\Delta$ new vertices $u^1, \dots, u^\Delta$ and replicate each vertex in $\{u_1, \dots, u_{\Delta}\} \in V$ to $\Delta^{\Delta}$ new vertices $\{u_1^{\xi(1)}, \dots, u_{\Delta}^{\xi(\Delta)}\}$ for all $\xi: [\Delta] \to [\Delta]$. Once we have that $k$ is divisible by $\Delta$, Baranyai's theorem~\cite{Baranyai75} immediately implies the decomposability of the instance.

\subsection{Anti-Concentration}

It is well-known that, if we take $m$ i.i.d. Rademacher random variables, their sum divided by $\sqrt{m}$ converges in distribution to the standard normal distribution (see, e.g.,~\cite{Berry41,Esseen42}). As a consequence, this immediately implies the following ``anti-concentration'' style result:
\begin{lemma} \label{lem:anti-concen}
There exists $C \in (0, 1)$ and $m_0 > 0$ such that, for any $m \geq m_0$, we have
\begin{align*}
\Pr_{X_1, \dots, X_m}[X_1 + \cdots + X_m \geq C \sqrt{m}] \geq 0.4 \;,
\end{align*}
where $X_1, \dots, X_m$ are i.i.d. Rademacher random variables.
\end{lemma}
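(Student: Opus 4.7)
The statement is a standard consequence of the Central Limit Theorem (CLT), which is alluded to in the sentence immediately preceding the lemma. The plan is simply to translate convergence in distribution, combined with the fact that the standard Gaussian puts mass strictly above $1/2$ on any interval of the form $[0, C]$ with $C > 0$ small, into the desired quantitative statement.

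First I would set $S_m := (X_1 + \cdots + X_m)/\sqrt{m}$. Since each $X_i$ is Rademacher, $\E[X_i] = 0$ and $\mathrm{Var}(X_i) = 1$, so the classical CLT (or the quantitative Berry--Esseen version cited in the paragraph preceding the lemma) implies that $S_m$ converges in distribution to a standard Gaussian $Z \sim N(0,1)$. In particular, since the Gaussian CDF is continuous everywhere, for every fixed $c \in \R$ we have $\Pr[S_m \geq c] \to \Pr[Z \geq c]$ as $m \to \infty$.

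Next I would choose the constant $C$. Since $\Pr[Z \geq 0] = 1/2$ and $t \mapsto \Pr[Z \geq t]$ is continuous, there exists $C \in (0,1)$ with $\Pr[Z \geq C] \geq 0.45$ (any $C$ with, say, $C \leq 0.12$ works, using $\Pr[Z \geq 0.12] > 0.45$, but we do not need a specific value). Fix such a $C$.

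Finally, by the convergence $\Pr[S_m \geq C] \to \Pr[Z \geq C] \geq 0.45$, there exists $m_0$ such that for all $m \geq m_0$ we have $\Pr[S_m \geq C] \geq 0.4$. Multiplying the event inside the probability by $\sqrt{m}$ gives exactly $\Pr[X_1 + \cdots + X_m \geq C\sqrt{m}] \geq 0.4$, as required. There is no real obstacle here; the only thing to be careful about is selecting $C$ strictly positive (so that the conclusion is nontrivial) while keeping $\Pr[Z \geq C]$ bounded away from $0.4$ by a margin that absorbs the $o(1)$ CLT error, which is handled by taking $m_0$ sufficiently large.
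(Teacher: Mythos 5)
Your proof is correct and matches the paper's approach: the paper does not spell out a proof, but simply observes (citing Berry--Esseen) that the normalized sum converges in distribution to a standard Gaussian and calls the lemma an immediate consequence, which is exactly the argument you make explicit.
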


Note that the constant 0.4 above can be replaced by any constant strictly less than 0.5. We only use 0.4 here to avoid introducing additional variables.

\section{Tight Running Time Lower Bound}
\label{sec:hardness}
Given the background from Section~\ref{sec:prelim}, in this section we proceed
to prove our computational lower bound (Theorem~\ref{thm:running-time-lower-bound}). As alluded to in the previous section, the main ingredient of our hardness result is a reduction from Label Cover to the problem of $L_{\infty}$-$\gamma$-margin halfspace learning. The properties of our reduction are summarized below.

\begin{theorem}[Hardness Reduction] \label{thm:reduction}
There exist absolute constants $\Delta, k_0 \in \N \setminus \{1\}$ and $\mu, \delta > 0$ such that the following holds. There is a polynomial time reduction that takes in a decomposable Label Cover instance $\cL = (U, V = V_1 \cup \cdots \cup V_t, E, \Sigma_U, \Sigma_V, \{\pi_e\}_{e \in E})$ whose right-degree is equal to $\Delta$, and produces real numbers $\gamma^*, \eps^* > 0$ and an oracle $\cO$ that can draw a sample from a distribution $\cD$ on $\bB_{\infty}^{|U| \cdot |\Sigma_U| + 1} \times \{\pm 1\}$ in polynomial time, such that when $|U| \geq k_0$ we have:
\begin{itemize}
\item (Completeness) If $\cL$ is fully satisfiable (i.e., $\val(\cL) = 1$), then $\opt^{\cD}_{\gamma^*} \leq \eps^*$.
\item (Soundness) If $\wval(\cL) < \mu$, then $\opt^{\cD}_{(1 - \delta)\gamma^*} > 1.6 \eps^*$.
\item (Margin Bound) $\gamma^* \geq \Omega(1/\sqrt{k})$.
\item (Error Bound) $\eps^* \geq n^{-O(\sqrt{k})}$. 
\end{itemize}
Here $k := |U|$ and $n := |U| \cdot |\Sigma_U| + |V| \cdot |\Sigma_V|$ are defined similarly to Theorem~\ref{thm:label-cover-hardness}.
\end{theorem}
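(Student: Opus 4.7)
The plan is to encode $\cL$ as a halfspace-learning distribution over $\bB_\infty^d \times \{\pm 1\}$ with $d = |U|\cdot|\Sigma_U| + 1$. Coordinates are indexed by pairs $(u, \sigma) \in U \times \Sigma_U$ together with one ``bias'' coordinate. Every labeling $\phi : U \to \Sigma_U$ has a canonical halfspace $\bw^\phi \in \bB_1^d$ that places weight $1/k$ on each coordinate $(u, \phi(u))$, so $\|\bw^\phi\|_1 = 1$; the bias coordinate of $\bw^\phi$ is used only to resolve a sign symmetry described below.

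To draw a sample, the oracle picks $j \in [t]$ uniformly, draws a uniformly random Rademacher function $r \colon V_j \times \Sigma_V \to \{\pm 1\}$, and sets $\bx[u,\sigma] := r[v^j(u), \pi_{(u, v^j(u))}(\sigma)]$ for every $(u, \sigma)$; the bias coordinate and label $y$ are chosen from a small auxiliary symmetric distribution (e.g., outputting both $(\bx, +1)$ and $(-\bx, -1)$ with equal probability, with the bias set so that the trivial ``all-bias'' halfspace cannot succeed). Using the decomposability $|N(u)\cap V_j| = 1$, under a satisfying $\phi$ with cover $\phi^* : V \to \Sigma_V$ the structured part of the inner product becomes
\begin{align*}
\langle \bw^\phi, \bx \rangle \;=\; \frac{1}{k} \sum_{u\in U} r\bigl[v^j(u), \phi^*(v^j(u))\bigr] \;=\; \frac{\Delta}{k}\sum_{v\in V_j} r\bigl[v, \phi^*(v)\bigr],
\end{align*}
which is $\Delta/k$ times the sum of $|V_j|=k/\Delta$ i.i.d.\ Rademacher variables. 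Lemma~\ref{lem:anti-concen} then gives $|\langle \bw^\phi, \bx \rangle| \geq C\sqrt{\Delta/k} = \Omega(1/\sqrt{k})$ with probability at least $0.8$, motivating the choice $\gamma^* := C\sqrt{\Delta/k}$; completeness follows by noting that $\bw^\phi$ achieves margin $\gamma^*$ except on an $\eps^* \geq n^{-O(\sqrt{k})}$ fraction of samples after the sign handling, via standard Rademacher tail estimates.

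For soundness, the strategy is to associate to any halfspace $\bw \in \bB_1^d$ a randomized labeling $\Phi$, where $\Phi(u)$ is drawn with probability proportional to $|w_{(u, \sigma)}|$ after a suitable rounding. The key claim is that if $\wval(\cL) < \mu$, then for a random $j$ the sum $\sum_{u \in U} w_{(u, \Phi(u))} \cdot r[v^j(u), \pi_{(u, v^j(u))}(\Phi(u))]$ behaves as a linear combination of essentially independent Rademachers, because non-weakly-covered vertices $v \in V_j$ induce distinct labels in $\Sigma_V$ across their neighbors and hence distinct indices of $r$. A Khintchine-/Berry--Esseen-type argument then shows that the magnitude of this sum lies below $(1-\delta)\gamma^*$ on a fraction of samples strictly greater than $1.6\eps^*$, yielding the claimed soundness gap uniformly over $\bw$.

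The main obstacle I anticipate is the soundness rounding: a halfspace $\bw$ can distribute its $L_1$ mass across many labels per vertex of $U$, so translating its margin behavior into a combinatorial weak-covering bound on $\Phi$ requires a careful argument separating the coherent contribution of weakly-covered $v$'s from the incoherent contribution of the remaining $v$'s, while simultaneously ensuring that no ``trivial'' halfspace (one that puts most of its mass on the bias coordinate and ignores the structured coordinates) can spoil the gap. Once this step is in place, $\gamma^* = \Omega(1/\sqrt{k})$ is immediate from the construction and $\eps^* \geq n^{-O(\sqrt{k})}$ follows from the Gaussian-like tail of the completeness-side Rademacher sum, while polynomial sampleability of the oracle is clear since each sample requires only $\poly(n)$ operations.
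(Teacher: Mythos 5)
Your core construction matches the paper's Label-Cover test (Step 4c of Figure 1 of the paper): draw a random Rademacher vector indexed by $V_j\times\Sigma_V$, ``pull it back'' through the projection matrix $\Pi^j$, and use decomposability plus Lemma~\ref{lem:anti-concen} so that the canonical halfspace $\bw^\phi$ for a satisfying labeling clears margin $\Theta(\sqrt{\Delta/k})$ on a constant fraction of these samples, while a non-covering labeling spreads its mass across $k$ distinct Rademacher coordinates and achieves only margin $\Theta(\sqrt{1/k})$. That intuition and the choice $\gamma^*=\Theta(\sqrt{\Delta/k})$ are both right and are exactly what the paper does.

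However, there is a genuine gap, and you actually name it and then wave it away. The object you must reason about in the soundness case is an \emph{arbitrary} $\bw\in\bB_1^d$, not a canonical $\bw^\phi$. Such a $\bw$ can (i) put a constant fraction of its $L_1$ mass on the bias coordinate, (ii) make many coordinates negative, (iii) put far more than $1/k$ mass on a few vertices $u$, and (iv), worst of all, concentrate its mass on a handful of $(u,\sigma)$ pairs that all map to the \emph{same} column of $\Pi^j$, which makes $\bw(\Pi^j)^T$ have a huge $L_2$ norm and immediately defeats any Khintchine-type anti-concentration bound. The Rademacher-test samples alone cannot rule these out, and your ``auxiliary symmetric distribution'' with bias coordinate is not enough: the adversary does not need the bias coordinate to cheat, it needs only the $\Pi^j$-collisions in item (iv). The paper handles this by adding three additional families of constraint samples --- the constant-coordinate samples (Steps 1--2), the random-subset mass lower bounds (Step 3), and, crucially, the random-subset upper and lower bounds on $\bw(\hPi^j)^T$ (Steps 4a--4b). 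Those last samples are what allow one to prove Lemma~\ref{lem:variance-preserved}: the $L_2$ norm of $\bhw(\Pi^j)^T$ (with the few ``large'' vertices zeroed out) is within a constant factor and an additive $O(\sqrt[4]{\delta}/k)$ of $\|\bw(\Pi^j)^T\|_2^2$. Without those samples and that lemma, your ``associate a randomized labeling $\Phi$ with probability proportional to $|w_{(u,\sigma)}|$'' step has nothing to bite on --- the second-moment computation in the decoding step (Equation~\eqref{eq:separation-norm-intermediate} in the paper) bounds $\|\bhw(\Pi^j)^T\|_2^2$ by $1/k$ plus a term proportional to the weak-cover probability, so you need the \emph{nice} weight vector with controlled per-vertex mass, not $\bw$ itself. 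You flag the obstacle and then assert the conclusion; the bulk of the paper's soundness proof is precisely the machinery you are skipping.

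Finally, your explanation of the Error Bound is wrong. In the paper $\eps^*\geq n^{-O(\sqrt k)}$ is \emph{not} a ``Gaussian-like tail of the completeness-side Rademacher sum'' --- in the completeness case the Rademacher test samples are misclassified with \emph{constant} probability (about $0.6$), and $\eps^*$ is tiny only because the probability weight $q$ placed on the test samples is set to $0.001/n^{\ell}$ with $\ell=\Theta(\sqrt k)$. That smallness is forced by Observation~\ref{obs:correctly-classify}: each individual auxiliary sample from Steps 1--4b must carry more probability mass than $1.6\eps^*$ so that a small-error $\bw$ is forced to classify \emph{all} of them correctly, and there are $n^{\Theta(\ell)}$ distinct auxiliary samples in Steps 4a--4b. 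So the $n^{-O(\sqrt k)}$ scale is a consequence of the very auxiliary constraints your sketch omits.
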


We remark that, similar to Theorem~\ref{thm:running-time-lower-bound}, the constant 1.6 in the soundness above can be changed to any constant strictly less than two. However, we choose to use an explicit constant here to avoid having a further variable.

Before we prove Theorem~\ref{thm:reduction}, let us briefly argue that it implies the desired running time lower bound (Theorem~\ref{thm:running-time-lower-bound}).

\begin{proof}[Proof of Theorem~\ref{thm:running-time-lower-bound}]
Let $\Delta, k_0, \mu, \delta$ be as in Theorem~\ref{thm:reduction}. Suppose for the sake of contradiction that there exists a proper $\delta$-robust 1.5-agnostic learner for $L_{\infty}$-$\gamma$-margin halfspace $\A$ that runs in time $f(1/\gamma) \cdot d^{o(1 / \gamma^2)} \poly(1/\eps)$. We will use this to construct an algorithm $\B$ for Label Cover.

Given a Label Cover instance $\cL$ as an input, the algorithm $\B$ works as follows:
\begin{itemize}
\item Run the reduction from Theorem~\ref{thm:reduction} on input $\cL$ to get $\eps^*, \gamma^*, \cO$.
\item Run $\A$ on $\cO$ with parameters $\gamma = \gamma^*, \epsilon = 0.05\eps^*, \tau = 0.9$ to get a halfspace $\bw$.
\item Draw $10^6 / \eps^2$ additional samples from $\cO$. Let $\tilde{\cD}$ be the empirical distribution.
\item If $\err^{\tilde{\cD}}_{(1 - \delta)\gamma}(\bw) \leq 1.58\eps^*$, return YES. Otherwise, return NO.
\end{itemize}
The first step of $\B$ runs in $\poly(n)$ time. The second step runs in time $f(1 / \gamma) d^{o(1 / \gamma^2)} \poly(1/\eps) = f(O(\sqrt{k})) \cdot n^{o(k)} \cdot \poly(n^{O(\sqrt{k})}) = f(O(\sqrt{k})) \cdot n^{o(k)}$. The last two steps run in time $\poly(n, 1/\eps)$; recall from Theorem~\ref{thm:reduction} that $\eps^* = n^{O(\sqrt{k})}$, meaning that these two steps run in time $n^{O(\sqrt{k})}$. Hence, the entire algorithm $\B$ runs in $g(k) \cdot n^{o(k)}$ time for some function $g$. 

We will next argue the following correctness guarantee of the algorithm: If $\val(\cL) = 1$, then the algorithm answers YES with probability $0.8$ and, if $\wval(\cL) < \mu$, then the algorithm returns NO with probability $0.8$. Before we do so, observe that this, together with Theorem~\ref{thm:label-cover-hardness}, means that Gap-ETH is violated, which would complete our proof.

Note that we may assume that $k \geq k_0$, as otherwise the Label Cover instance can already be solved in polynomial time. Now consider the case $\val(\cL) = 1$. Theorem~\ref{thm:reduction} ensures that $\opt^{\cD}_{\gamma^*} \leq \eps^*$. As a result, $\A$ returns $\bw$ that satisfies the following with probability 0.9: $\err^{\cD}_{(1 - \delta)\gamma^*}(\bw) \leq 1.5 \eps^* + 0.05\eps^* = 1.55\eps^*$. Furthermore, it is simple to check that $\Pr[|\err^{\cD}_{(1 - \delta)\gamma^*}(\bw) - \err^{\tilde{\cD}}_{(1 - \delta)\gamma^*}(\bw)| > 0.02\eps^*] \leq 0.1$. Hence, with probability 0.8, we must have $\err^{\tilde{\cD}}_{(1 - \delta)\gamma^*} \leq 1.57\eps^*$ and the algorithm returns YES.

On the other hand, suppose that $\wval(\cL) < \mu$. The soundness of Theorem~\ref{thm:reduction} ensures that $\err^{\cD}_{(1 - \delta)\gamma^*}(\bw) > 1.6 \eps^*$. Similar to before, since $\Pr[|\err^{\cD}_{(1 - \delta)\gamma^*}(\bw) - \err^{\tilde{\cD}}_{(1 - \delta)\gamma^*}(\bw)| > 0.02\eps^*] \leq 0.1$, we have $\err^{\tilde{\cD}}_{(1 - \delta)\gamma^*}(\bw) > 1.58 \eps^*$ with probability at least 0.9. Thus, in this case, the algorithm returns NO with probability 0.9 as desired. 
\end{proof}

The remainder of this section is devoted to the proof of Theorem~\ref{thm:reduction} and is organized as follows. First, in Section~\ref{sec:informal-overview}, we give an informal overview of techniques and compare our reduction to those of previous works. Then, in Section~\ref{subsec:reduction}, we give a formal description of our reduction together with the choice of parameters. The completeness and soundness of the reduction are then proved in Sections~\ref{subsec:completeness} and~\ref{subsec:soundness} respectively. 

\subsection{Overview of Techniques}
\label{sec:informal-overview}

We will now give a high-level overview of the reduction. For simplicity of presentation, we will sometimes be informal; everything will be formalized in the subsequent subsections.

\paragraph{Previous Results.} 
To explain the key new ideas behind our reduction, 
it is important to understand high-level approaches taken in previous works 
and why they fail to yield running time lower bounds 
as in our Theorem~\ref{thm:running-time-lower-bound}.

Most of the known hardness results 
for agnostic learning of halfspaces employ reductions 
from Label Cover~\cite{AroraBSS97,FeldmanGKP06,GuruswamiR09,FeldmanGRW12,DiakonikolasKM19}\footnote{Some of these reductions are stated in terms of reductions from Set Cover or from constraint satisfaction problems (CSP). However, it is well-known that these can be formulated as Label Cover.}. 
These reductions use gadgets which are ``local'' in nature. 
As we will explain next, such ``local'' reductions \emph{cannot} work for our purpose.

To describe the reductions, it
is convenient to think of each sample $(\bx, y)$ 
as a linear constraint $\left<\bw, \bx\right> \geq 0$ when $y = +1$ 
and $\left<\bw, \bx\right> < 0$ when $y = -1$, 
where the variables are the coordinates $w_1, \dots, w_d$ of $\bw$. 
When we also consider a margin parameter $\gamma^* > 0$, then the constraints become 
$\left<\bw, \bx\right> \geq \gamma^*$ and $\left<\bw, \bx\right> < -\gamma^*$, respectively. Notice here that, for our purpose, we want 
(i) our halfspace $\bw$ to be in $\bB^d_{1}$, i.e., $|w_1| + \cdots + |w_d| \leq 1$, 
and (ii) each of our samples $\bx$ to lie in $\bB^d_{\infty}$, i.e., $|x_1|, \dots, |x_d| \leq 1$.

Although the reductions in previous works vary in certain steps, 
they do share an overall common framework. With some
simplification, 
they typically let e.g. $d = |U| \cdot |\Sigma_U|$, 
where each coordinate is associated with $U \times \Sigma_U$. 
In the completeness case, i.e., when some labeling $\phi^c$ covers all vertices in $V$, 
the intended solution $\bw^c$ is
defined by $w^c_{(u, \sigma_u)} = \1[\sigma_u = \phi(u)] / k$ for all $u \in U, \sigma_u \in \Sigma_U$. To ensure that this is essentially the best choice of halfspace, 
these reductions often appeal to several types of linear constraints. 
For concreteness, we state a simplified version of those from~\cite{AroraBSS97} below.
\begin{itemize}
\item For every $(u, \sigma_U) \in U \times \Sigma_U$, 
create the constraint $w_{(u, \sigma_u)} \leq 0$.
(This corresponds to the labeled sample $(-\be_{(a, \sigma)}, +1)$.)

\item For each $u \in U$, create the constraint 
$\sum_{\sigma \in \Sigma_U} w_{(u, \sigma)} \geq 1/k$. %, and, 
%for every $v \in V$, we add a similar constraint 
%$\sum_{\sigma \in \Sigma_V} w_{(v, \sigma)} \geq \frac{1}{|U| + |V|}$.
\item For every $v \in V$, $\sigma_v \in \Sigma_V$ and $u_1, u_2 \in N(v)$, 
add $\sum_{\sigma_{u_1} \in \pi_{(u_1, v)}^{-1}(\sigma_v)} w_{(u_1, \sigma_{u_1})} = \sum_{\sigma_{u_2} \in \pi_{(u_2, v)}^{-1}(\sigma_v)} w_{({u_2}, \sigma_{u_2})}$.
This equality ``checks'' the Label Cover constraints $\pi_{(u_1, v)}$ and $\pi_{(u_2, v)}$.
\end{itemize}
Clearly, in the completeness case $\bw^c$ satisfies all constraints except the non-positivity constraints 
for the $k$ non-zero coordinates.
(It was argued in~\cite{AroraBSS97} that any halfspace must violate many more constraints in the soundness case.)
%The intuition in the soundness case is that, 
%if we select the weights of the samples appropriately, any halfspace that incurs ``small'' error 
%may only violate non-positivity constraints, and each violation corresponds to setting 
%a label of the Label Cover solution. Without going into further details, 
%this allows us to ``decode'' a solution to Label Cover.
%
Observe that this reduction does not yield any margin: $\bw^c$ does \emph{not} classify any sample with a positive margin. 
Nonetheless,~\cite{DiakonikolasKM19} adapts this reduction to work with a small margin $\gamma^* > 0$ 
by adding/subtracting appropriate ``slack'' from each constraint. 
For example, the first type of constraint is changed to 
$w_{(u, \sigma_u)} \leq \gamma^*$.
This gives the desired margin $\gamma^*$ in the completeness case. 
However, for the soundness analysis to work, it is crucial that 
$\gamma^* \leq O(1/k)$, 
as otherwise the constraints can be trivially satisfied\footnote{Note that $\bw = \bzero$ satisfies 
the constraints with margin $\gamma^* - 1/k$, which is 
$(1 - o(1))\gamma^*$ if $\gamma^* = \omega(1/k)$.} by $\bw = \bzero$. 
As such, the above reduction does \emph{not} work for us, 
since we would like a margin $\gamma^* = \Omega(1/\sqrt{k})$. 
In fact, this also holds for all known reductions, which are ``local'' in nature and possess 
similar characteristics. Roughly speaking, each linear constraint of these reductions 
involves only a constant number of terms that are intended to be set to $O(1/k)$, 
which means that we cannot hope to get a margin more than $O(1/k)$.

\paragraph{Our Approach: Beyond Local Reductions.} 
With the preceding discussion in mind, our reduction has to be ``non-local'', i.e., 
each linear constraint has to involve many of the non-zero coordinates. 
Specifically, for each subset $V^j$, we will check all the Label Cover constraints 
involving $v \in V^j$ at once. To formalize this goal, we will require 
the following definition.
\begin{definition} \label{def:projection-matrix}
Let $\cL = (U, V = V_1 \cup \cdots \cup V_t, E, \Sigma_U, \Sigma_V, \{\pi_e\}_{e \in E})$ be a decomposable Label Cover instance. 
For any $j \in [t]$, let $\Pi^j \in \mathbb{R}^{(V \times \Sigma_V) \times (U \times \Sigma_U)}$ be defined as
\begin{align*}
\Pi^j_{(v, \sigma_v), (u, \sigma_u)} =
\begin{cases}
1 & \text{ if } v = v^j(u) \text{ and } \pi_{(u, v)}(\sigma_u) = \sigma_v, \\
0 & \text{ otherwise}.
\end{cases}
\end{align*}
\end{definition}

We set $d = |U| \cdot |\Sigma_U|$ and our intended solution $\bw^c$ in the completeness case is the same as described in the previous reduction. 
For simplicity, suppose that, in the soundness case, 
we pick $\phi^s$ that does \emph{not} weakly cover any $v \in V$ and set 
$w^s_{(u, \sigma_u)} = \bone[\sigma_u = \phi^s(u)]/k$. 
Our simplified task then becomes:
\emph{Design $\cD$ such that $\err_{\gamma}^{\cD}(\bw^c) \ll \err_{(1 - \nu)\gamma}^{\cD}(\bw^s)$, 
where $\gamma = \Omega(1/\sqrt{k})$, $\nu > 0$ is a constant.} 

Our choice of $\cD$ is based on two observations. 
The first is a structural difference between $\bw^c (\Pi^j)^T$ and $\bw^s (\Pi^j)^T$. 
Suppose that the constraint graph has right degree $\Delta$. 
Since $\phi^c$ covers all $v \in V$, $\Pi^j$ ``projects'' 
the non-zeros coordinates $w^c_{(u, \phi^c(u))}$ for all $u \in N(v)$ 
to the same coordinate $(v, \sigma_v)$, for some $\sigma_v \in \Sigma_V$, 
resulting in the value of $\Delta / k$ in this coordinate. On the other hand, 
since $\phi^s$ does not even weakly cover any right vertex, 
all the non-zero coordinates get maps by $\Pi^j$ to different coordinates, 
resulting in the vector $\bw^s (\Pi^j)^T$ having $k$ non-zero coordinates, 
each having value $1/k$.

To summarize, we have:
$\bw^c (\Pi^j)^T$ has $k / \Delta$ non-zero coordinates, each of value $\Delta/k$. 
On the other hand, $\bw^s (\Pi^j)^T$ has $k$ non-zero coordinates, each of value $1/k$.

Our second observation is the following: 
suppose that $\bu$ is a vector with $T$ non-zero coordinates, 
each of value $1/T$. If we take a random $\pm 1$ vector $\bs$, then %the inner product 
$\left<\bu, \bs\right>$ is simply $1/T$ times a sum of $T$ i.i.d. Rademacher random variables. 
Recall a well-known version of the central limit theorem (e.g.,~\cite{Berry41,Esseen42}): 
as $T \to \infty$, $1/\sqrt{T}$ times a sum of $T$ i.i.d. Rademacher r.v.s converges in distribution to the normal distribution. 
This implies that $\lim_{T \to \infty} \Pr[\left<\bu, \bs\right> \geq 1/\sqrt{T}] = \Phi(1).$

For simplicity, let us ignore the limit for the moment and assume that $\Pr[\left<\bu, \bs\right> \geq 1/\sqrt{T}] = \Phi(1)$. We can now specify 
the desired distribution $\cD$: Pick $\bs$ uniformly at random from $\{\pm 1\}^{V \times \Sigma_V}$ 
and then let the sample be $\bs\Pi^j$ with label $+1$. By the above two observations, 
$\bw^c$ will be correctly classified with margin 
$\gamma^* = \sqrt{\Delta / k} = \Omega(1/\sqrt{k})$ with probability $\Phi(1)$. 
Furthermore, in the soundness case, $\bw^s$ can only get the same error 
with margin (roughly) $\sqrt{1/k} = \gamma^* / \sqrt{\Delta}$. 
Intuitively, for $\Delta > 1$, this means that we get a gap of $\Omega(1/\sqrt{k})$ 
in the margins between the two cases, as desired. This concludes our informal proof overview.

\subsection{The Reduction}
\label{subsec:reduction}

Having stated the rough main ideas above, we next formalize the reduction. To facilitate this, we define the following additional notations:

\begin{definition}
Let $\cL = (U, V = V_1 \cup \cdots \cup V_t, E, \Sigma_U, \Sigma_V, \{\pi_e\}_{e \in E})$ be a decomposable Label Cover instance. 
For any $j \in [t]$, let $\hPi^j \in \mathbb{R}^{(U \times \Sigma_V) \times (U \times \Sigma_U)}$ be such that 
\begin{align*}
\hPi^j_{(u', \sigma_v), (u, \sigma_u)} = 
\begin{cases}
1 & \text{ if } u' = u \text{ and } \pi_{(u, v^j(u))}(\sigma_u) = \sigma_v, \\
0 & \text{ otherwise}.
\end{cases}
\end{align*}
Moreover, let $\tPi^j \in \mathbb{R}^{(V \times \Sigma_V) \times (U \times \Sigma_V)}$ be such that 
\begin{align*}
\tPi^j_{(v, \sigma'_v), (u, \sigma_v)} = 
\begin{cases}
1 & \text{ if } v = v^j(u) \text{ and } \sigma'_v = \sigma_v \\
0 & \text{ otherwise}.
\end{cases}
\end{align*}
 Observe that $\Pi^j = \tPi^j \cdot \hPi^j$ (where $\Pi^j$ is as in Definition~\ref{def:projection-matrix}).
\end{definition}

Our full reduction is present in Figure~\ref{fig:reduction} below. Before we specify the choice of parameters, let us make a few remarks. First, we note that the distribution described in the previous section corresponds to Step~\ref{sample:label-cover} in the reduction. The other steps of the reductions are included to handle certain technical details we had glossed over previously. In particular, the following are the two main additional technical issues we have to deal with here. 
\begin{itemize}
\item \emph{(Non-Uniformity of Weights)} In the intuitive argument above, we assume that, 
in the soundness case, we only consider $\bw^s$ such that 
$\sum_{\sigma_u \in \Sigma_U} w_{(u, \sigma_u)}^s = 1/k$. However, 
this need not be true in general, and we have to create new samples to (approximately) enforce 
such a condition. Specifically, for every subset $T \subseteq U$, we add a constraint 
that $\sum_{u \in T} \sum_{\sigma_u \in \Sigma_U} w_{(u, \sigma_u)} \geq |T|/k - \gamma^*$. 
This corresponds to Step~\ref{sample:mass-lower} in Figure~\ref{fig:reduction}.

Note that the term $- \gamma^*$ on the right hand side above is necessary to ensure that, 
in the completeness case, we still have a margin of $\gamma^*$. 
Unfortunately, this also leaves the possibility of, e.g., some vertex $u \in U$ has 
as much as $\gamma^*$ extra ``mass''. For technical reasons, it turns out that 
we have to make sure that these extra ``masses'' do not contribute to too much of $\|\bw(\Pi^j)^T\|_2^2$. 
To do so, we add additional constraints on $\bw(\hPi^j)^T$ to bound its norm. 
Such a constraint is of the form: If we pick a subset $S$ of at most $\ell$ coordinates, 
then their sum must be at most $|S|/k + \gamma^*$ (and at least $-\gamma^*$). 
These corresponds to Steps~\ref{sample:exceed-l2-norm} and~\ref{sample:neg-l2-norm} 
in Figure~\ref{fig:reduction}.

\item \emph{(Constant Coordinate)} Finally, similar to previous works, 
we cannot have ``constants'' in our linear constraints. Rather, we need to add 
a coordinate $\star$ with the intention that $\bw_{\star} = 1/2$, 
and replace the constants in the previous step by $\bw_{\star}$. 
Note here that we need two additional constraints (Steps~\ref{sample:constant-lower} 
and~\ref{sample:constant-upper} in Figure~\ref{fig:reduction}) to ensure that 
$\bw_{\star}$ has to be roughly $1/2$.
\end{itemize} 

\begin{figure}[h!]
\begin{framed}

\textbf{Input:} Decomposable Label Cover instance $\cL = (U, V = V_1 \cup \dots \cup V_t, E, \Sigma_U, \Sigma_V, \{\pi_e\}_{e \in E})$.

\textbf{Parameters:} $q, \gamma^* \in (0, 1), \ell \in \N$. 

\textbf{Output:} Oracle $\cO$ that draws a sample from a distribution $\cD$ on $\bB_{\infty}^{|U| \cdot |\Sigma_U| + 1} \times \{\pm 1\}$. \\

For notational convenience, we associate each coordinate of $(|U|\cdot|\Sigma_U| + 1)$-dimensional samples with an element from $(U \times \Sigma_U) \cup \{\star\}$. The oracle $\cO$ draws a sample as follows:
\begin{enumerate}
\item With probability $0.25$, output the sample $2\gamma^* \cdot \be_{\star}$ with label +1. \label{sample:constant-lower}
\item With probability $0.25$, output the sample $2\gamma^* \cdot \be_{U \times \Sigma_U}$ with label +1. \label{sample:constant-upper}
\item With probability $0.25$, pick a random subset $T \subseteq U$ and output the sample $\be_{T \times \Sigma_U} - \left(\frac{|T|}{k} - 2\gamma^*\right) \be_{\star}$ with label +1. \label{sample:mass-lower}
\item With probability $0.25$, draw $j$ uniformly at random from $[t]$. Then, do the following:
\begin{enumerate}
\item With probability $0.5(1 - q)$, randomly pick a subset $S \subseteq U \times \Sigma_V$ of size at most $\ell$. Output the labeled sample $((\frac{|S|}{k} + 2\gamma^*) \be_{\star} - \be_S \hPi^j, +1)$. \label{sample:exceed-l2-norm}
\item With probability $0.5(1 - q)$, randomly pick a subset $S \subseteq U \times \Sigma_V$ of size at most $\ell$. Then, output $(2\gamma^* \be_{\star} + \be_S \hPi^j, +1)$. \label{sample:neg-l2-norm}
\item With probability $q$, sample $\bs$ uniformly at random from $\{\pm 1\}^{V \times \Sigma_V}$ and, output $(\bs \Pi^j, +1)$. \label{sample:label-cover}
\end{enumerate}
\end{enumerate}
\end{framed}
\caption{Hardness Reduction from Label Cover to $L_{\infty}$-margin Halfspace Learning.} \label{fig:reduction}
\end{figure}

The parameters of our reduction are set as follows:
\begin{itemize}
\item $C$ and $m_0$ are as in Lemma~\ref{lem:anti-concen},
\item $\Delta = \lceil 10^4 / C^2 \rceil$,
\item $\gamma^* = 0.5 C \sqrt{\Delta/k}$,
\item $k_0 = m_0 \Delta$,
\item $\delta = (0.1 / \Delta)^4$,
\item $\ell = \lceil\delta\sqrt{k}\rceil$,
\item $q = 0.001 / n^{\ell}$ (where $n$ is as defined is Theorem~\ref{thm:label-cover-hardness}),
\item $\eps^* = 0.6 (0.25 q)$,
\item $\mu = \frac{0.01}{\Delta(\Delta - 1)}$.
\end{itemize}

It is easy to see that the oracle can draw a sample in polynomial time. Furthermore, $\eps^* = 0.001 / n^{\ell} \geq n^{-O(\sqrt{k})}$ and $\gamma^* \geq \Omega(1/\sqrt{k})$, as desired. Hence, we are only left to prove the completeness and the soundness of the reduction, which we will do next.

\subsection{Completeness}
\label{subsec:completeness}

Suppose that the Label Cover instance $\cL$ is satisfiable, i.e., that there exists a labeling $\phi^*$ that covers all right vertices. Let $\bw^*$ be such that $w^{*}_{\star} = 1/2$ and
\begin{align*}
w^*_{(u, \sigma)} =
\begin{cases}
\frac{1}{2k} & \text{ if } \sigma = \phi^*(u), \\
0 & \text{ otherwise}
\end{cases}
\end{align*}
for all $u \in U, \sigma \in \Sigma_U$. It is simple to check that the samples generated in Steps~\ref{sample:constant-lower},~\ref{sample:constant-upper},~\ref{sample:mass-lower},~\ref{sample:exceed-l2-norm} and~\ref{sample:neg-l2-norm} are all correctly labeled with margin $\gamma^*$.

Hence, we are left with computing the probability that the samples generated in Step~\ref{sample:label-cover} are violated. To do this, first notice that, for every $j \in [t], v \in V^j, \sigma_v \in \Sigma_V$, we have
\begin{align*}
(\bw^* (\Pi^j)^T)_{(v, \sigma_v)} &= \sum_{(u, \sigma_u) \in U \times \Sigma_U \atop v^j(u) = v, \pi_{(u, v)}(\sigma_u) = \sigma_v} w^*_{(u, \sigma_u)} \\
(\text{From definition of } w^*) &= \frac{1}{2k} \left|\{u \in N(v) \mid \pi_{(u, v)}(\phi^*(u)) = \sigma_v\}\right|.
\end{align*}
Now since every $v \in V^j$ is covered by $\phi^*$, there exists a unique $\sigma_v$ such that $\pi_{(u, v)}(\phi^*(u)) = \sigma_v$ for all $u \in N(v)$. As a result, $\bw^* (\Pi^j)^T$ has $|V^j| = k/\Delta$ coordinates exactly equal to $\Delta \cdot \frac{1}{2k} = \frac{\Delta}{2k}$, and the remaining coordinates are equal to zero. Recall that, for the samples in Step~\ref{sample:label-cover}, $\bs$ is a random $\{\pm 1\}$ vector. Thus, $\left<\bw^*, \bs \Pi^j\right> = \left<\bw^*(\Pi^j)^T, \bs\right>$ has the same distribution as $\frac{\Delta}{2k}$ times a sum of $k/\Delta$ i.i.d. Rademacher random variables. By Lemma~\ref{lem:anti-concen}, we can conclude that $\Pr_{\bs}[\left<\bw^*, \bs \Pi^j\right> \geq 0.5 C\sqrt{\Delta/k}] \geq 0.4$. Since we set $\gamma^* = 0.5 C\sqrt{\Delta/k}$, this implies that $\bw^*$ correctly classifies (at least) $0.4$ fraction of the samples from Step~\ref{sample:label-cover}. Hence, we have
\begin{align*}
\err^{\cD}_{\gamma}(\bw^*) \leq 0.6 \cdot (0.25q) = \eps^* \;,
\end{align*}
as desired.

\subsection{Soundness}
\label{subsec:soundness}

We will prove the soundness contrapositively. For this purpose, suppose that there is a halfspace $\bw \in \bB_1^d$ such that $\err^{\cD}_{(1 - \delta)\gamma}(\bw) \leq 1.6 \eps^* = 0.96 (0.25q)$. We will show that there exists an assignments $\phi'$ with $\wval(\phi') \geq \mu$.

\subsubsection{Some Simple Bounds}

We start by proving a few observations/lemmas that will be useful in the subsequent steps.

First, observe that every distinct sample from Steps~\ref{sample:constant-lower},~\ref{sample:constant-upper},~\ref{sample:mass-lower},~\ref{sample:exceed-l2-norm} and~\ref{sample:neg-l2-norm} has probability mass (in $\cD$) at least $\frac{0.125(1 - q)}{n^{\ell}} > q > 1.6 \eps^*$. Since we assume that $\err^{\cD}_{(1 - \delta)\gamma}(\bw) \leq 1.6 \eps^*$, it must be the case that all these examples are correctly classified by $\bw$ with margin at least $(1 - \delta)\gamma^*$:

\begin{observation} \label{obs:correctly-classify}
$\bw$ correctly classifies all samples in Steps~\ref{sample:constant-lower},~\ref{sample:constant-upper},~\ref{sample:mass-lower},~\ref{sample:exceed-l2-norm} and~\ref{sample:neg-l2-norm} with margin $(1 - \delta)\gamma^*$.
\end{observation}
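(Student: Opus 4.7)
The plan is to formalize the one-line pigeonhole argument sketched in the sentence preceding the observation. First I would verify a uniform probability-mass lower bound of $\frac{0.125(1-q)}{n^{\ell}}$ for each distinct labeled sample $(\bx, y)$ produced by the five listed steps. Steps~\ref{sample:constant-lower} and~\ref{sample:constant-upper} output deterministic samples, each of mass exactly $0.25$. Step~\ref{sample:mass-lower} draws $T \subseteq U$ uniformly from $2^{U}$, so each distinct sample has mass $0.25 \cdot 2^{-|U|}$, which is at least $\frac{0.125(1-q)}{n^{\ell}}$ under the reduction's parameter regime. Steps~\ref{sample:exceed-l2-norm} and~\ref{sample:neg-l2-norm} each draw $S \subseteq U \times \Sigma_V$ uniformly among the at most $n^{\ell}$ subsets of size at most $\ell$, so each distinct sample has mass at least $\frac{0.25 \cdot 0.5 (1 - q)}{n^{\ell}} = \frac{0.125(1-q)}{n^{\ell}}$.

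I would then chain this bound with the parameter identities $q = 0.001/n^{\ell}$ and $\eps^* = 0.6 \cdot (0.25 q) = 0.15\, q$ to obtain
\[
\Pr_{\cD}[(\bx,y)] \;\ge\; \frac{0.125(1-q)}{n^{\ell}} \;>\; q \;>\; 0.24\, q \;=\; 1.6\, \eps^*,
\]
where the first strict inequality uses $0.125(1-q) > 0.001 = q \cdot n^{\ell}$ (valid since $q$ is tiny), and the second is immediate from $\eps^* = 0.15\, q$.

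Finally I would invoke the contrapositive: the standing assumption $\err^{\cD}_{(1 - \delta)\gamma^*}(\bw) \le 1.6\,\eps^*$ upper-bounds the aggregate $\cD$-mass of samples that $\bw$ misclassifies at margin $(1 - \delta)\gamma^*$ by $1.6\,\eps^*$. If any single sample from the listed steps were misclassified at this margin, its individual mass --- already shown to strictly exceed $1.6\,\eps^*$ --- would alone violate this bound, a contradiction. Hence every sample generated by Steps~\ref{sample:constant-lower},~\ref{sample:constant-upper},~\ref{sample:mass-lower},~\ref{sample:exceed-l2-norm}, and~\ref{sample:neg-l2-norm} is correctly classified by $\bw$ with margin at least $(1-\delta)\gamma^*$, which is precisely the observation. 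The only substantive step is the case analysis of mass bounds --- in particular confirming the Step~\ref{sample:mass-lower} inequality $2^{-|U|} \cdot n^{\ell} \ge \Omega(1-q)$ under the chosen parameters; everything else is direct arithmetic.
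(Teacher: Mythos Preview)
Your argument follows the paper's exactly: the paper's entire justification is the sentence immediately preceding the observation, and you have faithfully expanded the case analysis it sketches.

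Two small gaps are worth flagging, both of which are already present in the paper's one-line argument rather than introduced by your expansion. First, in Steps~\ref{sample:exceed-l2-norm} and~\ref{sample:neg-l2-norm} the oracle first draws $j$ uniformly from $[t]$ \emph{before} drawing $S$, so the mass of each distinct sample carries an extra factor of $1/t$ that your computation $\frac{0.25\cdot 0.5(1-q)}{n^{\ell}}$ omits. Second, the Step~\ref{sample:mass-lower} inequality you correctly single out as the ``substantive step'' rearranges to $n^{\ell} \gtrsim 2^{k}$; with $\ell = \lceil\delta\sqrt{k}\rceil$ for the tiny constant $\delta$ chosen in the reduction, this would force $n \ge 2^{\Omega(\sqrt{k}/\delta)}$, which does \emph{not} follow from the stated parameters. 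Your hedge (``confirming \ldots\ under the chosen parameters'') is honest, but the inequality is in fact not guaranteed as written. Since both issues are loosenesses inherited from the paper and can be patched by adjusting the definition of $q$, your proposal is an accurate reconstruction of the intended proof.
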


Throughout the remainder of this section, we will use the following notations:
\begin{definition}
For every $u \in U$, let $M_u$ denote $\sum_{\sigma \in \Sigma_u} |w_{(u, \sigma)}|$. Then, let $U_{\text{small}}$ denote $\{u \in U \mid M_u \leq 1/k\}$ and $U_{\text{large}}$ denote $U \setminus U_{\text{small}}$.
\end{definition}

The next observation, which follows almost immediately from Observation~\ref{obs:correctly-classify}, is that the value of the ``constant coordinate'' $w_{\star}$ is roughly $1/2$ (as we had in the completeness case) and that the sum of the absolute values of the negative coordinates is quite small.

\begin{observation} \label{obs:constant-and-non-neg}
The following holds:
\begin{enumerate}
\item (Constant Coordinate Value) $w_{\star} \in [0.5(1 - \delta), 0.5(1 + \delta)]$.
\item (Negative Coordinate Value) $\sum_{j \in (U \times \Sigma_U) \cup \{\star\} \atop w_j < 0} |w_j| \leq \delta$.
\end{enumerate}
\end{observation}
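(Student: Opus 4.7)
The plan is to use Observation~\ref{obs:correctly-classify} on the two single-coordinate samples produced by Steps~\ref{sample:constant-lower} and~\ref{sample:constant-upper}, combined with the norm constraint $\bw \in \bB_1^d$ (since $q=1$ in the $L_\infty$ case). The Step~\ref{sample:constant-lower} sample $(2\gamma^*\be_\star, +1)$ immediately yields $2\gamma^* w_\star \geq (1-\delta)\gamma^*$, hence $w_\star \geq 0.5(1-\delta)$. For the upper bound on $w_\star$, the Step~\ref{sample:constant-upper} sample $(2\gamma^* \be_{U \times \Sigma_U}, +1)$ gives $\sum_{(u,\sigma)\in U\times\Sigma_U} w_{(u,\sigma)} \geq 0.5(1-\delta)$, and since this sum is dominated by $\sum_{(u,\sigma)} |w_{(u,\sigma)}|$, the $L_1$ bound $\|\bw\|_1 \le 1$ forces $|w_\star| \le 1 - 0.5(1-\delta) = 0.5(1+\delta)$, which completes Part~1.

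For Part~2, the key idea is to compare two bookkeeping identities on the coordinates indexed by $U\times\Sigma_U$. Write $P := \sum_{(u,\sigma): w_{(u,\sigma)}>0} w_{(u,\sigma)}$ and $N := \sum_{(u,\sigma): w_{(u,\sigma)}<0} |w_{(u,\sigma)}|$. From Step~\ref{sample:constant-upper} we have $P - N \ge 0.5(1-\delta)$, and from $\|\bw\|_1 \le 1$ combined with the already-established lower bound $w_\star \ge 0.5(1-\delta)$ we get $P + N \le 1 - 0.5(1-\delta) = 0.5(1+\delta)$. Subtracting the two inequalities yields $2N \le \delta$, i.e. $N \le \delta/2$. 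Since Part~1 already shows $w_\star > 0$, the coordinate $\star$ contributes nothing to the negative sum, so $\sum_{j: w_j<0} |w_j| = N \le \delta/2 \le \delta$, as required.

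The proof is entirely routine manipulation of linear inequalities; I do not foresee a real obstacle. The only thing to be careful about is the order of deductions — one must establish $w_\star \ge 0.5(1-\delta)$ first in order to use it, together with $\|\bw\|_1 \le 1$, as the upper bound on $P+N$ in Part~2. Both parts rely only on the two single-coordinate samples from Steps~\ref{sample:constant-lower} and~\ref{sample:constant-upper}; the remaining (and more intricate) samples from Steps~\ref{sample:mass-lower}--\ref{sample:label-cover} are not needed at this stage and will presumably be invoked in subsequent lemmas of the soundness analysis.
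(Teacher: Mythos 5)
Your proof is correct and follows essentially the same route as the paper's: both parts use only the Step~\ref{sample:constant-lower} and Step~\ref{sample:constant-upper} samples together with the $L_1$ norm bound $\|\bw\|_1 \leq 1$, and your $P \pm N$ bookkeeping in Part~2 is algebraically identical to the paper's identity $\sum_{j: w_j < 0}|w_j| = \tfrac12(\|\bw\|_1 - a - w_\star)$.
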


\begin{proof}
\begin{enumerate}
\item Since $\bw$ correctly classifies the sample from Step~\ref{sample:constant-lower} with margin $(1 - \delta)\gamma^*$, we have $2\gamma^* w_{\star} > (1 - \delta)\gamma^*$. This implies that $w_{\star} \geq 0.5(1 - \delta)$. 

Let $a = \left<\bw, \be_{U \times \Sigma_U}\right>$. Similarly, from $\bw$ correctly classifies the sample from Step~\ref{sample:constant-upper} with margin $(1 - \delta)\gamma^*$, we have $a \geq 0.5(1 - \delta)$. Furthermore, observe that
\begin{align} \label{eq:norm-ineq}
a + w_{\star} \leq \|\bw\|_1 \leq 1.
\end{align}
As a result, we have $w_{\star} \leq 0.5(1 + \delta)$ as desired.
\item Since $w_{\star} > 0$, we may rearrange the desired term as
\begin{align*}
\sum_{j \in (U \times \Sigma_U) \cup \{\star\} \atop w_j < 0} |w_j| &= \frac{1}{2} \left(\|\bw\|_1 - a - w_{\star}\right) \\
&\leq \frac{1}{2}\left(1 - 0.5(1 - \delta) - 0.5(1 - \delta)\right) \\
&< \delta,
\end{align*}
where the first inequality follows from $a, w^* \geq 0.5(1 - \delta)$ that we had shown above. \qedhere
\end{enumerate}
\end{proof}

Another bound we will use is that $U_{\text{large}}$ is quite small, and the sum of absolute values of the coordinates correspond to $U_{\text{large}}$ is also quite small.

\begin{observation}[Bounds on $U_{\text{large}}$] \label{obs:large-mass}
The following holds:
\begin{enumerate}
\item (Size Bound) $|U_{\text{large}}| \leq 2\delta k$.
\item (Mass Bound) $\sum_{u \in U_{\text{large}}} M_u \leq 2\delta$.
\end{enumerate}
\end{observation}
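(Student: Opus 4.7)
My plan is to derive both the size bound and the mass bound from a single constraint: the mass-lower sample in Step~\ref{sample:mass-lower} applied with $T = U_{\text{small}}$, combined with the $\ell_1$-budget of $\bw$ and Observation~\ref{obs:constant-and-non-neg}. For brevity in this sketch, let $W_u := \sum_{\sigma \in \Sigma_U} w_{(u,\sigma)}$, so that $M_u \geq W_u$ with equality when $\bw$ has no negative coordinates at $u$. From $\|\bw\|_1 \leq 1$ and $w_\star \geq 0.5(1-\delta)$ (Observation~\ref{obs:constant-and-non-neg}), we immediately get the global bound $\sum_{u \in U} M_u \leq 1 - w_\star \leq 0.5(1+\delta)$.

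The next step is a warm-up crude size bound: from $M_u > 1/k$ for $u \in U_{\text{large}}$ and the global bound above, $|U_{\text{large}}|/k \leq 0.5(1+\delta)$, so $|U_{\text{small}}| \geq 0.4 k$. Since $2\gamma^* = C\sqrt{\Delta/k} = o(1)$ for $k \geq k_0$, this gives $|U_{\text{small}}|/k > 2\gamma^*$, which I will need to make a later sign argument go through. Now invoking Observation~\ref{obs:correctly-classify} on the Step~\ref{sample:mass-lower} sample with $T = U_{\text{small}}$,
\[
\sum_{u \in U_{\text{small}}} W_u - \left(\frac{|U_{\text{small}}|}{k} - 2\gamma^*\right) w_\star \;\geq\; (1-\delta)\gamma^*.
\]
Because the coefficient of $w_\star$ on the left is positive, I can use the lower bound $w_\star \geq 0.5(1-\delta)$ to simplify the right-hand side, and the $\gamma^*$ terms cancel cleanly to yield $\sum_{u \in U_{\text{small}}} W_u \geq 0.5(1-\delta)\,|U_{\text{small}}|/k$.

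Combining $M_u \geq W_u$ on $U_{\text{small}}$ with the global bound $\sum_{u} M_u \leq 0.5(1+\delta)$, I obtain the central inequality
\[
\sum_{u \in U_{\text{large}}} M_u \;\leq\; 0.5(1+\delta) - 0.5(1-\delta)\frac{|U_{\text{small}}|}{k} \;=\; \delta + 0.5(1-\delta)\frac{|U_{\text{large}}|}{k}.
\]
For the size bound, I pair this with $\sum_{u \in U_{\text{large}}} M_u \geq |U_{\text{large}}|/k$ (since $M_u > 1/k$ on $U_{\text{large}}$) to get $|U_{\text{large}}|/k \cdot (0.5 + 0.5\delta) \leq \delta$, hence $|U_{\text{large}}| \leq \frac{2\delta}{1+\delta}\, k \leq 2\delta k$. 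For the mass bound, I plug this size bound back into the central inequality: $\sum_{u \in U_{\text{large}}} M_u \leq \delta + 0.5(1-\delta) \cdot 2\delta \leq 2\delta$.

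The one place requiring care is the sign of $|U_{\text{small}}|/k - 2\gamma^*$, since without positivity I would have to use the upper bound on $w_\star$ and pick up $\Theta(\delta \gamma^*)$ error, which would destroy the clean cancellation and worsen the constant. The crude size bound at the start circumvents this, so no real obstacle remains.
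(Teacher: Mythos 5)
Your argument is correct and uses exactly the same ingredients as the paper: the Step~\ref{sample:mass-lower} constraint with $T = U_{\text{small}}$, the $\ell_1$-budget $\|\bw\|_1 \leq 1$, and the lower bound $w_\star \geq 0.5(1-\delta)$ from Observation~\ref{obs:constant-and-non-neg}. You even arrive at the identical central inequality $\sum_{u \in U_{\text{large}}} M_u \leq \delta + 0.5(1-\delta)|U_{\text{large}}|/k$ and close it out in the same two-step way (size bound first, then plug back for mass). The one place you diverge is an avoidable detour: you bound $\sum_{u \in U_{\text{small}}} W_u$ directly, which forces you to establish the sign of the coefficient $|U_{\text{small}}|/k - 2\gamma^*$, hence your warm-up crude size bound (which, to be fully rigorous, also needs a remark that one may take $m_0$ large enough so $C/\sqrt{m_0} < 0.4$). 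The paper instead first moves \emph{all} $w_\star$ terms to one side, producing a coefficient $2 - |U_{\text{large}}|/k - 2\gamma^*$, which is trivially positive since $|U_{\text{large}}| \leq k$ and $2\gamma^* < 1$; this removes the need for any preliminary sign argument. So: same approach, same conclusion, slightly less economical algebraic ordering on your part.
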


\begin{proof}
To prove the desired bounds, first notice that, since $\bw$ correctly classifies the sample in Step~\ref{sample:mass-lower} with $T = U_{\text{small}}$ with margin $(1 - \delta) \gamma^*$, we must have
\begin{align*}
\left<\bw, \be_{U_{\text{small}} \times \Sigma_U}\right> \geq \left(\frac{|U_{\text{small}}|}{k} - 2\gamma^*\right) w_{\star} + (1 - \delta) \gamma^*.
\end{align*}
Now, observe that the term on the left hand side is at most $\sum_{u \in U_{\text{small}}} M_u$ which, from $\|\bw\|_1 \leq 1$, is in turn at most $1 - w_{\star} - \sum_{u \in U_{\text{large}}} M_u$. Combining these, we get
\begin{align*}
1 - w_{\star} - \sum_{u \in U_{\text{large}}} M_u \geq \left(\frac{|U_{\text{small}}|}{k} - 2\gamma^*\right) w_{\star} + (1 - \delta) \gamma^* = \left(1 - \frac{|U_{\text{large}}|}{k} - 2\gamma^*\right) w_{\star} + (1 - \delta) \gamma^*
\end{align*}
Recall from Observation~\ref{obs:constant-and-non-neg} that $w_{\star} \geq 0.5(1 - \delta)$. Plugging this into the above, we have
\begin{align}
\sum_{u \in U_{\text{large}}} M_u 
&\leq 1 - \left(2 - \frac{|U_{\text{large}}|}{k} - 2\gamma^*\right) \cdot 0.5(1 - \delta) - (1 - \delta)\gamma^* \nonumber \\
&= 1 - \left(2 - \frac{|U_{\text{large}}|}{k}\right) \cdot 0.5(1 - \delta) \nonumber \\
&\leq \delta + \frac{0.5|U_{\text{large}}|}{k} \;. \label{eq:mass-bound-intermediate}
\end{align}
\begin{enumerate}
\item Subtracting $\frac{0.5 |U_{\text{large}}|}{k}$ from both sides, we have
\begin{align*}
\sum_{u \in U_{\text{large}}} \left(M_u - \frac{0.5}{k}\right) \leq \delta.
\end{align*}
By definition, $M_u > 1/k$ for all $u \in U_{\text{large}}$. As a result, we have $|U_{\text{large}}| \leq 2\delta k$, as desired.
\item Plugging the bound on $|U_{\text{large}}|$ back into~\eqref{eq:mass-bound-intermediate}, we get the claimed bound on $\sum_{u \in U_{\text{large}}} M_u$. \qedhere
\end{enumerate}
\end{proof}

\subsubsection{Identifying a ``Nice'' Halfspace}

We will now convert $\bw$ into a ``nicer'' halfspace, i.e., one without negative and large coordinates. It will be much more convenient to deal with such a nice halfspace when we ``decode'' back a labeling later in this section.

The ``nice'' halfspace is quite simple: we just zero out all coordinates $w_{(u, \sigma)}$, where $u \in U_{\text{large}}$. More formally, let $\bhw \in \R^{|U| \cdot |\Sigma_U|}$ be such that
\begin{align*}
\hw_{(u, \sigma)} =
\begin{cases}
w_{(u, \sigma)} & u \in U_{\text{small}},  \\
0 & u \in U_{\text{large}},
\end{cases}
\end{align*}
for all $u \in U$ and $\sigma \in \Sigma_U$.

%A straightforward but useful observation is that $\bhw$ has small $\ell_2$ norm:

%\begin{observation} \label{obs:nice-norm-bound}
%$\|\bhw\|_2^2 \leq 1/k$.
%\end{observation}

%\begin{proof}
%From the definition of $U_{\text{small}}$, it immediately implies that every entry of $\bhw$ is at most $1/k$. As a result, we have $\|\bhw\|_2^2 \leq \|\bhw\|_1 \|\bhw\|_{\infty} \leq \|\bw\|_1 \cdot (1 / k) \leq  1/k$ as desired.
%\end{proof}

The main lemma needed in our analysis is that, for each $j \in [t]$, $\bhw (\Pi^j)^T$ preserves most of the $L_2$ norm compared to the original $\bw (\Pi^j)^T$.

\begin{lemma}[Nice Halfspace Preserves Most of $L_2$ Norm] \label{lem:variance-preserved}
For every $j \in [t]$, we have 
\begin{align} \label{eq:variance-preserved}
\|\bhw(\Pi^j)^T\|_2^2 \geq \frac{\|\bw (\Pi^j)^T\|_2^2}{2} - \frac{\sqrt[4]{\delta}}{k}.
\end{align}
\end{lemma}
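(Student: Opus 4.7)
}
The natural approach is to write $\bw = \bhw + \bw'$, where $\bw' := \bw - \bhw$ is supported on $U_{\text{large}} \times \Sigma_U$, and apply the elementary inequality $\|a+b\|_2^2 \leq 2\|a\|_2^2 + 2\|b\|_2^2$ to the decomposition $\bw(\Pi^j)^T = \bhw(\Pi^j)^T + \bw'(\Pi^j)^T$. This reduces \eqref{eq:variance-preserved} to the residual estimate $\|\bw'(\Pi^j)^T\|_2^2 \leq \sqrt[4]{\delta}/k$. Using the factorization $\Pi^j = \tPi^j \hPi^j$, I will write $\bw'(\Pi^j)^T = \bx'(\tPi^j)^T$, where $\bx' := \bw'(\hPi^j)^T$ is supported on $U_{\text{large}} \times \Sigma_V$. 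Since the right-degree of $\cL$ equals $\Delta$, each row of $\tPi^j$ contains at most $\Delta$ ones, so a standard Cauchy--Schwarz gives $\|\bw'(\Pi^j)^T\|_2^2 \leq \Delta \|\bx'\|_2^2$. The whole task then boils down to showing $\|\bx'\|_2^2 = O(\delta/k)$.

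Two ingredients will drive the estimate of $\|\bx'\|_2^2$. First, because $\{x'_{(u,\sigma_v)}\}_{\sigma_v}$ refines $\{w_{(u,\sigma_u)}\}_{\sigma_u}$ into disjoint partial sums, one has $\sum_{\sigma_v} |x'_{(u,\sigma_v)}| \leq M_u$, and Observation~\ref{obs:large-mass} yields the global bound $\sum_{(u,\sigma_v)} |x'_{(u,\sigma_v)}| \leq 2\delta$. Second, the ``slack'' samples in Steps~\ref{sample:exceed-l2-norm} and~\ref{sample:neg-l2-norm}, combined with Observation~\ref{obs:correctly-classify} and the sandwich $w_\star \in [0.5(1-\delta), 0.5(1+\delta)]$ from Observation~\ref{obs:constant-and-non-neg}, imply the partial-sum bound
\[
-O(\delta\gamma^*) \;\leq\; \sum_{(u,\sigma_v)\in S} x'_{(u,\sigma_v)} \;\leq\; \frac{|S|(1+\delta)}{2k} + O(\delta\gamma^*)
\]
for every $S \subseteq U_{\text{large}} \times \Sigma_V$ with $|S| \leq \ell$.

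To bound $\|\bx'\|_2^2$ I will split $\bx'$ into its positive and negative parts and, for the positive part, pick $P_{\text{top}}$ to be the $\min(\ell,|P|)$ entries of largest value. Setting $|S|=\ell$ in the partial-sum bound gives $\sum_{P_{\text{top}}} x'_{(u,\sigma_v)} \leq \ell/(2k) + O(\delta\gamma^*) = O(\delta/\sqrt{k})$ once I plug in $\ell = \lceil\delta\sqrt{k}\rceil$ and $\gamma^* = \Theta(1/\sqrt{k})$; setting $|S|=1$ gives $\max_{P_{\text{top}}} x'_{(u,\sigma_v)} \leq 1/(2k) + O(\delta\gamma^*) = O(\delta/\sqrt{k})$. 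Then $\sum y^2 \leq (\max y)(\sum y)$ for nonnegative $y$ bounds the contribution of $P_{\text{top}}$ by $O(\delta^2/k)$. Every positive entry outside $P_{\text{top}}$ is at most $\min_{P_{\text{top}}} x' \leq (\sum_{P_{\text{top}}} x')/\ell = O(1/k)$, so another application of the same inequality together with the global $L_1$ bound gives a tail contribution of $O(1/k) \cdot 2\delta = O(\delta/k)$. The negative part is handled symmetrically, using the lower partial-sum bound $-O(\delta\gamma^*)$. Summing: $\|\bx'\|_2^2 = O(\delta/k)$, so $\|\bw'(\Pi^j)^T\|_2^2 \leq \Delta \cdot O(\delta/k)$. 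Because $\delta = (0.1/\Delta)^4$, one has $\Delta\delta = 10^{-4}/\Delta^3 \ll 0.1/\Delta = \sqrt[4]{\delta}$, and the bound $\|\bw'(\Pi^j)^T\|_2^2 \leq \sqrt[4]{\delta}/k$ follows.

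The main technical obstacle I anticipate is the constant-chasing. A naive one-shot estimate $\|\bx'\|_2^2 \leq (\max|x'|)\sum|x'| \leq O(\delta/\sqrt{k}) \cdot 2\delta = O(\delta^2/\sqrt{k})$ becomes $O(\Delta\delta^2/\sqrt{k})$ after multiplying by $\Delta$, which grows with $k$ and is therefore too weak against $\sqrt[4]{\delta}/k$. The split into the top $\ell$ entries (where the wider partial-sum bound applies) and the tail (where every entry is $O(1/k)$ because it is at most the average of the top $\ell$) is exactly what recovers the extra $1/\sqrt{k}$ factor. The parameter $\ell = \lceil\delta\sqrt{k}\rceil$ is calibrated so that $\delta\gamma^*/\ell = O(1/k)$, which is precisely what makes the tail bound go through.
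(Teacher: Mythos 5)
Your proof is correct and takes essentially the same route as the paper's: the same decomposition $\bw = \bhw + \bw'$, the same factorization $\Pi^j = \tPi^j\hPi^j$ with a factor-$\Delta$ loss via Cauchy--Schwarz (the paper phrases this as $\|\tPi^j\|_2 \leq \sqrt{\Delta}$ by Hölder), the same positive/negative split of $\bw'(\hPi^j)^T$ controlled by the slack samples of Steps~\ref{sample:exceed-l2-norm},~\ref{sample:neg-l2-norm} plus the $L_1$ mass bound from Observation~\ref{obs:large-mass}, and the same closing via $\|a+b\|_2^2 \leq 2\|a\|_2^2 + 2\|b\|_2^2$ and the parameter relation $\Delta\sqrt{\delta} \leq 0.1\sqrt[4]{\delta}$. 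The only cosmetic difference is that you bound the top-$\ell$ contribution with $\sum y^2 \leq (\max y)(\sum y)$, while the paper sums the sorted entrywise estimate $b^{\geq 0}_{(u^j,\sigma^j)} \leq \frac{1}{k} + \frac{2\delta\gamma^*}{j}$; both exploit the same key observation (which you correctly restrict to $S \subseteq U_{\text{large}} \times \Sigma_V$, where $\bw(\hPi^j)^T$ and $\bw'(\hPi^j)^T$ coincide) and give the same order of magnitude.
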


\begin{proof}
For convenience, let $\bd = \bw - \bhw$ and $\bb = \bv(\hPi^j)^T$. The majority of this proof is spent on bounding $\|\bb\|_2^2$. To do this, let us define several new notations:
\begin{itemize}
\item Let $C_{> 0} = |\{(u, \sigma) \in U \times \Sigma_V \mid b_{(u, \sigma)} > 0\}|$ and $C_{< 0} = |\{(u, \sigma) \in U \times \Sigma_V \mid b_{(u, \sigma)} < 0\}|$.
\item Let $\bb^{\geq 0} \in \R^{U \times \Sigma_V}$ be defined by
\begin{align*}
b^{\geq 0}_{(u, \sigma)} = \max\{0, b_{(u, \sigma)}\}
\end{align*} for all $(u, \sigma) \in U \times \Sigma_V$. Furthermore, let $\bb^{< 0} = \bb - \bb^{\geq 0}$.
\end{itemize}

Observe that $\|\hPi^j\|_1 \leq 1$, because each column has exactly a single entry equal to one and the remaining entries equal to zero. As a result, we have
\begin{align} \label{eq:l1-norm-b}
\|\bb\|_1 = \|\bv(\hPi^j)^T\|_1 \leq \|\bd\|_1 = \sum_{u \in U_{large}} M_u \leq 2\delta \;,
\end{align}
where the last inequality follows from Observation~\ref{obs:large-mass}.

Since $\bb = \bb^{\geq 0} + \bb^{< 0}$, we may bound $\|\bb^{\geq 0}\|_2, \|\bb^{< 0}\|_2$ separately, starting with the former.

\paragraph{Bounding $\|\bb^{\geq 0}\|_2$.} Let us sort the coordinates of $\bb^{\geq 0}$ from largest to smallest entries as $b^{\geq 0}_{(u^1, \sigma^1)}, \dots,$ $b^{\geq 0}_{(u^{|U| \times |\Sigma_V|}, \sigma^{|U| \times |\Sigma_V|})}$ (tie broken arbitrarily). For every $j \leq \min\{C_{> 0}, \ell\}$, consider the sample from Step~\ref{sample:exceed-l2-norm} when $S = \{(u^1, \sigma^1), \dots, (u^j, \sigma^j)\}$. Since $\bw$ correctly classifies this sample with margin $(1 - \delta)\gamma^*$, we have
\begin{align*}
(1 - \delta)\gamma^* &\leq \left<\bw, \left(\frac{j}{k} + 2\gamma^*\right) \be_{\star} - \be_S \hPi^j\right> \\
&= \left(\frac{j}{k} + 2\gamma^*\right) w_{\star} - \bw(\hPi^j)^T(\be_S)^T \\
&= \left(\frac{j}{k} + 2\gamma^*\right) w_{\star} - \left(\sum_{i \in [j]} (\bw(\hPi^j)^T)_{(u^i, \sigma^i)}\right) \\
\text{(Observation~\ref{obs:constant-and-non-neg})} &\leq \left(\frac{j}{k} + 2\gamma^*\right)\cdot 0.5(1 + \delta) - \left(\sum_{i \in [j]} (\bw(\hPi^j)^T)_{(u^i, \sigma^i)}\right) \\
&= \left(\frac{j}{k} + 2\gamma^*\right)\cdot 0.5(1 + \delta) - \left(\sum_{i \in [j]} \left(b^{\geq 0}_{(u^i, \sigma^i)} + (\bhw(\hPi^j)^T)_{(u^i, \sigma^i)}\right)\right) \\
&= \left(\frac{j}{k} + 2\gamma^*\right)\cdot 0.5(1 + \delta) - \left(\sum_{i \in [j]} b^{\geq 0}_{(u^i, \sigma^i)}\right),
\end{align*}
where the last equality follows from the fact that, for every $i \leq C_{> 0}$, we must have $u^i \in U_{\text{large}}$ as otherwise $b^{\geq 0}_{(u^i, \sigma^i)}$ would have been equal to zero.

Rearranging the above inequality, we have
\begin{align*}
\left(\sum_{i \in [j]} b^{\geq 0}_{(u^i, \sigma^i)}\right) \leq \frac{0.5(1 + \delta)j}{k} + 2\delta\gamma^* \leq \frac{j}{k} + 2\delta\gamma^*.
\end{align*}
Recall from our assumption that $b^{\geq 0}_{(u^1, \sigma^1)} \geq \cdots \geq b^{\geq 0}_{(u^j, \sigma^j)}$. Plugging this into the above, we get
\begin{align} \label{eq:term-by-term-bound-positive}
b^{\geq 0}_{(u^j, \sigma^j)} \leq \frac{1}{k} + \frac{2\delta \gamma^*}{j}.
\end{align}
Notice that while we have only derived the above inequality for $j \leq \min\{C_{> 0}, \ell\}$, it also extends to all $j \leq \ell$ because $b^{\geq 0}_{(u^j, \sigma^j)} = 0$ for all $j > C_{> 0}$.

We can use this to bound $\|\bb^{\geq 0}\|_2^2$ as follows.
\begin{align*}
\|\bb^{\geq 0}\|_2^2 &= \sum_{j=1}^{|U| \cdot |\Sigma_V|} \left(b^{\geq 0}_{(u^j, \sigma^j)}\right)^2 \\
&= \sum_{j < \ell} \left(b^{\geq 0}_{(u^j, \sigma^j)}\right)^2 + \sum_{j \geq \ell} \left(b^{\geq 0}_{(u^j, \sigma^j)}\right)^2 \\
&\leq \sum_{j < \ell} \left(b^{\geq 0}_{(u^j, \sigma^j)}\right)^2 + b^{\geq 0}_{(u^\ell, \sigma^\ell)} \cdot \|\bb^{\geq 0}\|_1 \\
&\overset{\eqref{eq:term-by-term-bound-positive}}{\leq} \sum_{j < \ell} \left(\frac{1}{k} + \frac{2\delta\gamma^*}{j}\right)^2 + \left(\frac{1}{k} + \frac{2\delta \gamma^*}{\ell}\right) \cdot \|\bb\|_1 \\
&\overset{\eqref{eq:l1-norm-b}}{\leq} \sum_{j < \ell} 2\left(\frac{1}{k^2} + \frac{1}{j^2} \cdot 4\delta^2(\gamma^*)^2\right) + \left(\frac{1}{k} + \frac{2\delta\gamma^*}{\ell}\right) \cdot 2\delta \\
&\leq \frac{2(\ell - 1)}{k^2} + \frac{\pi^2}{6} \cdot 8\delta^2(\gamma^*)^2 + \frac{2\delta}{k} + \frac{4\delta^2\gamma^*}{\ell} \\
(\text{From our choice of } \ell \text{ and } \delta\gamma^* \leq 0.1\sqrt{\delta/k}) &\leq \frac{2\delta}{k^{1.5}} + \frac{\delta}{k} + \frac{2\delta}{k} + \frac{\sqrt{\delta}}{k} \\
&\leq \frac{2\sqrt{\delta}}{k}.
\end{align*}

\paragraph{Bounding $\|\bb^{< 0}\|_2$.} This is very similar (and in fact slightly simpler) to how we bound $\|\bb^{\geq 0}\|_2$ above; we repeat the argument here for completeness. Let us first sort the coordinates of $\bb^{< 0}$ from smallest to largest entries as $b^{< 0}_{(u^{-1}, \sigma^{-1})}, \dots, b^{< 0}_{(u^{-|U| \times |\Sigma_V|}, \sigma^{-|U| \times |\Sigma_V|})}$ (tie broken arbitrarily). For every $j \leq \min\{C_{< 0}, \ell\}$, consider the sample from Step~\ref{sample:neg-l2-norm} when $S = \{(u^{-1}, \sigma^{-1}), \dots, (u^{-j}, \sigma^{-j})\}$. Since $\bw$ correctly classifies this sample with margin $(1 - \delta)\gamma^*$, we have
\begin{align*}
(1 - \delta)\gamma^* &\leq \left<\bw, 2\gamma^* \be_{\star} + \be_S \hPi^j\right> \\
&= 2\gamma^* \cdot w_{\star} + \bb(\be_S)^T \\
\text{(Observation~\ref{obs:constant-and-non-neg})} &\leq 2\gamma^* \cdot 0.5(1 + \delta) - \left(\sum_{i \in [j]} |b^{< 0}_{(u^{-i}, \sigma^{-i})}|\right).
\end{align*}
Rearranging the above inequality, we have
\begin{align*}
\left(\sum_{i \in [j]} |b^{< 0}_{(u^{-i}, \sigma^{-i})}|\right) \leq 2\delta\gamma^*.
\end{align*}
Recall from our assumption that $|b^{< 0}_{(u^{-1}, \sigma^{-1})}| \geq \cdots \geq |b^{< 0}_{(u^{-j}, \sigma^{-j})}|$. Plugging this into the above, we get
\begin{align} \label{eq:term-by-term-bound-negative}
|b^{< 0}_{(u^{-j}, \sigma^{-j})}| \leq \frac{2\delta \gamma^*}{j}.
\end{align}
Similar to the previous case, although we have derived the above inequality for $j \leq \min\{C_{< 0}, \ell\}$, it also holds for all $j \leq \ell$ simply because $b^{< 0}_{(u^{-j}, \sigma^{-j})} = 0$ for all $j > C_{< 0}$.

We can use this to bound $\|\bb^{< 0}\|_2^2$ as follows.
\begin{align*}
\|\bb^{< 0}\|_2^2 &= \sum_{j=1}^{|U| \cdot |\Sigma_V|} \left(b^{< 0}_{(u^{-j}, \sigma^{-j})}\right)^2 \\
&= \sum_{j < \ell} \left(b^{< 0}_{(u^{-j}, \sigma^{-j})}\right)^2 + \sum_{j \geq \ell} \left(b^{< 0}_{(u^{-j}, \sigma^{-j})}\right)^2 \\
&\leq \sum_{j < \ell} \left(b^{< 0}_{(u^{-j}, \sigma^{-j})}\right)^2 + |b^{< 0}_{(u^{-\ell}, \sigma^{-\ell})}| \cdot \|\bb^{< 0}\|_1 \\
&\overset{\eqref{eq:term-by-term-bound-negative}}{\leq} \sum_{j < \ell} \left(\frac{2\delta\gamma^*}{j}\right)^2 + \left(\frac{2\delta \gamma^*}{\ell}\right) \cdot \|\bb\|_1 \\
&\overset{\eqref{eq:l1-norm-b}}{\leq} \frac{\pi^2}{6} \cdot 4\delta^2(\gamma^*)^2 + \left(\frac{2\delta\gamma^*}{\ell}\right) \cdot 2\delta \\
(\text{From our choice of } \ell \text{ and } \delta\gamma^* \leq 0.1\sqrt{\delta/k}) &\leq \frac{\delta}{k} + \frac{\sqrt{\delta}}{k} \\
&\leq \frac{2\sqrt{\delta}}{k}.
\end{align*}

Using our bounds on $\|\bb^{\geq 0}\|_2^2, \|\bb^{< 0}\|_2^2$, we can easily bound $\|\bb\|_2^2$ by
\begin{align} \label{eq:final-b-norm}
\|\bb\|_2^2 = \|\bb^{\geq 0}\|_2^2 + \|\bb^{< 0}\|_2^2 \leq \frac{4\sqrt{\delta}}{k}.
\end{align}

Next observe that $\|\tPi^j\|_1 = 1$, because each column has exactly a single entry equal to one and the remaining entries equal to zero. Furthermore, $\|\tPi^j\|_\infty = \Delta$ because each row has exactly $\Delta$ entries equal to one\footnote{For every row $(v, \sigma_v)$, these 1-entries are the entries $(u, \sigma_v)$ for all $u \in N(v)$.}. As a result, by Holder's inequality, we have $\|\tPi^j\|_2 \leq \sqrt{\|\tPi^j\|_1 \|\tPi^j\|_\infty} = \sqrt{\Delta}$. From this and from~\eqref{eq:final-b-norm}, we arrive at
\begin{align} \label{eq:b-proj-norm}
\frac{4\sqrt{\delta} \cdot \Delta}{k} \geq \|\bb(\tPi^j)^T\|_2^2 = \|\bv(\Pi^j)^T\|_2^2,
\end{align}
where the latter follows from our definition of $\bb$.

Thus, we have
\begin{align*}
\|\bw (\Pi^j)^T\|_2^2 = \|\bhw (\Pi^j)^T + \bv (\Pi^j)^T\|_2^2 \leq 2\|\bhw(\Pi^j)^T\|_2^2 + 2\|\bv(\Pi^j)^T\|_2^2 \overset{\eqref{eq:b-proj-norm}}{\leq}  2\|\bhw (\Pi^j)^T\|_2^2 + \frac{8\sqrt{\delta} \Delta}{k}.
\end{align*}
Finally, recall from our choice of parameter that $\sqrt{\delta} \Delta \leq 0.1 \sqrt[4]{\delta}$. This, together with the above inequality, implies the desired bound.
\end{proof}

\subsubsection{Decoding Label Cover Assignment}

We now arrive at the last part of the proof, where we show that there exists an assignment that weakly covers at least $\mu = \frac{0.01}{\Delta(\Delta - 1)}$ fraction of vertices in $V$, which completes our soundness proof.

\begin{lemma} 
There exists an assignment $\phi'$ of $\cL$ such that $\wval(\phi') \geq \mu$.
\end{lemma}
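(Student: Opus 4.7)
The plan is to push the error assumption through to a lower bound on $\sum_j \|\bhw(\Pi^j)^T\|_2^2$, expand this as a diagonal plus cross term, and interpret the cross term as a weak-coverage probability under a randomized decoding of $\bhw$. First, Observation~\ref{obs:correctly-classify} says that $\bw$ misclassifies no sample from Steps~\ref{sample:constant-lower}--\ref{sample:neg-l2-norm}, so the full error budget $1.6\eps^* = 0.96\cdot(0.25q)$ must come from Step~\ref{sample:label-cover}, which has total mass $0.25q$. Hence $\Pr_{j,\bs}[\langle\bw,\bs\Pi^j\rangle\geq(1-\delta)\gamma^*]\geq 0.04$, with $j\in[t]$ and $\bs\in\{\pm 1\}^{V\times\Sigma_V}$. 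Since $\langle\bw(\Pi^j)^T,\bs\rangle$ is a mean-zero Rademacher sum of variance $\|\bw(\Pi^j)^T\|_2^2$, a direct second-moment computation gives $\sum_j\|\bw(\Pi^j)^T\|_2^2\geq 0.04\cdot((1-\delta)\gamma^*)^2\cdot t = \Omega(t\Delta/k)$, and Lemma~\ref{lem:variance-preserved} transfers this to $\sum_j\|\bhw(\Pi^j)^T\|_2^2\geq\Omega(t\Delta/k)$ (the $\sqrt[4]{\delta}/k$ loss being negligible by our choice of $\delta$).

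Setting $A_u^v(\sigma_v):=\sum_{\sigma_u\in\pi_{(u,v)}^{-1}(\sigma_v)}\hw_{(u,\sigma_u)}$ and using that $V_1\cup\cdots\cup V_t$ partitions $V$, I split $\sum_j\|\bhw(\Pi^j)^T\|_2^2 = \sum_v\sum_{u\in N(v)}\|A_u^v\|_2^2 + \sum_v\sum_{u_1\neq u_2\in N(v)}\langle A_{u_1}^v,A_{u_2}^v\rangle$. For $u\in U_{\text{large}}$, $A_u^v\equiv 0$ by construction of $\bhw$; for $u\in U_{\text{small}}$, $\|A_u^v\|_2^2\leq\|A_u^v\|_\infty\|A_u^v\|_1\leq M_u^2\leq 1/k^2$. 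The diagonal is therefore at most $|V|\Delta/k^2 = t/k$, so for our $\Delta$ the cross term is still $\Omega(t\Delta/k)$.

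Next I decode $\bhw$ into a random assignment $\phi'$: for each $u\in U_{\text{small}}$ with $M_u>0$ set $\phi'(u)=\sigma$ with probability $|\hw_{(u,\sigma)}|/M_u$, and take any fixed label otherwise. Writing $p_{u,v}(\sigma_v):=\Pr_{\phi'}[\pi_{(u,v)}(\phi'(u))=\sigma_v]$, the pointwise bound $|A_u^v(\sigma_v)|\leq M_u\,p_{u,v}(\sigma_v)$ gives $\langle A_{u_1}^v,A_{u_2}^v\rangle\leq M_{u_1}M_{u_2}\,P_v(u_1,u_2)\leq(1/k^2)P_v(u_1,u_2)$, where $P_v(u_1,u_2):=\sum_{\sigma_v}p_{u_1,v}(\sigma_v)p_{u_2,v}(\sigma_v)$ is the probability (over $\phi'$) that the pair $(u_1,u_2)$ witnesses weak coverage of $v$. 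Combining with the cross-term lower bound and averaging over the $|V|\Delta(\Delta-1)=tk(\Delta-1)$ triples yields $\E_{v,u_1\neq u_2}[P_v(u_1,u_2)]\geq\Omega(1)$. Finally, since $W_v:=\sum_{u_1\neq u_2\in N(v)}\1[\pi_{(u_1,v)}(\phi'(u_1))=\pi_{(u_2,v)}(\phi'(u_2))]\in[0,\Delta(\Delta-1)]$, the Markov bound $\Pr_{\phi'}[v\text{ is weakly covered}]\geq\E[W_v]/(\Delta(\Delta-1))$ gives $\E_{\phi'}[\wval(\phi')]\geq\Omega(1)\geq\mu$, and the probabilistic method produces a deterministic $\phi'$ with $\wval(\phi')\geq\mu$.

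The main subtlety will be the sign issue: $\bhw$ can have negative entries, though only with total $L_1$ mass at most $\delta$ by Observation~\ref{obs:constant-and-non-neg}, so the step $\langle A_{u_1}^v,A_{u_2}^v\rangle\leq\sum_{\sigma_v}|A_{u_1}^v(\sigma_v)||A_{u_2}^v(\sigma_v)|$ is lossy in principle; the $\Omega(\Delta)$ gap between the cross term and the diagonal, however, leaves ample slack to absorb this loss.
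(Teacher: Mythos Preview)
Your proposal is correct and follows essentially the same approach as the paper's own proof: both push the error budget onto Step~\ref{sample:label-cover}, use the second-moment bound $\Pr_{\bs}[|\langle\bw(\Pi^j)^T,\bs\rangle|\geq a]\leq\|\bw(\Pi^j)^T\|_2^2/a^2$ to lower-bound $\|\bw(\Pi^j)^T\|_2^2$, transfer to $\bhw$ via Lemma~\ref{lem:variance-preserved}, split $\|\bhw(\Pi^j)^T\|_2^2$ into diagonal plus cross terms, bound the diagonal by $1/k$ per $j$, and upper-bound the cross term by a weak-coverage probability under exactly the same randomized decoding (label $u\in U_{\text{small}}$ proportionally to $|\hw_{(u,\sigma)}|$). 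The only cosmetic difference is that the paper first isolates a ``good'' set $J\subseteq[t]$ with $\Pr_{\bs}[\langle\bw,\bs\Pi^j\rangle\geq(1-\delta)\gamma^*]\geq 0.01$ and argues per $j\in J$, whereas you average over all $j$ from the start; the arithmetic and the sign handling (via $|A_u^v(\sigma_v)|\leq M_u\,p_{u,v}(\sigma_v)$) are otherwise identical.
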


\begin{proof}
We define a (random) assignment $\phi$ for $\cL$ as follows:
\begin{itemize}
\item For each $u \in U_{\text{small}}$, let $\phi(u)$ be a random element from $\Sigma_U$ where $\sigma_u \in \Sigma_U$ is selected with probability $\frac{|\hw_{(u, \sigma_u)}|}{\sum_{\sigma \in \Sigma_U} |\hw_{(u, \sigma)}|}$.
\item For each $u \in U_{\text{large}}$, let $\phi(u)$ be an arbitrary element in $\Sigma_U$.
\end{itemize}

We will now argue that $\E_{\phi}[\wval(\phi)] \geq \mu$. Since we assume that $\opt_{(1 - \delta)\gamma^*}^{\cD}(\bw) \leq 0.96(0.25 q)$, we have
\begin{align*}
0.96(0.25 q) &\geq \opt_{(1 - \delta)\gamma^*}^{\cD}(\bw) \\
&\geq (0.25 q) \Pr_{j \in [t], \bs \in \{\pm 1\}^{V \times \Sigma_V}}\left[\left<\bw, \bs\Pi^j\right> < (1 - \delta)\gamma^*\right] \;,
\end{align*}
where the second inequality is due to the error from the samples from Step~\ref{sample:label-cover}.

Let $J \subseteq [t]$ contain all $j \in [t]$ such that $\Pr_{\bs \in \{\pm 1\}^{V \times \Sigma_V}}\left[\left<\bw, \bs\Pi^j\right> < (1 - \delta)\gamma^*\right] < 0.99$. The above inequality implies that
\begin{align} \label{eq:num-good-j-bound}
\Pr_{j \in [t]}\left[j \in J\right] > 0.01.
\end{align}

Now, let us fix $j \in J$.
By definition of $J$, we have 
\begin{align*}
0.01 &\leq \Pr_{\bs \in \{\pm 1\}^{V \times \Sigma_V}}\left[\left<\bw, \bs\Pi^j\right> \geq (1 - \delta)\gamma^*\right] \\
&\leq \Pr_{\bs \in \{\pm 1\}^{V \times \Sigma_V}}\left[|\left<\bw, \bs\Pi^j\right>| \geq (1 - \delta)\gamma^*\right] \\
&= \Pr_{\bs \in \{\pm 1\}^{V \times \Sigma_V}}\left[|\left<\bw(\Pi^j)^T, \bs\right>|^2 \geq ((1 - \delta)\gamma^*)^2\right] \\
(\text{Markov's inequality}) &\leq \frac{\E_{\bs \in \{\pm 1\}^{V \times \Sigma_V}}[|\left<\bw(\Pi^j)^T, \bs\right>|^2]}{((1 - \delta)\gamma^*)^2} \\
&= \frac{\|\bw(\Pi^j)^T\|_2^2}{((1 - \delta)\gamma^*)^2}.
\end{align*}
As a result, we must have $\|\bw(\Pi^j)^T\|_2^2 \geq 0.01((1 - \delta)\gamma^*)^2$. We now apply Lemma~\ref{lem:variance-preserved}, which yields
\begin{align} \label{eq:good-j-norm-bound}
\|\bhw(\Pi^j)^T\|_2^2 \geq 0.005((1 - \delta)\gamma^*)^2 - \frac{\sqrt[4]{\delta}}{k} \geq \frac{2}{k} \;.
\end{align}

Using the definition of $\Pi^j$, we may now rewrite $\|\bhw(\Pi^j)^T\|_2^2$ as follows.
\begin{align}
&\|\bhw(\Pi^j)^T\|_2^2 \nonumber \\
&= \sum_{(v, \sigma_v) \in V \times \Sigma_V} ((\bhw(\Pi^j)^T)_{(v, \sigma_v)})^2 \nonumber \\
&= \sum_{(v, \sigma_v) \in V_j \times \Sigma_V} ((\bhw(\Pi^j)^T)_{(v, \sigma_v)})^2 \nonumber \\
&= \sum_{(v, \sigma_v) \in V_j \times \Sigma_V}\left(\sum_{u \in N(v), \sigma_u \in \pi_{(u, v)}^{-1}(\sigma_v)} \hw_{(u, \sigma_u)}\right)^2 \nonumber \\
&= \sum_{(v, \sigma_v) \in V_j \times \Sigma_V} \sum_{u \in N(v)}\left(\sum_{\sigma_u \in \pi_{(u, v)}^{-1}(\sigma_v)} \hw_{(u, \sigma_u)}\right)^2 \nonumber \\
& \qquad + \sum_{(v, \sigma_v) \in V_j \times \Sigma_V} \sum_{u, u' \in N(v) \atop u \ne u'}\left(\sum_{\sigma_u \in \pi_{(u, v)}^{-1}(\sigma_v)} \hw_{(u, \sigma_u)}\right)\left(\sum_{\sigma_{u'} \in \pi_{(u', v)}^{-1}(\sigma_v)} \hw_{(u', \sigma_{u'})}\right) \nonumber \\
&= \sum_{(v, \sigma_v) \in V_j \times \Sigma_V} \sum_{u \in N(v) \cap U_{\text{small}}}\left(\sum_{\sigma_u \in \pi_{(u, v)}^{-1}(\sigma_v)} \hw_{(u, \sigma_u)}\right)^2 \nonumber \\
& \qquad + \sum_{(v, \sigma_v) \in V_j \times \Sigma_V} \sum_{u, u' \in N(v) \cap U_{\text{small}} \atop u \ne u'}\left(\sum_{\sigma_u \in \pi_{(u, v)}^{-1}(\sigma_v)} \hw_{(u, \sigma_u)}\right)\left(\sum_{\sigma_{u'} \in \pi_{(u', v)}^{-1}(\sigma_v)} \hw_{(u', \sigma_{u'})}\right), \label{eq:separation-norm-intermediate}
\end{align}
where the last equality follows from the fact that $\hw_{(u, \sigma_u)} = 0$ for all $u \notin U_{\text{small}}$.

We will now bound the two terms in~\eqref{eq:separation-norm-intermediate} separately. For the first term, we have
\begin{align}
%\sum_{(v, \sigma_v) \in V_j \times \Sigma_V} \sum_{u \in N(v)}\left(\sum_{\sigma_u \in \pi_{(u, v)}^{-1}(\sigma_v)} \hw_{(u, \sigma_u)}\right)^2 \nonumber &= 
\sum_{(v, \sigma_v) \in V_j \times \Sigma_V} \sum_{u \in N(v) \cap U_{\text{small}}}\left(\sum_{\sigma_u \in \pi_{(u, v)}^{-1}(\sigma_v)} \hw_{(u, \sigma_u)}\right)^2 %\nonumber \\
&\leq \sum_{(v, \sigma_v) \in V_j \times \Sigma_V} \sum_{u \in N(v) \cap U_{\text{small}}}\left(\sum_{\sigma_u \in \pi_{(u, v)}^{-1}(\sigma_v)} |\hw_{(u, \sigma_u)}|\right)^2 \nonumber \\
&= \sum_{u \in U_{\text{small}}}\left(\sum_{\sigma_v \in \Sigma_V} \left(\sum_{\sigma_u \in \pi_{(u, v^j(u))}^{-1}(\sigma_v)} |\hw_{(u, \sigma_u)}|\right)^2\right) \nonumber \\
&\leq \sum_{u \in U_{\text{small}}}\left(\sum_{\sigma_u \in \Sigma_U} |\hw_{(u, \sigma_u)}|\right)^2 \nonumber \\
&= \sum_{u \in U_{\text{small}}}  M_u^2 \nonumber \\
&\leq \frac{1}{k} \;, \label{eq:separation-norm-intermediate-first-term}
\end{align}
where the last inequality follows from $M_u \leq 1/k$ for all $u \in U_{\text{small}}$ (by definition) and from $\sum_{u \in U_{\text{small}}} M_u \leq \|\bw\|_1 \leq 1$.

We now move on to bound the second term of~\eqref{eq:separation-norm-intermediate}. To do so, let us observe that, for every $u \in U_{\text{small}}, v \in N(u)$ and $\sigma_v \in \Sigma_V$, we have
\begin{align*}
\Pr_{\phi}[\pi_{(u, v)}(\phi(u)) = \sigma_v] &= \sum_{\sigma_u \in \pi^{-1}_{(u, v)}(\sigma_v)} \frac{|\hw_{(u, \sigma_u)}|}{M_u} \\
&\geq k \sum_{\sigma_u \in \pi^{-1}_{(u, v)}(\sigma_v)} |\hw_{(u, \sigma_u)}| \;.
\end{align*}
As a result, we have
\begin{align}
&\sum_{(v, \sigma_v) \in V_j \times \Sigma_V} \sum_{u, u' \in N(v) \cap U_{\text{small}} \atop u \ne u'}\left(\sum_{\sigma_u \in \pi_{(u, v)}^{-1}(\sigma_v)} \hw_{(u, \sigma_u)}\right)\left(\sum_{\sigma_{u'} \in \pi_{(u', v)}^{-1}(\sigma_v)} \hw_{(u', \sigma_{u'})}\right) \nonumber \\
&\leq \sum_{(v, \sigma_v) \in V_j \times \Sigma_V} \sum_{u, u' \in N(v) \cap U_{\text{small}} \atop u \ne u'} \frac{\Pr_{\phi}[\pi_{(u, v)}(\phi(u)) = \sigma_v]}{k} \cdot \frac{\Pr_{\phi}[\pi_{(u', v)}(\phi(u')) = \sigma_v]}{k} \nonumber \\
&= \frac{1}{k^2} \sum_{(v, \sigma_v) \in V_j \times \Sigma_V} \sum_{u, u' \in N(v) \cap U_{\text{small}} \atop u \ne u'} \Pr_{\phi}[\pi_{(u, v)}(\phi(u)) = \pi_{(u', v)}(\phi(u')) = \sigma_v] \nonumber \\
&= \frac{1}{k^2} \sum_{v \in V_j}  \sum_{u, u' \in N(v) \cap U_{\text{small}} \atop u \ne u'} \sum_{\sigma_v \in \Sigma_V} \Pr_{\phi}[\pi_{(u, v)}(\phi(u)) = \pi_{(u', v)}(\phi(u')) = \sigma_v] \nonumber \\
&= \frac{1}{k^2} \sum_{v \in V_j}  \sum_{u, u' \in N(v) \cap U_{\text{small}} \atop u \ne u'} \Pr_{\phi}[\pi_{(u, v)}(\phi(u)) = \pi_{(u', v)}(\phi(u'))] \nonumber \\
&\leq \frac{1}{k^2} \sum_{v \in V_j}  \sum_{u, u' \in N(v) \cap U_{\text{small}} \atop u \ne u'} \Pr_{\phi}[\phi \text{ weakly covers } v] \nonumber \\
&\leq \frac{\Delta(\Delta - 1)}{k^2} \sum_{v \in V_j} \Pr_{\phi}[\phi \text{ weakly covers } v] \;, \label{eq:separation-norm-intermediate-second-term}
\end{align}
where the last inequality follows from the fact that each $v \in V$ has degree $\Delta$.

Combining~\eqref{eq:good-j-norm-bound},~\eqref{eq:separation-norm-intermediate},~\eqref{eq:separation-norm-intermediate-first-term} and~\eqref{eq:separation-norm-intermediate-second-term}, we have
\begin{align*}
\sum_{v \in V_j} \Pr_{\phi}[\phi \text{ weakly covers } v] \geq \frac{k}{\Delta(\Delta - 1)} \;.
\end{align*}
By summing over all $j \in J$ and using the bound from~\eqref{eq:num-good-j-bound}, we have
\begin{align*}
0.01t \cdot \frac{k}{\Delta(\Delta - 1)} &\leq \sum_{j \in J} \sum_{v \in V_j} \Pr_{\phi}[\phi \text{ weakly covers } v] \\
&\leq \sum_{v \in V} \Pr_{\phi}[\phi \text{ weakly covers } v] \\
&= |V| \cdot \E_{\phi}[\wval(\phi)] \\
&\leq kt \cdot \E_{\phi}[\wval(\phi)] \;.
\end{align*}
Equivalently, this means that $\E_{\phi}[\wval(\phi)] \geq \mu$, which implies that there exists an assignment $\phi'$ of $\cL$ such that $\wval(\phi') \geq \mu$, as desired.
\end{proof}

\section{Conclusions and Open Problems}
\label{sec:open-q}
In this work, we studied the computational complexity of adversarially robust learning of halfspaces in the distribution-independent agnostic PAC model. We provided a simple proper learning algorithm for this problem and a nearly matching computational lower bound.
While proper learners are typically preferable due to their interpretability, the obvious open question is whether significantly faster non-proper learners are possible. We leave this as an interesting open problem. Another direction for future work is to understand the effect of distributional assumptions on the complexity of the problem and to explore the learnability of simple neural networks in this context. 

In addition to the broader open questions posed above, we list several concrete open questions below, regarding our lower bound (Theorem~\ref{thm:running-time-lower-bound}).
\begin{itemize}
\item As alluded to in Section~\ref{sec:open-q}, our proof can only rule out a margin gap $(\gamma, (1 - \nu)\gamma)$ when $\nu > 0$ is a small constant. An intriguing direction here is to extend our hardness to include a larger $\nu$, or conversely give a better algorithm for larger $\nu$. We remark that even the case of margin gap $(\gamma, 0)$ (i.e., $\nu = 1$) remains open for the $L_{\infty}$-margin setting. In this case, the learner only seeks a small misclassification error (without any margin). Note that~\cite{DiakonikolasKM19} gave hardness results that hold even when $\nu = 1$ in the setting of $L_2$-margin.

\item Our technical approach can rule out approximation ratio $\alpha$ of at most 2. The reason is that, the labeled samples (Step~\ref{sample:label-cover} in our reduction) that test the Label Cover constraints are still violated with probability at least 0.5 by the intended solution. As a result, any ``reasonable'' solution will achieve an approximation ratio of $2$. In contrast,~\cite{DiakonikolasKM19} can rule out any constant $\alpha$. Can our hardness be strengthened to also handle larger values of $\alpha$?

\item Finally, it may be interesting to attempt to prove our hardness result under a weaker assumption, specifically ETH. Note that this is open for both our $L_{\infty}$-margin setting and the $L_2$-margin setting in~\cite{DiakonikolasKM19}\footnote{In~\cite{DiakonikolasKM19}, the hardness result is stated under ETH but it is not asymptotically tight (as there is a $\gamma^{o(1)}$ factor in the exponent); their reduction only gives asymptotically tight hardness under Gap-ETH.}. This question is closely related to the general research direction of basing parameterized inapproximability results under ETH instead of Gap-ETH. There are some parameterized hardness of approximation results known under ETH (e.g.,~\cite{Marx13,ChenL19,SLM19,Lin19,BhattacharyyaBE19}), but a large number of questions remain open, including basing Theorem~\ref{thm:label-cover-hardness} on ETH instead of Gap-ETH, which would have given our hardness of $L_{\infty}$-margin learning under ETH. However, it might be possible to give a different proof for hardness of $L_{\infty}$-margin learning assuming ETH directly, without going through such a result as Theorem~\ref{thm:label-cover-hardness}.
\end{itemize}

\bibliographystyle{alpha}
\bibliography{refs}

\appendix

\section{Proof of Fact~\ref{fact:adversarial-robustness-vs-margin}}
\label{sec:adversarial-v-margin}

\begin{proof}[Proof of Fact~\ref{fact:adversarial-robustness-vs-margin}]
For convenience, let $\bw' = \frac{\bw}{\|\bw\|_q}$.
Consider any $(\bx, y) \in \R^d \times \{\pm 1\}$. We claim that $\sgn(\left<\bw', \bx\right> - \gamma) \ne y$ iff $\exists \bz \in \cU_{p, \gamma}(\bx), h_{\bw}(\bz) \ne y$. Below we only show this statement when $y = -1$. The case $y = 1$ follows analogously.

Suppose $y = -1$. Let us first prove the forward direction: if $\sgn(\left<\bw', \bx\right> - \gamma) \ne y = -1$, we have $\left<\bw', \bx\right> \geq \gamma$. Let $\bt \in \R^d$ be such that $t_i = \gamma \cdot \sgn(w'_i) \cdot |w'_i|^{q - 1}$. It is simple to verify that $\|\bt\|_p = \gamma$ and that $\left<\bw', \bt\right> = \gamma$. Consider $\bz = \bx - \bt \in \cU_{p, \gamma}(\bx)$. We have
\begin{align*}
\left<\bw', \bz\right> = \left<\bw', \bx\right> - \left<\bw', \bt\right> \geq 0.
\end{align*}
Thus, we have $h_{\bw}(\bz) = h_{\bw'}(\bz) = 1 \ne y$ as desired.

We will next prove the converse by contrapositive.
Suppose that $\sgn(\left<\bw', \bx\right> - \gamma) = y = -1$. Then, we have $\left<\bw', \bx\right> < -\gamma$ and, for any $\bz \in \cU_{p, \gamma}(\bx)$, we can derive
\begin{align*}
\left<\bw', \bz\right>
&\leq \left<\bw', \bx\right> + |\left<\bw', \bz - \bx\right>| \\
(\text{Holder's Inequality}) &< -\gamma + \|\bw'\|_q \|\bz - \bx\|_p \\
&< 0 \;,
\end{align*}
where the last inequality follows from $\|\bw'\|_q = 1$ and $\|\bz - \bx\|_p \leq \gamma$.  Hence, $h_{\bw}(\bz) = h_{\bw'}(\bz) = -1 = y$ as desired.

To summarize, so far we have shown that $\sgn(\left<\bw', \bx\right> - \gamma) \ne y$ iff $\exists \bz \in \cU_{p, \gamma}(\bx), h_{\bw}(\bz) \ne y$. As a result, we have
\begin{align*}
\mathcal{R}_{\mathcal{U}_{p,\gamma}}(h_{\bw}, \mathcal{D}) &= \Pr_{(\bx, y) \sim \cD}\left[\exists \bz \in \cU_{p, \gamma}(\bx), h_{\bw}(\bz) \ne y\right] \\
&= \Pr_{(\bx, y) \sim \cD}\left[\sgn(\left<\bw', \bx\right> - \gamma) \ne y\right] \\
&= \err_{\gamma}^{\cD}(\bw') \;. \qedhere
\end{align*}
\end{proof}

\section{On the Necessity of Bicriterion Approximation} \label{app:agnostic-hard}

In this section, we briefly argue that, when there is no margin gap (i.e., for $\nu = 0$), the learning problem we consider is computationally hard. In particular, we show the following hardness that, when $\nu = 0$ and\footnote{We remark that 0.5 is unimportant here and the reduction works for any $\gamma \leq 0.5$.} $\gamma = 0.5$, there is no $\poly(d/\eps)$-time learning algorithm for any constant approximation ratio $\alpha > 1$. Note that this result holds under the assumption $NP \nsubseteq RP$. If we further assume ETH, we can get a stronger lower bound of $2^{(d/\eps)^c}$ for some constant $c > 0$. This is in contrast to our main algorithmic result (Theorem~\ref{thm:learning-algo-main}) that, when $\nu, \gamma > 0$ and $\alpha > 1$ are constants, runs in polynomial (in $d/\eps$) time.

\begin{proposition}
For any constant $\alpha > 1$, assuming $NP \nsubseteq RP$, there is no proper 0-robust $\alpha$-agnostic learner for $L_{\infty}$-$0.5$-margin halfspaces in time $\poly(d/\eps)$. 
\end{proposition}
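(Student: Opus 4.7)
The strategy is a polynomial-time reduction from a known $\mathrm{NP}$-hard gap problem whose approximation threshold can be made arbitrarily small---for concreteness, the strong (parallel-repetition) version of Label Cover, for which, given any constant $\eta>0$, distinguishing $\val(\cL)=1$ from $\val(\cL)\le\eta$ is $\mathrm{NP}$-hard. From the Label Cover instance $\cL$ we construct an explicit distribution $\cD$ on $\bB_\infty^d\times\{\pm 1\}$ and constants $\beta_1<\beta_2$ with $\beta_2>\alpha\beta_1$ such that $\opt^\cD_{0.5}\le\beta_1$ in the completeness case and $\opt^\cD_{0.5}\ge\beta_2$ in the soundness case. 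Setting $\eps$ smaller than $(\beta_2-\alpha\beta_1)/3$, an $\alpha$-agnostic learner running in time $\poly(d/\eps)$ outputs $\bw$ with $\err^\cD_{0.5}(\bw)\le\alpha\beta_1+\eps$ in the YES case and $\err^\cD_{0.5}(\bw)\ge\beta_2$ in the NO case; estimating this empirical error on $\poly(1/\eps)$ fresh samples thus distinguishes the two cases in randomized polynomial time, contradicting $\mathrm{NP}\not\subseteq\mathrm{RP}$.

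\textbf{Reduction and analysis.} Because $\gamma=1/2$ is a large constant, the required reduction is much simpler than the ``non-local'' one in Figure~\ref{fig:reduction}: no Rademacher-sum gadget is needed and all samples can be taken from $\{-1,0,1\}^d$. We re-use the coordinate layout $(U\times\Sigma_U)\cup\{\star\}$ with the intended halfspace for an assignment $\phi:U\to\Sigma_U$ having $w_\star=1/2$ and $w_{(u,\phi(u))}=1/(2k)$. The distribution $\cD$ contains three families of positive samples at constant probability each: (i) the bias sample $2\gamma^*\be_\star$ (forcing $w_\star\approx 1/2$); (ii) per-subset consistency samples $\be_{T\times\Sigma_U}-\bigl(\tfrac{|T|}{k}-2\gamma^*\bigr)\be_\star$ (forcing $\sum_\sigma|w_{(u,\sigma)}|\approx 1/k$ and negligible negative mass for most $u$); and (iii) for each edge $(u,v)\in E$ and each $\sigma_v\in\Sigma_V$, an ``edge sample'' that is misclassified at margin $1/2$ unless the mass of $\bw$ on $\pi_{(u,v)}^{-1}(\sigma_v)$ agrees with the corresponding masses at every other neighbor of $v$.

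\textbf{Completeness, soundness, and main obstacle.} Completeness is a direct verification: for a satisfying $\phi$, every sample is classified with margin $\ge 1/2$, so $\opt^\cD_{0.5}=0$. Soundness mirrors Section~\ref{subsec:soundness}: observations analogous to Observations~\ref{obs:constant-and-non-neg} and~\ref{obs:large-mass} follow from families (i) and (ii) alone and show $w_\star\approx 1/2$, the $\ell_1$ mass of $\bw$ on each $u$ concentrated near $1/k$, and total negative mass $O(\beta_2)$. The rounded assignment $\phi'(u)\sim |w_{(u,\sigma)}|/M_u$ (for $u\in U_{\mathrm{small}}$, arbitrary otherwise) then satisfies $\E_{\phi'}[\val(\phi')]=\Omega(1)$, since any constant fraction of edge samples violated at margin $1/2$ translates to a constant fraction of Label Cover constraints being violated. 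The main obstacle is, as always, this soundness step---arguing that \emph{every} halfspace in $\bB_1^d$ admits such a rounding---but the constant margin $\gamma=1/2$ grants each edge sample $\Omega(1)$ probability mass, making the analysis routine and avoiding the delicate $L_2$-norm concentration argument of Lemma~\ref{lem:variance-preserved}.
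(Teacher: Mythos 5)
Your high-level plan---reduce from an NP-hard gap Label Cover problem, obtain a constant gap $\beta_2 > \alpha\beta_1$ in $\opt^{\cD}_{0.5}$ between the two cases, and invoke the learner with $\eps$ set to a suitable constant to derive a contradiction with $\mathrm{NP}\nsubseteq\mathrm{RP}$---is the same skeleton as the paper's. However, the route you take to the gap instance is genuinely different and, as written, has a real gap. The paper does not build a Label Cover gadget from scratch; instead it reduces \emph{from} the known hardness of no-margin agnostic halfspace learning (Theorem~\ref{thm:hardness-agnostic}, due to~\cite{AroraBSS97}) via a clean label-concatenation trick: map each labeled sample $(\bx,y)$ to $(\bx\circ y,y)$ in one extra dimension, and note that for a halfspace of the form $\bw=(0.5\tbw/\|\tbw\|_1)\circ 0.5$ one has $\sgn(\left<\bw,\bx\circ y\right>-0.5y)=\sgn(\left<\tbw,\bx\right>)$, so 0.5-margin classification of the new instance is \emph{exactly} 0-margin classification of the old one. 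The crucial structural fact this relies on is that in the completeness case of~\cite{AroraBSS97} the good halfspace $\tbw$ can be taken with all nonnegative coordinates, which makes the added ``anchor'' samples $((1,\dots,1,0),+1)$ correctly classified with margin $0.5$; these samples then force $w_{d+1}\le 1/2$ in the soundness analysis. You miss this reduction entirely.

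Your proposed alternative is to re-derive an \cite{AroraBSS97}-style Label Cover gadget with the $\star$ coordinate, and the claim that this is ``routine'' is where the argument breaks down. First, the edge samples are never specified; the statement ``misclassified at margin $1/2$ unless the mass of $\bw$ on $\pi_{(u,v)}^{-1}(\sigma_v)$ agrees with the corresponding masses at every other neighbor'' does not describe a sample point in $\bB_\infty^d$. Second, because the assignment coordinates each carry only $O(1/k)$ weight while the margin threshold is the constant $1/2$, any edge sample $\be_\star+\bx$ tests a condition of the form $\left<\bw,\bx\right>\ge 0.5-w_\star$, and the entire scheme hinges on driving $w_\star$ to \emph{exactly} $0.5$; you correctly note that the bias and total-mass samples (which can carry $\Omega(1)$ probability) do enforce $w_\star=0.5$ and $w_{(u,\sigma)}\ge 0$ exactly in the soundness case via the $\ell_1$-budget argument, but you do not observe that this is what's needed, and you do not then carry out the remaining soundness analysis. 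Third, saying the soundness ``mirrors Section~\ref{subsec:soundness}'' is inaccurate: that section's argument is built around the Rademacher gadget and the $L_2$-norm concentration of Lemma~\ref{lem:variance-preserved}, both of which you explicitly discard. Without the Rademacher gadget, what you would have to prove is essentially the soundness of the~\cite{AroraBSS97} construction itself (a random rounding argument over a constant-probability family of consistency constraints), which is a nontrivial undertaking and exactly the work the paper avoids by using Theorem~\ref{thm:hardness-agnostic} as a black box. So while the direction is plausible, as written it is not a proof: the gadget is unspecified and the soundness argument is asserted rather than given.
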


Similar to before (see, e.g., Section~\ref{sec:hardness}), the above result immediately follows from Lemma~\ref{lem:no-margin-gap-hardness} below. We will henceforth focus on the proof of this lemma.

\begin{lemma} \label{lem:no-margin-gap-hardness}
For any constant $\alpha > 1$, assuming $P \ne NP$, no $\poly(d/\eps)$-time algorithm can, given $\eps > 0$ and a multiset $S \subseteq \bB^d_{\infty} \times \{\pm 1\}$ of labeled samples, distinguish between:
\begin{itemize}
\item (Completeness) $\opt_{0.5}^S \leq \eps$.
\item (Soundness) $\opt_{0.5}^S > \alpha \cdot \eps$.
\end{itemize}
\end{lemma}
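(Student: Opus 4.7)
I plan to prove Lemma~\ref{lem:no-margin-gap-hardness} by reducing from Label Cover with an arbitrary constant completeness-soundness gap, which is NP-hard under $P \neq NP$ by combining the PCP theorem with Raz's parallel repetition. For a target constant $\alpha$, I would fix a small constant $\mu = \mu(\alpha) > 0$ and invoke Label Cover hardness distinguishing $\val(\cL) = 1$ from $\val(\cL) < \mu$. Given $\cL = (U, V, E, \Sigma_U, \Sigma_V, \{\pi_e\}_{e \in E})$ with $|U| = k$, I would build a multiset of labeled samples $S \subseteq \bB_\infty^d \times \{\pm 1\}$ with $d = |U|\cdot|\Sigma_U| + 1$, pick $\eps$ to be a small constant, and argue $\opt_{0.5}^S = 0 \leq \eps$ in the YES case and $\opt_{0.5}^S > \alpha\eps$ in the NO case.

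The construction follows the spirit of Section~\ref{sec:hardness} but is substantially simpler, since no margin gap is required. The intended halfspace in the completeness case is $\bw^* := \tfrac{1}{2}\be_\star + \tfrac{1}{2k}\sum_{u \in U} \be_{(u, \phi^*(u))}$, where $\phi^*$ is a satisfying labeling of $\cL$ and $\star$ is a bias coordinate; note $\|\bw^*\|_1 = 1$. I would include two kinds of samples: (i) \emph{structure samples}, such as $(\be_\star, +1)$ and $(\be_{U \times \Sigma_U}, +1)$, with enough copies that any halfspace satisfying all of them must, together with $\|\bw\|_1 \leq 1$, obey $w_\star = 1/2$, $w_{(u, \sigma)} \geq 0$ for all $(u, \sigma)$, and $\sum_{(u, \sigma)} w_{(u, \sigma)} = 1/2$; and (ii) for every $v \in V$, $\sigma_v \in \Sigma_V$, and pair $u \ne u' \in N(v)$, the pair of \emph{constraint samples} $(\be_\star \pm \be_{A_{u, v, \sigma_v}} \mp \be_{A_{u', v, \sigma_v}}, +1)$, where $A_{u, v, \sigma_v} := \{(u, \sigma_u) : \pi_{(u, v)}(\sigma_u) = \sigma_v\}$. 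A direct calculation shows $\bw^*$ gives inner product exactly $1/2$ on every sample, so margin $0.5$ is met, yielding $\opt_{0.5}^S = 0 \leq \eps$.

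For soundness, I would argue contrapositively: assume some $\bw$ has $\err_{0.5}^S(\bw) \leq \alpha\eps$. By including enough copies of each structure sample (say, a constant fraction of the total multiset) and choosing $\eps$ smaller than the relative weight of any single structure sample type, I can force $\bw$ to satisfy every structure sample, thereby pinning down its form as above. Then every unviolated constraint sample at $(v, u, u', \sigma_v)$ forces the projection masses $\mathrm{proj}_u(v, \sigma_v) := \sum_{\sigma_u \in \pi_{(u, v)}^{-1}(\sigma_v)} w_{(u, \sigma_u)}$ to agree between $u$ and $u'$. I would then extract a random labeling $\phi$ of $\cL$ by setting $\phi(u) = \sigma_u$ with probability $w_{(u, \sigma_u)} / \sum_{\sigma} w_{(u, \sigma)}$ (and arbitrarily if this denominator is zero), and show $\E_\phi[\val(\phi)] \geq \mu$, thereby contradicting $\val(\cL) < \mu$ and completing the proof.

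The main obstacle will be carrying out the soundness decoding quantitatively, i.e., showing the extracted labeling covers a constant fraction of right vertices in expectation even when the halfspace spreads mass across multiple labels per left vertex. This requires an accounting analogous to (though simpler than) the $L_2$-norm-preservation argument of Section~\ref{subsec:soundness}: if $\bw$ concentrates on a single label per vertex then $\phi$ is essentially deterministic and yields the Label Cover value directly, whereas if $\bw$ spreads its mass then many constraint samples must be violated, again bounding the error fraction from below. Because there is no margin gap to manage, I expect crude estimates to suffice, with the final quantitative bound on $\E_\phi[\val(\phi)]$ obtained via a straightforward second-moment computation on the projection masses.
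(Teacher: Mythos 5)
Your completeness claim $\opt_{0.5}^S = 0$ is already a red flag. In your construction every sample is labeled $+1$, so correct classification with margin $0.5$ is the non-strict linear inequality $\langle\bw,\bx\rangle \geq 0.5$, and together with the convex constraint $\|\bw\|_1 \leq 1$, deciding whether $\opt_{0.5}^S = 0$ is a polynomial-time LP feasibility check. A reduction from the Label Cover gap problem whose completeness case gives $\opt_{0.5}^S = 0$ and whose soundness case gives $\opt_{0.5}^S > 0$ would therefore solve Label Cover in polynomial time, contradicting $P \ne NP$, so no such reduction can be correct. The concrete failure is in the soundness decoding: your constraint samples enforce only the linear consistency conditions $w(A_{u,v,\sigma_v}) = w(A_{u',v,\sigma_v})$, which is an LP relaxation of Label Cover. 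For hard instances (e.g.\ when the projections have preimages of uniform size), a spread-out fractional assignment such as the uniform one $w_{(u,\sigma_u)} \propto 1$ satisfies every structure sample and every constraint sample exactly with margin $0.5$, yet your rounding decodes it to a random labeling whose expected value is on the order of $1/|\Sigma_V|$, far below any constant $\mu$. Your clause that ``if $\bw$ spreads its mass then many constraint samples must be violated'' is simply false, and no second-moment computation on the projection masses will repair it, because nothing in the construction penalizes spreading.

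The paper sidesteps this entirely by invoking the~\cite{AroraBSS97} hardness for margin-$0$ agnostic halfspace learning as a black box (Theorem~\ref{thm:hardness-agnostic}). Crucially that result has \emph{nonzero} completeness error $\teps>0$: the~\cite{AroraBSS97} construction includes non-positivity constraints $w_{(u,\sigma_u)} \leq 0$ which the intended halfspace deliberately violates on its $k$ nonzero coordinates, and this gap structure is exactly what rules out the LP-feasibility shortcut. Given that black box, the paper's reduction is a short lift rather than a re-derivation: append the label as a fresh coordinate, $(\bx,y) \mapsto (\bx\circ y, y)$, and take $\bw = (0.5\,\tbw/\|\tbw\|_1)\circ 0.5$. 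One checks $\sgn(\langle\bw,\bx\circ y\rangle - 0.5y)=y$ iff $\sgn(\langle\tbw,\bx\rangle)=y$, so margin-$0$ error translates verbatim into margin-$0.5$ error; a heavily repeated sample $((1,\dots,1,0),+1)$ then controls the degenerate case where the new coordinate exceeds $1/2$. If you want to argue directly from Label Cover, you would have to reproduce the gap structure of~\cite{AroraBSS97} (a nonzero completeness error paired with a matching soundness count), which is a substantially larger undertaking than the paper's two-paragraph proof.
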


To prove Lemma~\ref{lem:no-margin-gap-hardness}, we will use the following hardness for (no-margin) proper agnostic learning of halfspaces. Observe here that in the Completeness case, there is an extra promise that every coordinate of $w$ is non-negative; this follows from the construction of~\cite{AroraBSS97}.

\begin{theorem}[\cite{AroraBSS97}] \label{thm:hardness-agnostic}
For any constant $\alpha > 1$, assuming $P \ne NP$, no $\poly(\td/\teps)$-time algorithm can, given $\teps > 0$ and a multiset $\tS \subseteq \bB^{\td}_{\infty} \times \{\pm 1\}$ of labeled samples, distinguish between:
\begin{itemize}
\item (Completeness) There exists $\tbw \in \bB^{\td}_1$ where $\tilde{w}_i \geq 0$ for all $i \in [d]$ such that $\err_0^{\tS}(\tbw) \leq \teps$.
\item (Soundness) $\opt_0^{\tS} > \alpha \cdot \teps$.
\end{itemize}
\end{theorem}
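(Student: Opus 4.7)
The plan is to reduce from Theorem~\ref{thm:hardness-agnostic}. Given an instance $\tS \subseteq \bB^{\td}_\infty \times \{\pm 1\}$ and threshold $\teps$ for the no-margin problem, I introduce a new bias coordinate indexed $0$ and build $S \subseteq \bB^{\td+1}_\infty \times \{\pm 1\}$ as follows. For each $(\tilde{\bx}, y) \in \tS$ I add the \emph{primary} sample $((y, \tilde{\bx}), y)$ to $S$; in addition, I add $M_a := \lceil \alpha \teps |\tS| \rceil + 1$ copies of the \emph{auxiliary} sample $(\be_{[\td]}, +1)$, where $\be_{[\td]}$ is the indicator vector of the coordinates $\{1, \dots, \td\}$ (so its entry at coordinate $0$ is zero). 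I set $\eps := \teps |\tS|/|S|$; then $\eps = \Theta(\teps)$ and the entire reduction runs in $\poly(\td, 1/\teps)$ time, so $\poly(d/\eps) = \poly(\td/\teps)$ and any $\poly(d/\eps)$-time distinguisher for the margin-$0.5$ problem would yield a $\poly(\td/\teps)$-time distinguisher for Theorem~\ref{thm:hardness-agnostic}.

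For completeness, let $\tbw \in \bB^{\td}_1$ with $\tilde{w}_i \geq 0$ and $\err_0^{\tS}(\tbw) \leq \teps$ be as promised by Theorem~\ref{thm:hardness-agnostic}, and rescale $\tbw$ so that $\|\tbw\|_1 = \sum_j \tilde{w}_j = 1$ (this preserves both non-negativity and the no-margin error). Set $\bw^\dagger := (1/2, \tbw/2) \in \bB^{\td+1}_1$. A direct calculation gives $\left<\bw^\dagger, (y,\tilde{\bx})\right> - y \cdot (1/2) = (1/2)\left<\tbw, \tilde{\bx}\right>$, so each primary sample is margin-$0.5$-correctly classified by $\bw^\dagger$ iff $\tbw$ no-margin-correctly classifies $(\tilde{\bx}, y)$, yielding at most $\teps |\tS|$ primary errors; the auxiliary sample is margin-correct since $\left<\bw^\dagger, \be_{[\td]}\right> = 1/2$ and $\sgn(0) = +1$. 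The non-negativity of $\tbw$ is essential here: without it $\sum_j \tilde{w}_j$ could be strictly smaller than $\|\tbw\|_1 = 1$ and the auxiliary sample would fail. For soundness (by contrapositive), suppose $\bw = (w_0, \bw') \in \bB^{\td+1}_1$ satisfies $\err^S_{0.5}(\bw) \leq \alpha \eps$. The choice $M_a > \alpha\teps |\tS| = \alpha\eps |S|$ forces the auxiliary sample to be margin-correctly classified, hence $\sum_{j \in [\td]} w'_j \geq 1/2$, which combined with $|w_0| + \sum_j |w'_j| \leq 1$ yields $|w_0| \leq 1/2$. One then shows that every primary sample margin-correctly classified by $\bw$ is also no-margin-correctly classified by $\tbw := \bw' \in \bB^{\td}_1$, so $\err_0^{\tS}(\tbw)$ is at most (primary margin-mis count)/$|\tS| \leq \alpha\eps\cdot |S|/|\tS| = \alpha\teps$, contradicting the soundness of Theorem~\ref{thm:hardness-agnostic}.

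The main subtlety I anticipate is the asymmetry of $\sgn$ (which sets $\sgn(0) = +1$): to translate margin-correctness of $\bw$ into no-margin-correctness of $\bw'$, one needs a \emph{strict} inequality $\left<\bw', \tilde{\bx}\right> < 0$ when $y = -1$. The construction dodges this cleanly because margin-$0.5$-correctness of a $y = -1$ primary sample is exactly $\left<\bw, (-1, \tilde{\bx})\right> + 1/2 < 0$, i.e.\ the strict inequality $\left<\bw', \tilde{\bx}\right> < w_0 - 1/2$, and the auxiliary-enforced $w_0 \leq 1/2$ makes the right-hand side non-positive, giving $\left<\bw', \tilde{\bx}\right> < 0$ as needed; the $y = +1$ case is handled by the weak inequality $\left<\bw', \tilde{\bx}\right> \geq 1/2 - w_0 \geq 0$ since $\sgn(0) = +1$ already matches the label. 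The remaining parameter balance is that $M_a$ must exceed $\alpha\teps |\tS|$ so that a single auxiliary violation already exhausts the error budget $\alpha\eps|S|$, while remaining polynomial in the input size so that $\eps = \Theta(\teps)$ and the reduction stays $\poly(\td,1/\teps)$.
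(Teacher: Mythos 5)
Your proposal reduces \emph{from} Theorem~\ref{thm:hardness-agnostic}, treating it as a black-box hypothesis, so it cannot serve as a proof \emph{of} that theorem --- the argument is circular. What you have in fact written out (correctly, and essentially identically to the paper's Appendix~\ref{app:agnostic-hard}) is the proof of Lemma~\ref{lem:no-margin-gap-hardness}: hardness of the margin-$0.5$ problem \emph{conditional on} Theorem~\ref{thm:hardness-agnostic}. Up to cosmetics your construction matches the paper's: you put the bias coordinate at index $0$ instead of $\td+1$, and you use $M_a = \lceil\alpha\teps|\tS|\rceil + 1$ auxiliary copies where the paper uses $\lceil\alpha|\tS|+1\rceil$; both choices work since all that is needed is $M_a > \alpha\teps|\tS|$. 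The completeness and soundness computations, including the careful handling of the $\sgn(0)=+1$ convention and the use of non-negativity of $\tbw$ to make the auxiliary sample exactly margin-tight, are the same as the paper's.

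But Theorem~\ref{thm:hardness-agnostic} itself is a cited external result from~\cite{AroraBSS97}; the paper states it with attribution and provides no proof. A genuine proof would have to reduce from an NP-hard problem to no-margin agnostic proper learning of halfspaces over $\bB_\infty^{\td}$ directly, constructing the multiset $\tS$ and threshold $\teps$ from scratch, establishing the multiplicative $\alpha$-gap between completeness and soundness, and proving the extra promise (which your reduction crucially relies on) that some near-optimal halfspace in the completeness case has all non-negative coordinates. None of that machinery appears in your proposal; you have simply assumed it.
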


\begin{proof}[Proof of Lemma~\ref{lem:no-margin-gap-hardness}]
Given a multiset $\tS \subseteq \bB^{\td}_{\infty} \times \{\pm 1\}$ from Theorem~\ref{thm:hardness-agnostic}. Let $m = |\tS|$. We create a new multiset of samples $S \subseteq \bB^{d}_{\infty} \times \{\pm 1\}$ as follows:
\begin{itemize}
\item Let $d = \td + 1$.
\item For every $(\bx, y) \in \tS$, add\footnote{Note that we use $\bx \circ y$ to denote the vector resulting from concatenating $\bx$ and $y$.} $(\bx \circ y, y)$ to $S$.
\item Add $\lceil \alpha m + 1\rceil$ copies of $((1, \dots, 1, 0), +1)$ to $S$.
\end{itemize}
Finally, let $\eps = \frac{\teps \cdot m}{m + \lceil \alpha m + 1\rceil}$. It is obvious that the reduction runs in polynomial time. We will now argue its completeness and soundness.

\paragraph{Completeness.} Suppose that there is $\tbw \in \bB^{\td}_1$ whose coordinates are non-negative such that $\err_0^{\tS}(\tbw) \leq \teps$. Consider $\bw = (0.5\tbw / \|\tbw\|_1) \circ 0.5$. Since each coordinate of $\tbw$ is non-negative, the new halfspace $\bw$ correctly classifies the last sample with margin 0.5. Furthermore, it is also simple to verify that $(\bx, y) \in \tS$ is correctly classified by $\tbw$ (with margin 0) iff $(\bx \circ y, y)$ is correctly classified by $\bw$ with margin $0.5$. As a result, we have $\err^S_{0.5}
(\bw) = \frac{m}{m + \lceil \alpha m + 1\rceil} \cdot \err_0^{\tS}(\tbw) \leq \eps$, as desired.

\paragraph{Soundness.} Suppose that $\opt_0^{\tS} > \alpha \cdot \teps$. Consider any $\bw \in \bB^{\td}_1$. Let us consider two cases, based on the value of $w_{d + 1}$.
\begin{itemize}
\item $w_{d + 1} > 1/2$. In this case, $\left<\bw, (1, \dots, 1, 0) \right> < 0.5$. In other words, $\bw$ does \emph{not} correctly classify the last sample with margin 0.5. As a result, we immediately have $\err_{0.5}^{S}(\bw) \geq \frac{\lceil \alpha m + 1 \rceil}{m + \lceil \alpha m + 1 \rceil} > \alpha \cdot \epsilon$ as desired.
\item $w_{d + 1} \leq 1/2$. In this case, notice that $\sgn(\left<\bw, \bx \circ y\right> - 0.5y) = y$ implies that $\sgn(\left<(w_1, \dots, w_d), \bx\right>) = y$. Thus, we have $\err_{0.5}^S(\bw) \geq \frac{m}{m + \lceil \alpha m + 1\rceil} \cdot \err_0^{\tS}((w_1, \dots, w_d)) \geq \frac{m}{m + \lceil \alpha m + 1\rceil} \cdot (\alpha\cdot\teps) = \alpha \cdot \eps$.
\end{itemize}
Hence, in both cases, we have $\opt_{0.5}^S > \alpha \cdot \eps$, which concludes our proof.
\end{proof}

\end{document}